\newcommand{\be}{\bm{e}}
\newcommand{\R}{\mathbb{R}}
\newcommand{\N}{\mathbb{N}}
\newcommand{\cO}{\mathcal{O}}
\newcommand{\BL}{\bm{L}}
\newcommand{\BS}{\bm{S}}
\newcommand{\BD}{\bm{D}}
\newcommand{\BC}{\bm{C}}
\newcommand{\BR}{\bm{R}}
\newcommand{\BW}{\bm{W}}
\newcommand{\BU}{\bm{U}}
\newcommand{\BV}{\bm{V}}
\newcommand{\BE}{\bm{E}}
\newcommand{\BX}{\bm{X}}
\newcommand{\BSigma}{\bm{\Sigma}}
\newcommand{\supp}{{\rm supp\,}}
\newcommand{\rank}{\mathrm{rank}}
\DeclareMathOperator*{\argmin}{\mathrm{argmin}}
\newcommand{\eps}{\varepsilon}
\newcommand{\RV}[1]{\textcolor{black}{#1}}
\title{Robust CUR Decomposition: Theory and Imaging Applications\thanks{\funding{K.~Hamm was sponsored in part by the Army Research Office under grant number W911NF-20-1-0076. The views and conclusions contained in this document are those of the authors and should not be interpreted as representing the official policies, either expressed or implied, of the Army Research Office or the U.S. Government. The U.S. Government is authorized to reproduce and distribute reprints for Government purposes notwithstanding any copyright notation herein. L.~Huang and D.~Needell were supported by NSF BIGDATA DMS \#1740325 and NSF DMS \#2011140.}}} 
\author{HanQin Cai\thanks{Department of Mathematics, 
University of California, Los Angeles, Los Angeles, CA (\email{hqcai@math.ucla.edu}).}  \and Keaton Hamm\thanks{Department of Mathematics, University of Texas at Arlington, Arlington, TX
({\color{red}\email{keaton.hamm@uta.edu}}).}  \and Longxiu Huang\thanks{Department of Mathematics, 
University of California, Los Angeles, Los Angeles, CA (\email{huangl3@math.ucla.edu}).} \and
Deanna Needell\thanks{Department of Mathematics, 
University of California, Los Angeles, Los Angeles, CA (\email{deanna@math.ucla.edu}).}
}
\begin{document}

\maketitle

\begin{abstract}
This paper considers the use of Robust PCA in a CUR decomposition framework and applications thereof. Our main algorithms produce a robust version of column-row factorizations of matrices $\BD=\BL+\BS$ where $\BL$ is low-rank and $\BS$ contains sparse outliers. These methods yield interpretable factorizations at low computational cost, and provide new CUR decompositions that are robust to sparse outliers, in contrast to previous methods. We consider two key imaging applications of Robust PCA: video foreground-background separation and face modeling. This paper examines the qualitative behavior of our Robust CUR decompositions on the benchmark videos and face datasets, and 
find that our method works as well as standard Robust PCA while being significantly faster. Additionally, we consider hybrid randomized and deterministic sampling methods which produce a compact CUR decomposition of a given matrix, and apply this to video sequences to produce canonical frames thereof.
\end{abstract}
\begin{keywords}
CUR Decomposition, Robust PCA, Robust CUR, Low-Rank Matrix Approximation, Interpolative Decompositions,  Robust Algorithms
\end{keywords}
\begin{AMS}
15A23, 65F30, 68P20, 68W20, 68W25, 68Q25 
\end{AMS}

\section{Introduction}
The scale of data being collected is increasingly massive, which requires better and more computationally efficient tools to handle associated data matrices.  One method of handling large data matrices is to extract a meaningful column submatrix, and to either represent the full data matrix via a factorization with respect to the columns or to  carry out any desired analysis on the column submatrix, and then reconstruct the full data matrix from the columns.  This technique is motivated by some foundational results that tell one when and how well column submatrices can represent a data matrix.  In particular, for a low-rank matrix $\BL\in \mathbb{R}^{m\times n}$ with $\rank(\BL)=r$, one can write $\BL=\BC\BX$ where $\BC$ is a (much smaller) column submatrix of $\BL$ as long as $\rank(\BC)=\rank(\BL)$.  Hence, one obtains a representation of $\BL$ in terms of a few representative data vectors. 

In general, such factorizations are called \textit{interpolative decompositions} \cite{VoroninMartinsson}, while for particular choices of $\BX$ including $\BX=\BC^\dagger\BL$ or $\BX=\BU^\dagger \BR$ where $\BR$ is a row submatrix of $\BL$, and $\BU$ is the overlap of the column and row indices of $\BC$ and $\BR$, they are termed \textit{CUR decompositions} \cite{BoutsidisOptimalCUR,DKMIII,DMM08,HammHuang,DMPNAS,SorensenDEIMCUR}. By some, the latter examples are called \textit{cross-approximations} \cite{oseledets2010tt} or \textit{(pseudo-)skeleton approximations} \cite{DemanetWu,Goreinov2,Goreinov,Goreinov3}. The problem of finding the best set of $k$ columns with which to approximate $\BL$ is the Column Subset Selection Problem \cite{BoutsidisNearOptimalCSSP}.  

Since their advent, column factorizations such as these have been shown to have great flexibility and approximation power as a general low-rank approximation tool. Specifically, they provide advantages over traditional factorization methods such as: 1) column factorizations can yield compact representations of large data matrices, 2) for incoherent matrices, the complexity of generating the uniform sampling pattern is $\cO(1)$\footnote{Here and throughout, we take $\cO(1)$ to mean asymptotically with respect to $m$ and $n$.} while yielding a factorization which well-approximates the SVD (which requires $\cO(mnr)$ time for a rank $r$ matrix), and 3) the representation of a matrix in a basis consisting of actual data vectors provides a more interpretable representation of the data than abstract factorizations such as QR or SVD.

In many image and video processing applications, the observed data can be interpreted as:
\begin{enumerate}
    \item Low rank information of interest with some sparse outliers. For example, in the task of face modeling \cite{wright2008robust}, the face data of the same person in different photos form a low rank matrix that we want to recover while the facial occlusions (accessories, 
    shadows and facial expressions) are considered sparse outliers.
    \item A combination of two pieces of information of interest that have low rank and sparse properties, respectively. For example, in the task of video background subtraction \cite{jang2016primary}, the static background has the low rank property and the foreground (i.e., the moving objects) is sparse.   
\end{enumerate}
Robust PCA (RPCA) is an apt tool for solving the aforementioned problems and various other tasks in this arena \cite{bouwmans2018applications,candes2011robust,xu2012robust}.  

In the Robust PCA framework, we assume we observe $\BD=\BL+\BS \in\R^{m\times n}$ where $\BL$ is a low-rank matrix and $\BS$ is a sparse matrix, and the goal is to recover $\BL$.  For large-scale data problems, $\BD$ may not be able to be stored in memory.  The purpose of this paper is to elucidate how one may use column factorizations to assist in the Robust PCA task of finding the low-rank piece. We elaborate on our main contributions in Section~\ref{SEC:Contributions} below, but first we present the main ideas utilized throughout the paper.

\subsection{Main Ideas}

There are several points to consider when trying to marry the techniques of CUR decompositions and robust low-rank matrix recovery from sparse outliers.  The main goal of this paper is to do so in such a way as to guarantee speed, interpretability, and robustness. The first two goals are commonly achieved by CUR decompositions (as opposed to SVD based methods) , while the third is achieved by Robust PCA but not typically by CUR.  We elaborate more on how the method proposed here achieves these aims in comparison with existing techniques in the following subsection.

Taking these factors into consideration, we propose the following broadly applicable method: subsample columns and rows of $\BD$, apply an existing Robust PCA algorithm to these matrices, and use the denoised column and row matrices to form a CUR decomposition of the underlying low-rank part, $\BL$.  This procedure has several advantages: 1) it is faster than standard Robust PCA as long as few columns and rows may be selected, which is guaranteed by some theoretical results on submatrices of incoherent matrices, 2) it can work on matrices that do not fit in memory because it first subsamples the data matrix, 3) it returns a column-row factorization that is known to be more interpretable than the SVD as it corresponds to representing data via other actual data points, 4) there are good theoretical guarantees for its performance, and 5) it is a flexible framework that allows the user to specify the Robust PCA algorithm utilized.  The last point allows one to determine the tradeoff between robustness and computational cost existing in the various Robust PCA methods in use.

\subsection{Main Contributions}\label{SEC:Contributions}

We consider two variations of algorithms for producing Robust CUR decompositions.  In particular, given $\BD=\BL+\BS$, our first algorithm (Algorithm~\ref{ALG:Uniform}) uniformly randomly selects a column submatrix $\BC=\BD(:,J)$ and row submatrix $\BR=\BD(I,:)$ of $\BD$, then performs Robust PCA on both of these matrices to produce $\widehat\BC$ and $\widehat\BR$ which approximate $\BL(:,J)$ and $\BL(I,:)$, respectively.  Then we obtain an approximate CUR decomposition of the underlying low-rank matrix $\BL$; namely, $\BL\approx \widehat\BC(\widehat{\BC}(I,:))^\dagger\widehat\BR \approx \BL(:,J)(\BL(I,J))^\dagger\BL(I,:).$  We call such decompositions Robust CUR (RCUR) decompositions.

The theoretical contributions of this procedure are as follows:
\begin{enumerate}
    \item We prove bounds for all of the relevant properties of Robust PCA algorithms (incoherence, sparsity, and condition number) of randomly selected column and row submatrices of $\BL$ in terms of the corresponding properties of $\BL$.
    \item Our algorithms provide a novel robust version of CUR decompositions.
    \item Our error analysis shows that Robust CUR decompositions are capable of well approximating low-rank matrices with sparse, but \RV{possibly} large magnitude noise, \RV{a task for which standard CUR decompositions are not designed.}
    \item Our Robust CUR decomposition yields relative error spectral norm guarantees, which have not been shown previously for standard CUR decompositions.
    \item The column/row subsampling procedure allows our algorithm to work on large data matrices which cannot be stored in memory, and typically has less computational cost than Robust PCA on the whole data matrix.
    \item Our algorithm is flexible, as it allows the user to specify the Robust PCA algorithm used, and thus to specify the tradeoff between robustness and computational cost.
\end{enumerate}

For independent interest, we also consider how properties such as incoherence pass to submatrices under the greedy column selection procedure of \cite{AB2013}.  Under greedy column selection, one may have adversarial deterministic sparse noise which prevents the column submatrix from being sparse. 

Our second main algorithm is a hybrid random $+$ deterministic method for choosing exactly $\rank(\BL)$ columns and rows of $\BD$ to produce the most compact CUR decomposition of $\BL$. We show that with high probability, this algorithm produces a good CUR approximation of $\BL$ in the spectral norm.

\subsection{Prior Art}

Both CUR decompositions and Robust PCA algorithms have been well developed in the literature. However, our work is one of the first to combine them.  Randomized sampling algorithms for CUR decompositions and the column subset selection problem have been studied in \cite{DemanetWu,DKMIII,DMM08,HH_PCD2019,DMPNAS,WZ_2013,tropp2009column}, for example, while deterministic sampling methods have been explored in \cite{AB2013,AltschulerGreedyCSSP,LiDeterministicCSSP}, with hybrid methods explored in \cite{BoutsidisNearOptimalCSSP,BoutsidisOptimalCUR}. For a general overview, of CUR decompositions and approximations, see \cite{HammHuang}.  In contrast to the current paper, CUR decompositions typically exhibit error guarantees in the Frobenius norm, while ours hold for the spectral norm. 
\RV{Additionally, typical upper bounds for CUR decompositions of matrices hold in the worst case when the matrix is full rank, and the CUR decomposition approximates the truncated SVD. Thus CUR decompositions are primarily a low-rank approximation tool. This is in contrast to Robust PCA algorithms, which are primarily concerned with estimating a low rank matrix which is corrupted by sparse outliers, i.e., RPCA is a low-rank recovery tool.}

The algorithms developed herein apply Robust PCA in a CUR decomposition framework \cite{cai2019accelerated,candes2011robust,netrapalli2014non}.  Our Algorithm~\ref{ALG:Uniform} is faster than merely applying a Robust PCA algorithm directly to $\BD$, and is usable for matrices which are too large to store in memory; however, this comes at a small cost in robustness to extreme outliers.  We note that the authors in \cite{cai2020rapid} used CUR decompositions as an approximation to the SVD within an iterative RPCA framework; however, while that algorithm achieved empirical success, there is currently no theoretical result for the algorithm convergence or outlier toleration.

\section{Preliminaries and Layout}

We will always assume that $\BL=\bm {\BW}_{\BL}\bm{\BSigma}_{\BL} {\BV}_{\BL}^T$ is the compact SVD of $\BL\in\mathbb{R}^{m\times n}$. The symbol $[n]$ \RV{denotes the set of integers} $\{1,\dots,n\}$ for any $n\in\N$.  We use $\kappa(\BL):=\|\BL\|_2\|\BL^\dagger\|_2 = \frac{\sigma_{\max}(\BL)}{\sigma_{\min}(\BL)}$ to denote the spectral condition number of $\BL$, where $\sigma_{\max}(\BL)$ and $\sigma_{\min}(\BL)$ are the maximum and minimum non-zero singular values of $\BL$ respectively.

Below are two fundamental assumptions which are typically made in the Robust PCA framework, which we also utilize.

\begin{definition}
Let $\BL\in\R^{m\times n}$ with $\rank(\BL)=r$, and let $\BL={\BW}_{\BL}\bm{\BSigma}_{\BL}{\BV}_{\BL}^T$ be its compact SVD.  Then $\BL$ has $\left\{\mu_1(\BL),\mu_2(\BL)\right\}$--incoherence (i.e., $\mu_1(\BL)$--column incoherence and $\mu_2(\BL)$--row incoherence) for some constants $1\leq \mu_1(\BL)\leq \frac{m}{r}$, $1\leq\mu_2(\BL)\leq\frac{n}{r}$ provided
\begin{equation}\label{EQN:A1}
    \max_{i}\left\| {\BW}_{\BL}^T\bm{e}_i\right\|_2\leq\sqrt{\frac{\mu_1(\BL) r}{m}} \quad \textnormal{and} \quad \max_{i}\left\|\bm{{\BV}}_{\BL}^T\bm{e}_i\right\|_2\leq \sqrt{\frac{\mu_2(\BL) r}{n}}. \tag{A1}
\end{equation}
\end{definition}

\begin{definition}
Let $\BS\in \mathbb{R}^{m\times n}$ be an $\alpha$--sparse matrix and let $\bm{e}_i$ be the $i$--th canonical unit basis vector of the appropriate dimension. Then $\BS$ has at most $\alpha n$  non-zero entries in each row, and at most $\alpha m$ non-zero entries in each column.  In the other words,
\begin{equation}\label{EQN:A2}
\max_{i}\left\|\BS^T\bm{e}_i\right\|_0 \leq \alpha n\quad\textnormal{and} \quad \max_{j}\left\|\BS\bm{e}_j\right\|_0 \leq \alpha m \tag{A2}
\end{equation}
where $\left\|\,\cdot\,\right\|_0$ denotes the $\ell_0$-norm.
\end{definition}

\begin{remark}
    For a successful reconstruction, we expect $\mu_{i}(\BL)=\cO(1)$, $i=1,2$, which is generally true in real-world RPCA applications \cite{bouwmans2018applications,cai2021accelerated,cai2019accelerated}. For the purpose of theoretical analysis, if $\mu_{i}(\BL)$ is too large, then the RPCA problem may become too difficult to solve since the toleration of $\alpha$ usually depends on $\mathrm{poly}(\frac{1}{\mu_{i}(\BL)})$ \cite{cai2019accelerated,candes2011robust,netrapalli2014non,yi2016fast}. Additionally, the rank-sparsity uncertainty principle \cite{chandrasekaran2011rank} states that a matrix cannot be simultaneously incoherent and sparse; that is, small $\mu_i(\BL)$ ensures $\BL$ is not sparse.
\end{remark}

\subsection{Organization} The rest of the paper is laid out as follows: Section~\ref{SEC:Main} contains statements of our main results as well as our main algorithm; Section~\ref{SEC:Submatrices} contains the proof of our results concerning how incoherence, sparsity, and condition number change via column selection (Theorem~\ref{THM:BetaBound}). Section~\ref{SEC:Simulations} illustrates our methods via experiments on video background-foreground separation tasks.  Finally, the technical details of proofs of the main results are contained in the Appendices; specifically, the proof of Theorem~\ref{COR:BetaDeltaBound} is contained in Appendix~\ref{APP:ProofBeta}, the combined error analysis of CUR decompositions and Robust PCA leading to the proof of Theorem~\ref{THM:MainErrorEstimate} is in Appendix~\ref{APP:ProofMain}, the proofs for the combined random and deterministic sampling result of Theorem~\ref{THM:Hybrid} is in Appendix~\ref{APP:Hybrid}, while the purely deterministic sampling results are contained in Appendices~\ref{APP:Det1} and \ref{APP:Det2}.

\section{Main Results}\label{SEC:Main}

To develop a framework of combining Robust PCA and CUR decompositions, one must first understand how properties of incoherence and sparsity are inherited by submatrices.  This, in turn, requires estimating pseudoinverses of submatrices of singular vectors of $\BL$, which is generally a difficult task. 

\RV{Recall the first step in the Robust CUR decomposition described above: we select column and row submatrices of the observed data $\BD=\BL+\BS$, $\widetilde{\BC}=\BD(:,J)$ and $\widetilde{\BR}=\BD(I,:)$.  We then apply Robust PCA algorithms to $\widetilde{\BC}$ and $\widetilde{\BR}$ to find ``good'' approximations for the submatrices $\BC=\BL(:,J)$ and $\BR=\BL(I,:)$ of the underlying low rank matrix $\BL$. To apply the Robust PCA algorithms on $\widetilde{\BC}$ and $\widetilde{\BR}$, we must  ensure that $\BC$ and $\BR$ are incoherent, and that $\BS(I,:)$, $\BS(:.J)$ are sparse. In the next subsection, we provide quite general bounds for the incoherence of submatrices for various column selection methods.}

\RV{Second, to carry out an error analysis for Robust CUR decompositions, we must consider a concrete sampling method for which the submatrices $\BC$ and $\BR$ of $\BL$ have low incoherence, the column and row submatrices of $\BS$ are sparse, and consider the error of the CUR decomposition $\BL- \widehat{\BC}(\widehat{\BC}(I,:))^\dagger\widehat{\BR}$ where $\widehat{\BC}$ is the output of a given RPCA algorithm whose input is $\widetilde{\BC}$.  The main results pertaining to this step are in Section \ref{SEC:Main-RPCA}.
} 

\subsection{How Incoherence Passes to Submatrices}

We begin with our first main result that gives generic bounds on incoherence and condition numbers of column submatrices of low-rank matrices. \RV{Since this matter is of general interest, in this subsection, we will suppose that we have access to a low-rank matrix $\BL$ itself, which is not true in the RPCA problem. We stress that this assumption is not at all necessary, but we will show that one can obtain concrete bounds for incoherence of submatrices compared to incoherence of the full matrix for a variety of column sampling schemes.}

\begin{theorem}\label{THM:BetaBound}
Suppose $\BL\in\R^{m\times n}$ has rank $r$ and satisfies \eqref{EQN:A1}.  Let $J\subseteq[n]$ such that $\BC=\BL(:,J)$ has rank $r$, and denote $\beta:=\sqrt{\frac{|J|}{n}}\left\|{\BV}_{\BL}(J,:)^\dagger\right\|_2$. Then
\begin{enumerate}[leftmargin=1cm]
    \item $\max_i\left\| \bm {\BW}_{\BC}^T\bm{e}_i\right\|_2\leq \sqrt{\frac{\mu_1(\BL) r}{m}}$,\\
    \item $\max_i\left\| {\BV}_{\BC}^T\bm{e}_i\right\|_2\leq \beta\kappa(\BL)\sqrt{\frac{\mu_2(\BL) r}{| J|}}$,\\  
    \item $\kappa(\BC)\leq\beta\sqrt{\mu_2(\BL) r}\kappa(\BL)$.
\end{enumerate}
In particular, \[\mu_1(\BC)\leq\mu_1(\BL),\quad \textnormal{and}\quad \mu_2(\BC)\leq\beta^2\kappa(\BL)^2\mu_2(\BL).\]
\end{theorem}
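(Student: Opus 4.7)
The plan is to exploit the factorization $\BC = \BW_{\BL}\BSigma_{\BL}\BV_{\BL}(J,:)^T$ coming from the compact SVD of $\BL$, and reduce every claimed inequality to an estimate involving $\BV_{\BL}(J,:)^\dagger$ (which is exactly what $\beta$ measures).

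First I would handle item (1) by the short observation that, since $\rank(\BC)=\rank(\BL)=r$, the column space of $\BC$ equals the column space of $\BL$. Hence $\BW_{\BC}$ and $\BW_{\BL}$ have the same column span, so $\BW_{\BC} = \BW_{\BL}Q$ for some $r\times r$ orthogonal $Q$, and therefore $\|\BW_{\BC}^T\bm{e}_i\|_2 = \|Q^T\BW_{\BL}^T\bm{e}_i\|_2 = \|\BW_{\BL}^T\bm{e}_i\|_2$ for every $i$. The bound on $\max_i \|\BW_{\BC}^T\bm{e}_i\|_2$ then follows from \eqref{EQN:A1}, and $\mu_1(\BC)\leq \mu_1(\BL)$ is a direct consequence.

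For item (2), the key identity is $\BV_{\BC}^T = \BSigma_{\BC}^{-1}\BW_{\BC}^T\BC$, from which
\[
\|\BV_{\BC}^T\bm{e}_i\|_2 \leq \|\BSigma_{\BC}^{-1}\|_2\,\|\BC\bm{e}_i\|_2 = \frac{1}{\sigma_{\min}(\BC)}\|\BL\bm{e}_{J(i)}\|_2.
\]
I would bound $\|\BL\bm{e}_{J(i)}\|_2 \leq \sigma_{\max}(\BL)\|\BV_{\BL}^T\bm{e}_{J(i)}\|_2\leq \sigma_{\max}(\BL)\sqrt{\mu_2(\BL)r/n}$ using \eqref{EQN:A1}. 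The main (and really only) obstacle is to control $\sigma_{\min}(\BC)$ from below. From the factorization $\BC = \BW_{\BL}\BSigma_{\BL}\BV_{\BL}(J,:)^T$, one has $\sigma_{\min}(\BC) \geq \sigma_{\min}(\BL)\,\sigma_{\min}(\BV_{\BL}(J,:)^T) = \sigma_{\min}(\BL)/\|\BV_{\BL}(J,:)^\dagger\|_2$, so that $1/\sigma_{\min}(\BC) \leq \beta\sqrt{n/|J|}/\sigma_{\min}(\BL)$ by definition of $\beta$. Substituting these two estimates and combining $\sigma_{\max}(\BL)/\sigma_{\min}(\BL)=\kappa(\BL)$ yields the stated bound $\beta\kappa(\BL)\sqrt{\mu_2(\BL)r/|J|}$.

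Item (3) will use the same factorization to bound $\sigma_{\max}(\BC) \leq \sigma_{\max}(\BL)\|\BV_{\BL}(J,:)\|_2$, then estimate $\|\BV_{\BL}(J,:)\|_2\leq \|\BV_{\BL}(J,:)\|_F \leq \sqrt{|J|\,\mu_2(\BL)r/n}$ by summing \eqref{EQN:A1} over rows in $J$. Combining with the lower bound on $\sigma_{\min}(\BC)$ from the previous paragraph gives $\kappa(\BC)\leq \kappa(\BL)\beta\sqrt{\mu_2(\BL)r}$ after the factors of $\sqrt{n/|J|}$ cancel. Finally, the two incoherence conclusions follow by reading off the best constants $\mu_1(\BC)$ and $\mu_2(\BC)$ from items (1) and (2), using that $\BW_{\BC}\in\R^{m\times r}$ and $\BV_{\BC}\in\R^{|J|\times r}$.
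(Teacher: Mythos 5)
Your proposal is correct and follows essentially the same route as the paper: item (1) via the shared column space $\BW_{\BC}=\BW_{\BL}Q$, item (2) via $\BV_{\BC}^T=\BSigma_{\BC}^{-1}\BW_{\BC}^T\BC$ together with a lower bound on $\sigma_{\min}(\BC)$ through $\|\BV_{\BL}(J,:)^\dagger\|_2$ (the paper states this same inequality as $\|\BC^\dagger\|_2/\|\BL^\dagger\|_2\leq\|\BV_{\BL}(J,:)^\dagger\|_2$), and item (3) by the analogous upper bound on $\sigma_{\max}(\BC)$. The only cosmetic difference is that you bound $\|\BV_{\BL}(J,:)\|_2$ by its Frobenius norm where the paper does an explicit Cauchy--Schwarz computation; the resulting constants are identical.
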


\begin{remark}
We assume the condition number $\kappa(\BL)=\cO(1)$, which is commonly assumed in many RPCA papers \cite{cai2021accelerated,cai2019accelerated,yi2016fast}. According to Theorem~\ref{THM:BetaBound}, a well-conditioned $\BL$ with good incoherence ensures the corresponding properties of its submatrices, and this is a key to the success of RCUR. 
\end{remark}

Naturally, the value of $\beta$ will depend heavily on the sampling method utilized to select the column indices $J$.  Thus, we desire algorithms for selecting column submatrices of $\BL$ for which $\beta\ll \cO(n)$.  However, it is typically the case that $|J|$ and $\beta$ are inversely proportional, and also that judicious sampling of smaller column indices can be significantly more costly than, say, uniform random sampling.  Consequently, we will seek a trade-off for which both $|J|$ and $\beta$ are small compared to $n$, while simultaneously maintaining low complexity in the sampling scheme that produces the column indices $J$.

Let us now better illustrate the character of the bounds given in Theorem~\ref{THM:BetaBound} by some specific examples. \RV{We first consider deterministic sampling methods, and begin with an existential bound based on maximum volume sampling.}  The \textit{volume} of a matrix $M\in\R^{s\times t}$ is $\prod_{i=1}^{\min\{s,t\}}\sigma_i(M)$. 

\begin{proposition}{\cite[Lemma 1]{Osinsky2018}}
\label{COR:MaxVolBetaBound} Invoke the notations and assumptions of Theorem~\ref{THM:BetaBound}, and assume that $|J|=\ell$. If $J$ is chosen so that $\BV_{\BL}(J,:)$ is the maximal volume submatrix of $\BV_{\BL}$ of size $\ell\times r$, then
\[\beta \leq \sqrt{\frac{\ell}{n} + \frac{r\ell(n-\ell)}{n(\ell-r+1)} }.\]
\end{proposition}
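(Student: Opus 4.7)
The plan is to reduce the bound on $\beta$ to two ingredients: a spectral identity linking $\|A^\dagger\|_2$ to $\|\BV_{\BL} A^\dagger\|_2$ where $A := \BV_{\BL}(J,:)$, and a Frobenius bound on $BA^\dagger$ (where $B := \BV_{\BL}(J^c,:)$) that drops out of the maximum-volume hypothesis. By definition $\beta^2 = (\ell/n)\|A^\dagger\|_2^2$, so the target becomes $\|A^\dagger\|_2^2 \leq 1 + r(n-\ell)/(\ell-r+1)$.

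First, I would exploit orthonormality of the columns of $\BV_{\BL}$: setting $M := \BV_{\BL} A^\dagger$, one computes $M^T M = (A^\dagger)^T \BV_{\BL}^T \BV_{\BL} A^\dagger = (A^\dagger)^T A^\dagger$, so $\|M\|_2 = \|A^\dagger\|_2$. After reindexing so that $J$ appears first, $M$ has the block form $\bigl[AA^\dagger;\, BA^\dagger\bigr]$; since $AA^\dagger$ is an orthogonal projector (hence has spectral norm one) and $\|[X;Y]\|_2^2 \leq \|X\|_2^2 + \|Y\|_2^2$ by subadditivity of $\lambda_{\max}$ on PSD matrices, this yields $\|A^\dagger\|_2^2 \leq 1 + \|BA^\dagger\|_F^2$.

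Second, I would extract the Frobenius bound from maximality of $\det(A^T A)$. For $i \notin J$ and $k \in [\ell]$, form $A'$ by swapping row $J_k$ of $A$ with $\bm{e}_i^T \BV_{\BL}$. Writing $v_j := \BV_{\BL}^T \bm{e}_j$, we have $A'^T A' = A^T A + v_i v_i^T - v_{J_k} v_{J_k}^T$, and the rank-two Sylvester determinant identity gives
\[\frac{\det(A'^T A')}{\det(A^T A)} = (1+\tau_i)(1-\tau_{J_k}) + \mu_{i,k}^2,\]
where $\tau_j := v_j^T (A^T A)^{-1} v_j$ and $\mu_{i,k} := v_i^T (A^T A)^{-1} v_{J_k}$. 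Maximality forces this ratio to be at most $1$. Summing over $k \in [\ell]$ and invoking the two identities
\[\sum_{k=1}^\ell \tau_{J_k} = \mathrm{tr}\bigl(A(A^T A)^{-1}A^T\bigr) = r, \qquad \sum_{k=1}^\ell \mu_{i,k}^2 = v_i^T (A^T A)^{-1} v_i = \tau_i,\]
(both direct consequences of $\sum_k v_{J_k} v_{J_k}^T = A^T A$), the cross-terms cancel and one is left with the clean linear bound $\tau_i (\ell-r+1) \leq r$. Summing over the $n-\ell$ indices outside $J$ yields $\|BA^\dagger\|_F^2 = \sum_{i \notin J} \tau_i \leq r(n-\ell)/(\ell-r+1)$, and combining with the first step gives $\|A^\dagger\|_2^2 \leq 1 + r(n-\ell)/(\ell-r+1)$, which is the proposition after multiplication by $\ell/n$.

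The principal obstacle is the middle step: one must set up the swap-optimality inequality correctly and, crucially, recognize the telescoping identity $\sum_k \mu_{i,k}^2 = \tau_i$ that causes the $\ell$-fold sum of quadratic constraints to collapse into a single linear inequality for $\tau_i$. Without this observation, one is left with residual cross-terms involving $\sum_k \tau_{J_k}^2$ and off-diagonal entries of $(A^T A)^{-1}$ that are much harder to control, and the clean $1/(\ell-r+1)$ denominator does not appear.
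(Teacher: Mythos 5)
Your proof is correct, and all the key steps check out: the block identity $\|A^\dagger\|_2^2=\lambda_{\max}\bigl((AA^\dagger)^TAA^\dagger+(BA^\dagger)^TBA^\dagger\bigr)\leq 1+\|BA^\dagger\|_F^2$, the rank-two determinant ratio $(1+\tau_i)(1-\tau_{J_k})+\mu_{i,k}^2$, and the collapsing identities $\sum_k\tau_{J_k}=r$ and $\sum_k\mu_{i,k}^2=\tau_i$ that yield $\tau_i\leq r/(\ell-r+1)$. Note that the paper does not prove this proposition at all; it is imported verbatim as a citation to the reference given in the statement, and your argument is essentially a faithful reconstruction of the maximal-volume exchange proof found there, so there is nothing in the paper to diverge from.
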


Note that if $\ell=r$, then $\beta = \cO(r)$, whereas if $\ell=n$, then $\beta = 1$. Moreover, the function on the right-hand side of the estimate in Proposition \ref{COR:MaxVolBetaBound} is decreasing as $\ell$ increases for fixed $r$ and $n$.  Thus maximal volume sampling achieves a good bound for $\beta$.  However, these bounds come at substantial cost: finding maximal volume submatrices is NP--hard \cite{bartholdi1982good}.  There are approximation algorithms available \cite{goreinov2010find,mikhalev2018rectangular,osinsky2018rectangular}, but analysis of how close their output is to the optimal solution is unknown. 

Regarding deterministic column sampling methods, Avron and Boutsidis prove upper bounds on $\beta$ for a greedy column selection algorithm \cite{AB2013}.  We will discuss this algorithm in more detail later, but for now, we mention the following.
\begin{theorem}[{\cite[Theorem 3.1]{AB2013}}]
Let $\BL\in\mathbb{R}^{m\times n}$ with $\rank(\BL)=r$. Suppose $\BL$ satisfies \eqref{EQN:A1}.  Then Algorithm~\ref{ALG:Greedy} (\cite[Algorithm 1]{AB2013}) applied to $\BV_{\BL}$ yields column indices $J\subseteq[n]$ with $|J|=r$ for which ${\BC}=\BL(:,J)$ satisfies $\rank({\BC})=\rank({\BL})$, and $\beta \leq r.$
\end{theorem}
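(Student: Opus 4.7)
The plan is to reduce the statement to the known quantitative guarantee of Avron--Boutsidis's greedy algorithm (their Theorem~3.1 applied to the matrix $\BV_\BL$), and then convert that guarantee into the claimed bound on $\beta$ via a one-line calculation. Since $\BL$ satisfies \eqref{EQN:A1}, we have direct access to the orthonormal $n\times r$ factor $\BV_\BL$, which is exactly the object on which Algorithm~\ref{ALG:Greedy} is run.

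First, I would recall the structure of Algorithm~\ref{ALG:Greedy}: starting from $J_0=\emptyset$, it greedily appends row indices to $J_{k-1}$ one at a time by selecting the row of $\BV_\BL$ which maximizes an appropriate volume/leverage-type potential, terminating after exactly $r$ steps and producing $J=J_r$ with $|J|=r$. This immediately gives the cardinality claim.

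Second, the technical heart of the argument is the quantitative lower bound on $\sigma_{\min}(\BV_\BL(J,:))$ established in \cite[Theorem 3.1]{AB2013}. Their proof proceeds by tracking the volume $\det(\BV_\BL(J_k,:)\BV_\BL(J_k,:)^T)$ across iterations and showing that the greedy update guarantees a multiplicative decrease no worse than a controlled factor, which, combined with the orthonormality of the columns of $\BV_\BL$ (so that $\|\BV_\BL\|_F^2=r$ and $\|\BV_\BL\|_2=1$), yields
\[
\sigma_{\min}\!\bigl(\BV_\BL(J,:)\bigr) \;\geq\; \tfrac{1}{\sqrt{rn}}, \qquad\text{equivalently}\qquad \bigl\|\BV_\BL(J,:)^\dagger\bigr\|_2 \;\leq\; \sqrt{rn}.
\]
I would invoke this bound as a black box; reproducing its proof is the only real obstacle, and it is already settled in \cite{AB2013}.

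Third, I would observe that a finite pseudoinverse norm forces $\BV_\BL(J,:)\in\R^{r\times r}$ to be invertible, and hence of full rank $r$. Since $\BC = \BL(:,J) = \BW_\BL\BSigma_\BL\bigl(\BV_\BL(J,:)\bigr)^T$ and the first two factors have rank $r$, this yields $\rank(\BC)=\rank(\BL)=r$, which justifies using $\beta$ as defined in Theorem~\ref{THM:BetaBound}. Finally, a direct substitution gives
\[
\beta \;=\; \sqrt{\tfrac{|J|}{n}}\,\bigl\|\BV_\BL(J,:)^\dagger\bigr\|_2 \;\leq\; \sqrt{\tfrac{r}{n}}\cdot \sqrt{rn} \;=\; r,
\]
completing the proof. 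The overall strategy is short because all of the heavy lifting (the volume-potential analysis guaranteeing the $\sqrt{rn}$ bound on the pseudoinverse) is deferred to the cited result; the contribution here is simply to recognize that Algorithm~\ref{ALG:Greedy} should be fed $\BV_\BL$ (the right singular factor appearing in both \eqref{EQN:A1} and in $\beta$) and to translate the spectral bound into the desired inequality.
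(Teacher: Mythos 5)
Your proposal is correct and matches the paper, which likewise treats this statement as a direct citation of \cite[Theorem 3.1]{AB2013} with no independent proof; the translation $\beta=\sqrt{|J|/n}\,\|\BV_{\BL}(J,:)^\dagger\|_2\leq\sqrt{r/n}\cdot\sqrt{rn}=r$ is exactly the intended reading, and your observation that invertibility of the $r\times r$ block $\BV_{\BL}(J,:)$ forces $\rank(\BC)=r$ is the right way to justify the rank claim. Two cosmetic inaccuracies worth fixing: Algorithm~\ref{ALG:Greedy} is a greedy \emph{removal} scheme (it initializes with all $n$ indices and deletes $n-r$ of them), not an incremental addition scheme starting from the empty set, and the potential tracked in the Avron--Boutsidis analysis is the Frobenius norm of the pseudoinverse (yielding $\|\BV_{\BL}(J,:)^\dagger\|_F^2\leq r(n-r+1)\leq rn$) rather than a determinant/volume --- but since you invoke their guarantee as a black box, neither point affects the validity of your argument.
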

The greedy algorithm of \cite{AB2013} applied to $\BV_{\BL}$ requires $\cO(nr^2+nr(n-r))$ operations to select exactly $r$ columns and yield the bound $\beta\leq r$.  Thus, this method is more feasible than maximal volume sampling.  \RV{Both of the above procedures deterministically select exactly $r$ columns of $\BL$ and yield a good bound for $\beta$, but at nontrivial computational cost. We now turn our gaze to random sampling methods, which we will see can yield good bounds on $\beta$ with high probability at trivial computational cost.}

By far the easiest and cheapest sampling method is to choose $J$ from $[n]$ via uniform sampling.  While uniform sampling may not be successful for arbitrary matrices, it is known to be typically successful for incoherent matrices.  The following theorem derived from estimates of pseudoinverses of orthogonal matrices due to Tropp \cite{tropp2011improved} provides a bound for $\beta$ under this paradigm. 

\begin{theorem}\label{COR:BetaDeltaBound}
Suppose that $\BL\in\R^{m\times n}$ has rank $r$ and satisfies condition \eqref{EQN:A1}, and $\BV_{\BL}\in\R^{n\times r}$ are its first $r$ right singular vectors. Suppose that $J\subseteq[n]$ is chosen by sampling uniformly without replacement \RV{to yield $\BC=\BL(:,J)$} and that $|J|\geq\gamma\mu_2(\BL)r$ for some $\gamma>0$.  Then the quantity $\beta=\sqrt{|J|/n}\left\|(\BV_{\BL}(J,:))^\dagger\right\|_2$ satisfies
\begin{enumerate}[leftmargin=1cm]
    \item $\beta\leq\frac{1}{\sqrt{1-\delta}},$
    \item $\displaystyle\max_i\left\|\BV_{\BC}^T\bm{e}_i\right\|_2\leq  \beta\kappa(\BL)\sqrt{\frac{\mu_2(\BL)r}{|J|}},$ and 
    \item $\displaystyle\kappa(\BC)\leq \beta\sqrt{\mu_2(\BL) r}\kappa(\BL)$
\end{enumerate}
with probability of at least
$1-\frac{r}{e^{\gamma(\delta+(1-\delta)\log(1-\delta))}}$. In particular, $\mu_1(\BC)\leq\mu_1(\BL)$ and $\mu_2(\BC)\leq\beta^2\kappa(\BL)^2\mu_2(\BL)$ with the given success probability.
\end{theorem}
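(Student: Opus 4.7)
The plan is to reduce everything to controlling $\sigma_{\min}(\BV_{\BL}(J,:))$ from below, since once part (1) is in hand, parts (2) and (3) are exactly the corresponding conclusions of Theorem~\ref{THM:BetaBound} specialized to this bound on $\beta$.

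To obtain part (1), I would write the Gram matrix as a sum of rank-one terms,
\[\BV_{\BL}(J,:)^T \BV_{\BL}(J,:) \;=\; \sum_{j\in J} v_j v_j^T, \qquad v_j := \BV_{\BL}^T\bm{e}_j,\]
and apply the matrix Chernoff inequality for uniform sampling without replacement (the Tropp 2011 version cited just before the statement). Under that sampling distribution, the expectation of the sum is $(|J|/n)\,I_r$, so the smallest eigenvalue of the expectation is $\mu_{\min}=|J|/n$, while the row incoherence hypothesis \eqref{EQN:A1} gives the deterministic per-summand spectral bound $\|v_j v_j^T\|_2=\|v_j\|_2^2\leq \mu_2(\BL)\,r/n=:R$. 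The ratio that drives the tail is therefore $\mu_{\min}/R = |J|/(\mu_2(\BL)r) \geq \gamma$ by hypothesis.

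Inserting these parameters into the lower-tail matrix Chernoff estimate yields
\[\Prob\bigl\{\lambda_{\min}(\BV_{\BL}(J,:)^T \BV_{\BL}(J,:))\leq (1-\delta)|J|/n\bigr\} \;\leq\; r\left[\frac{e^{-\delta}}{(1-\delta)^{1-\delta}}\right]^{\gamma}.\]
Since $\log\!\bigl[e^{-\delta}/(1-\delta)^{1-\delta}\bigr] = -\delta-(1-\delta)\log(1-\delta)$, the failure probability is at most $r/e^{\gamma(\delta+(1-\delta)\log(1-\delta))}$, which matches the claimed success probability. On the complementary event, $\sigma_{\min}(\BV_{\BL}(J,:))\geq \sqrt{(1-\delta)|J|/n}$, so
\[\beta \;=\; \sqrt{|J|/n}\;\|\BV_{\BL}(J,:)^\dagger\|_2 \;=\; \frac{\sqrt{|J|/n}}{\sigma_{\min}(\BV_{\BL}(J,:))} \;\leq\; \frac{1}{\sqrt{1-\delta}},\]
giving (1). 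Feeding this value of $\beta$ into parts (2) and (3) of Theorem~\ref{THM:BetaBound} yields (2) and (3) verbatim, and squaring gives $\mu_2(\BC)\leq \beta^2\kappa(\BL)^2\mu_2(\BL)$; the bound $\mu_1(\BC)\leq\mu_1(\BL)$ is deterministic and already contained in Theorem~\ref{THM:BetaBound}.

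The main obstacle is mostly bookkeeping: one must invoke a matrix Chernoff bound that is valid for uniform sampling \emph{without} replacement (rather than the more common i.i.d.\ with replacement version), so that the tail exponent comes out to be exactly $\gamma$ with no additional slack. Tropp's 2011 paper supplies such a bound via a coupling argument, and once it is cited the incoherence hypothesis immediately controls the per-summand operator norm, after which the rest of the argument is algebra together with the already-proven Theorem~\ref{THM:BetaBound}.
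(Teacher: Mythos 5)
Your proposal is correct and follows essentially the same route as the paper: the paper simply invokes Tropp's Lemma 3.4 as a black box to get $\beta\leq 1/\sqrt{1-\delta}$ with the stated failure probability and then combines it with Theorem~\ref{THM:BetaBound}, whereas you unpack that lemma via the matrix Chernoff lower tail for sampling without replacement, with the correct parameters $\mu_{\min}=|J|/n$ and $R=\mu_2(\BL)r/n$. The only implicit point worth noting is that on the success event $\sigma_{\min}(\BV_{\BL}(J,:))>0$, so $\rank(\BC)=r$ and Theorem~\ref{THM:BetaBound} indeed applies.
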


Let us provide a concrete choice of parameters for illustration, though we stress that the estimate in Theorem~\ref{COR:BetaDeltaBound} is quite flexible.   
\begin{corollary}\label{COR:UniformIncoherence}
With the notations and assumptions of Theorem~\ref{COR:BetaDeltaBound}, put $\delta=0.99$.  Then sampling 
$|J|\geq 1.06\mu_2(L) r\log(rn)$
columns from $[n]$ uniformly without replacement yields
\begin{enumerate}[leftmargin=1cm]
    \item $\beta\leq10,$
    \item 
    $\displaystyle\max_i\left\|V_{C}^Te_i\right\|_2\leq 10\kappa(L)\sqrt{\frac{\mu_2(L)r}{|J|}}  \leq 10\kappa(\BL)(\log(rn))^{-\frac12},$ and
    \item $\displaystyle\kappa(C)\leq 10\sqrt{\mu_2(L) r}\kappa(L)$
\end{enumerate}
with probability at least $1-\frac{1}{n}$.
In particular, $\mu_1(C)\leq\mu_1(L)$ and $\mu_2(C)\leq100\kappa(L)^2\mu_2(L)$ with the given success probability.
\end{corollary}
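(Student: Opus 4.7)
The plan is to specialize Theorem~\ref{COR:BetaDeltaBound} with the explicit choice $\delta = 0.99$ and to calibrate the oversampling factor $\gamma$ so that the failure probability is at most $1/n$. All conclusions then reduce to a numerical substitution into the general bound; no new estimates are required.

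First I would observe that $1/\sqrt{1-\delta} = 1/\sqrt{0.01} = 10$, which immediately yields item~1 once the probability bound is established. To force the success probability to be at least $1 - 1/n$, I set the failure bound $r\exp(-\gamma c(\delta))$, where $c(\delta) := \delta + (1-\delta)\log(1-\delta)$, equal to $1/n$; this requires $\gamma \geq \log(rn)/c(\delta)$. Substituting $\delta = 0.99$ and using $\log(0.01) = -2\log 10$ gives $c(0.99) = 0.99 - 0.046052\ldots \approx 0.9439$, so $1/c(0.99) \approx 1.0594 < 1.06$. Thus the choice $\gamma = 1.06\log(rn)$ translates the sampling hypothesis $|J| \geq \gamma\mu_2(\BL)r$ of Theorem~\ref{COR:BetaDeltaBound} into $|J| \geq 1.06\mu_2(\BL)r\log(rn)$, exactly as stated.

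The remaining conclusions fall out mechanically. Item~2 follows by substituting $\beta \leq 10$ into item~2 of Theorem~\ref{COR:BetaDeltaBound}, with the refinement $10\kappa(\BL)(\log(rn))^{-1/2}$ coming from the elementary estimate $\sqrt{\mu_2(\BL)r/|J|} \leq (1.06\log(rn))^{-1/2} \leq (\log(rn))^{-1/2}$. Item~3 is immediate from the corresponding item of the theorem upon inserting $\beta \leq 10$. Finally, the incoherence bounds reduce to $\mu_1(\BC) \leq \mu_1(\BL)$, which holds independent of the sampling scheme, and $\mu_2(\BC) \leq \beta^2\kappa(\BL)^2\mu_2(\BL) \leq 100\kappa(\BL)^2\mu_2(\BL)$. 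The only genuine obstacle is bookkeeping, namely verifying that the rounded constant $1.06$ indeed exceeds $1/c(0.99)$, which the numerical check above confirms.
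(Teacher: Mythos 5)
Your proposal is correct and is essentially the paper's (implicit) derivation: the corollary is obtained by specializing Theorem~\ref{COR:BetaDeltaBound} to $\delta=0.99$, noting $1/\sqrt{1-\delta}=10$, and choosing $\gamma=1.06\log(rn)$ so that $\gamma\,c(0.99)\geq\log(rn)$ with $c(0.99)=0.99+0.01\log(0.01)\approx0.9439$, whence the failure probability $r e^{-\gamma c(\delta)}\leq r(rn)^{-1}\leq 1/n$. Your numerical check that $1.06>1/c(0.99)\approx1.0594$ and the mechanical substitutions for items 2 and 3 and the incoherence bounds are all accurate.
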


A brief remark here is needed on sampling order in Corollary~\ref{COR:UniformIncoherence} -- the $r\log(rn)$ term is mild, but necessary to achieve success probability which decays with the problem size, $n$; one can derive a similar result in which the success probability decays polynomially with the rank, $r$, while requiring only $|J|\gtrsim r\log(r)$; such a bound could be useful in problems in which the rank is not minuscule.  Later, this sampling order may be compared with those of CUR approximations, and will be seen to be favorable.

For ease of exposition above, we have focused solely on the case of column selection; however, we note that analogous results are easily obtained for row selection by taking transpose in the formulae above. For clarity, since we will subsequently be interested in selecting both column and row submatrices to form CUR decompositions, we state the combined bound on incoherence here.

\begin{corollary}\label{COR:CRUniformIncoherence}
With the notations and assumptions of Corollary~\ref{COR:UniformIncoherence}, suppose also that $I\subseteq[m]$ with $|I|\geq1.06\mu_1(\BL)r\log(rm)$
is sampled uniformly without replacement, and that $\BR=\BL(I,:)$.  If $\beta':=\sqrt{|I|/m}\left\|(\BW_{\BL}(I,:))^\dagger\right\|_2$, then with probability at least $(1-\frac{1}{n})(1-\frac{1}{m})$,
the conclusion of Corollary~\ref{COR:UniformIncoherence} holds, and so do the following:
\begin{enumerate}[leftmargin=1cm]
    \item $\beta'\leq 10$\\
    \item $\max_i\left\|\BV_{\BR}^T\bm{e}_i\right\|_2\leq \sqrt{\frac{\mu_2(\BL)r}{n}}$\\
    \item $\max_i\left\|\BW_{\BR}^T\bm{e}_i\right\|_2\leq 10\kappa(\BL)\sqrt{\frac{\mu_1(\BL)r}{|I|}}$\\
    \item $\kappa(\BR)\leq 10\sqrt{\mu_1(\BL)r}\kappa(\BL)$.
\end{enumerate}
In particular, $\mu_1(\BR)\leq 100\kappa(\BL)^2\mu_1(\BL)$, and $\mu_2(\BR)\leq \mu_2(\BL)$.
\end{corollary}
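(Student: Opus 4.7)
The plan is to reduce the row-sampling case entirely to the column-sampling case that is already handled by Corollary~\ref{COR:UniformIncoherence}, and then use independence of the row and column sampling to multiply the success probabilities.

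First, I would observe that row sampling of $\BL$ with index set $I\subseteq[m]$ is equivalent to column sampling of $\BL^T$ with the same index set, since $\BR^T=\BL^T(:,I)$. The compact SVD of $\BL^T$ is obtained from that of $\BL$ by swapping the roles of the singular vectors, so $\BW_{\BL^T}=\BV_{\BL}$ and $\BV_{\BL^T}=\BW_{\BL}$; hence $\mu_1(\BL^T)=\mu_2(\BL)$, $\mu_2(\BL^T)=\mu_1(\BL)$, and $\kappa(\BL^T)=\kappa(\BL)$. Applying Corollary~\ref{COR:UniformIncoherence} to $\BL^T$, whose dimensions are $n\times m$, with the sampling threshold $|I|\geq 1.06\,\mu_1(\BL)r\log(rm)$ yields bounds on the quantities $\beta'=\sqrt{|I|/m}\,\|(\BW_{\BL}(I,:))^\dagger\|_2$, on $\max_i\|\BV_{\BR^T}^T\bm e_i\|_2$, and on $\kappa(\BR^T)$, each with probability at least $1-\frac{1}{m}$.

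Next, I would translate these conclusions back to $\BR$ by noting $\BW_{\BR}=\BV_{\BR^T}$, $\BV_{\BR}=\BW_{\BR^T}$, and $\kappa(\BR)=\kappa(\BR^T)$. This directly gives items (1), (3), and (4) in the statement. For item (2), the bound $\max_i\|\BV_{\BR}^T\bm e_i\|_2\leq\sqrt{\mu_2(\BL)r/n}$ carries no $\kappa$ factor and no dependence on the sampling set, so it is just the transposed form of item (1) of Theorem~\ref{THM:BetaBound}, i.e., it holds deterministically whenever $\rank(\BR)=r$ (which is guaranteed once the row-sampling event succeeds). The two pointwise incoherence bounds $\mu_1(\BR)\leq 100\kappa(\BL)^2\mu_1(\BL)$ and $\mu_2(\BR)\leq\mu_2(\BL)$ then follow by plugging the sampling threshold on $|I|$ into the bound in (3) and comparing with Definition of incoherence, exactly as in the column case.

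Finally, since $I$ and $J$ are drawn independently by uniform sampling without replacement from $[m]$ and $[n]$, the event from Corollary~\ref{COR:UniformIncoherence} (columns, probability $\geq 1-\frac{1}{n}$) and the event just derived for rows (probability $\geq 1-\frac{1}{m}$) are independent, so their intersection has probability at least $(1-\frac{1}{n})(1-\frac{1}{m})$, establishing the combined conclusion. There is no real obstacle here beyond careful bookkeeping of which singular-vector matrix plays which role under transposition; the probabilistic content is already contained in Theorem~\ref{COR:BetaDeltaBound} and Corollary~\ref{COR:UniformIncoherence}.
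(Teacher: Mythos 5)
Your proposal is correct and matches the paper's (implicit) argument: the paper explicitly notes that the row-selection results follow by taking transposes in the column-selection formulae, which is exactly your reduction, and the probability bound comes from the independence of the draws of $I$ and $J$ just as you state. No gaps.
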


\begin{remark}
Note that if one used more sophisticated random sampling techniques, such as leverage score sampling \cite{DMPNAS} or volume sampling \cite{derezinski2018reverse}, one could potentially obtain good bounds on $\beta$ with larger success probability than in the results stated here. After all, these distributions take into account more information about the structure of the column and row space of the matrix. However, using these comes at a nontrivial computational cost for computation of the probability distributions ($\cO(mnk)$ for leverage scores and approximately $\cO(\max\{m^2,n^2\})$ for volume sampling).  Given that a constant bound for $\beta$ can be obtained with relatively large probability via uniform sampling, which requires no computation, this method should be preferred for incoherent matrices.  

There are faster methods than those mentioned above that approximate leverage score distributions, e.g., \cite{drineas2012fast}, but to be effective in the RPCA model, one needs to approximate leverage scores of $\BL$ from observations of $\BL+\BS$.  Leverage scores are unstable under addition of sparse, arbitrary magnitude outliers, so without imposing additional restrictions on $\BS$, such approximations are likely unsuitable for the RPCA problem.
\end{remark}

\subsection{Robust CUR Decomposition}\label{SEC:Main-RPCA}

Now we are ready to state our main algorithm.  The idea of Algorithm~\ref{ALG:Uniform} is simple, yet effective due to the computational ease of uniform sampling and the known success of CUR decompositions.  The algorithm may be considered a proto-type in that it has an RPCA algorithm as a subrouting in Lines~\ref{AlgLine:RPCA1} and \ref{AlgLine:RPCA2}.  We emphasize that any RPCA algorithm can be used in these lines, and we will quantify the accuracy and computational complexity of our algorithm based on those of the RPCA subroutine.  

\begin{algorithm}[h!]
 \caption{Uniform sampling Robust CUR}\label{ALG:Uniform}
\begin{algorithmic}[1]
\STATE\textbf{Input: }{$\BD=\BL+\BS\in\mathbb{R}^{m\times n}$ with $\rank(\BL)=r$; $\mathrm{RPCA}$: the chosen RPCA solver.
}

\STATE\textbf{Output: }{$\widehat{\BL}$: approximation of $\BL$.}

\STATE \textbf{Initialization: }{ Draw sampling indices $I\subseteq[m]$, $J\subseteq[n]$ uniformly.}
 
\STATE { $\widetilde{\BC} = \BD(:,J)$, $\widetilde{\BR} = \BD(I,:)$ }

\STATE{$[\widehat{\BC},\widehat{\BS}_{\BC}] = \mathrm{RPCA}(\widetilde{\BC},r) $}\label{AlgLine:RPCA1}

\STATE{$[\widehat{\BR},\widehat{\BS}_{\BR}] = \mathrm{RPCA}(\widetilde{\BR},r) $}\label{AlgLine:RPCA2}

\STATE{$\widehat{\BL} = \widehat{\BC}(\widehat{\BC}(I,:))^\dagger \widehat{\BR} 
$}
\end{algorithmic}
\end{algorithm}

\begin{remark}
We note that Algorithm~\ref{ALG:Uniform} and all of the theoretical results related to it here hold for both sampling column and row indices uniformly \textit{with} or \textit{without} replacement. For simplicity, we state all results here for sampling \textit{with} replacement.  The only minor change to any results here for sampling \textit{without} replacement is that some of the constants in the big-$\cO$ notations and success probabilities will change, but typically the change is small.
\end{remark}

Now let us state a sample result illustrating the type of guarantee Algorithm~\ref{ALG:Uniform} gives.  To do so, we assume that the $\mathrm{AltProj}$ RPCA algorithm of \cite{netrapalli2014non} is used in Lines~\ref{AlgLine:RPCA1} and \ref{AlgLine:RPCA2}.

\begin{theorem}\label{THM:MainErrorEstimate}
Let $\BL\in\R^{m \times n}$ and $\BS\in\R^{m \times n}$ satisfy Assumptions~\eqref{EQN:A1} and \eqref{EQN:A2}, respectively, with $\rank(\BL)=r$.
Let $\eps,\delta\in(0,1)$, and $\alpha=\cO\left(\frac{1-\delta}{\kappa(\BL)^2(\mu_1(\BL)\vee\mu_2(\BL))}\right)$. Suppose that 
\[|I|\geq  c_1\max\left\{\mu_1(\BL)r\log(m),\log(m)/\alpha \right\}, \qquad |J|\geq  c_2\max\left\{\mu_2(\BL)r\log(n),\log(n)/\alpha \right\},\]
rows and columns, respectively, are chosen uniformly without replacement, and that $\mathrm{AltProj}$ \cite{netrapalli2014non} is used as the $\mathrm{RPCA}$ subroutine in Lines~\ref{AlgLine:RPCA1} and \ref{AlgLine:RPCA2} of Algorithm~\ref{ALG:Uniform}. Then by running
$\cO\left(r\log\left(\sqrt{\frac{mn}{|I||J|}}\frac{\kappa(\BL)}{(1-\delta)\eps}\right)\right)$
iterations of $\mathrm{AltProj}$, the output of Algorithm~\ref{ALG:Uniform} satisfies
\[\frac{\left\|\BL-\widehat{\BL}\right\|_2}{\left\|\BL\right\|_2}\leq\eps \kappa(\BL)^{-1} \]
with probability at least
$1-\frac{r}{n^{c_2(\delta+(1-\delta)\log(1-\delta))}}-\frac{r}{m^{c_1(\delta-(1-\delta)\log(1-\delta))}}-\frac{1}{n}-\frac{1}{m}$.

Moreover, the total complexity of Algorithm~\ref{ALG:Uniform} with $\mathrm{AltProj}$ as a subroutine is 
\[\cO\left(\left(mr^3\log(n)+nr^3\log(m)\right)\log\left(\sqrt{\frac{mn}{|I||J|}}\frac{\kappa(\BL)}{(1-\delta)\eps}\right)\right).\]
\end{theorem}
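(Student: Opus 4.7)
The argument naturally splits into three stages: (i) transferring incoherence/sparsity from $\BL,\BS$ to the column and row submatrices; (ii) invoking the $\mathrm{AltProj}$ guarantees to control $\widehat{\BC}-\BC$ and $\widehat{\BR}-\BR$; and (iii) a perturbation analysis that propagates these errors through the CUR formula $\widehat\BC(\widehat\BC(I,:))^\dagger\widehat\BR$.

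\emph{Stage 1 (incoherence and sparsity of the submatrices).} I will apply Corollary~\ref{COR:CRUniformIncoherence} to the low-rank part $\BL$ with the given sampling sizes of $|I|$ and $|J|$: setting $\delta$ as in Theorem~\ref{COR:BetaDeltaBound} (and absorbing the polylogarithmic factor into $c_1,c_2$), we obtain with probability at least $1-\tfrac{r}{n^{c_2(\delta+(1-\delta)\log(1-\delta))}}-\tfrac{r}{m^{c_1(\delta+(1-\delta)\log(1-\delta))}}-\tfrac1n-\tfrac1m$ that $\mu_1(\BC),\mu_2(\BR)=\cO(\mu_1(\BL)\vee\mu_2(\BL))$, $\mu_2(\BC),\mu_1(\BR)=\cO(\kappa(\BL)^2(\mu_1(\BL)\vee\mu_2(\BL)))$, and $\kappa(\BC),\kappa(\BR)=\cO(\sqrt{(\mu_1(\BL)\vee\mu_2(\BL))r}\,\kappa(\BL))$. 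For the sparse parts, a Chernoff/union-bound argument shows that if $|I|\gtrsim\log(m)/\alpha$ and $|J|\gtrsim\log(n)/\alpha$, then uniform sampling of rows/columns of $\BS$ preserves the per-row/per-column $\alpha$-sparsity level up to a constant factor with high probability (merging with the failure event above). The chosen value $\alpha=\cO((1-\delta)/(\kappa(\BL)^2(\mu_1(\BL)\vee\mu_2(\BL))))$ is exactly tuned so that $\BS(:,J)$ and $\BS(I,:)$ meet the sparsity toleration of $\mathrm{AltProj}$ when paired with the incoherence of $\BC,\BR$ from above.

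\emph{Stage 2 ($\mathrm{AltProj}$ on the submatrices).} Invoking the convergence theorem of \cite{netrapalli2014non} with the verified parameters for $\widetilde\BC=\BC+\BS(:,J)$, $T_{\BC}$ iterations produce $\widehat\BC$ satisfying $\|\widehat\BC-\BC\|_2\le \tfrac12^{T_\BC}\|\BC\|_2$, and symmetrically for $\widehat\BR$. Choosing $T_{\BC},T_{\BR}=\cO(r\log(\sqrt{mn/(|I||J|)}\,\kappa(\BL)/((1-\delta)\eps)))$ yields errors of order $\eps(1-\delta)\|\BC\|_2\sqrt{|I||J|/(mn)}/\kappa(\BL)$ (and analogously for $\widehat\BR$). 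A single iteration of $\mathrm{AltProj}$ on an $m\times|J|$ matrix costs $\cO(m|J|r)$, and similarly $\cO(n|I|r)$ on the row side, so together Stage~2 costs $\cO((m|J|r+n|I|r)\cdot r\log(\cdot))=\cO((mr^3\log n+nr^3\log m)\log(\cdot))$ after plugging in the sampling sizes; this matches the advertised complexity.

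\emph{Stage 3 (perturbation of the CUR formula).} This is the main technical obstacle. Since $\rank(\BL)=r$ and, with high probability, $\BU:=\BL(I,J)$ also has rank $r$, we have the exact identity $\BL=\BC\BU^\dagger\BR$. Writing $\widehat\BC=\BC+\BE_{\BC}$, $\widehat\BR=\BR+\BE_{\BR}$, $\widehat\BU:=\widehat\BC(I,:)=\BU+\BE_{\BU}$ with $\BE_{\BU}=\BE_{\BC}(I,:)$, I will expand
\[
\widehat\BL-\BL=\BE_{\BC}\widehat\BU^\dagger\widehat\BR+\BC(\widehat\BU^\dagger-\BU^\dagger)\widehat\BR+\BC\BU^\dagger\BE_{\BR}
\]
and control each term in the spectral norm. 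The pseudoinverse perturbation $\|\widehat\BU^\dagger-\BU^\dagger\|_2$ will be handled by the standard Wedin-type bound, giving a factor of order $\|\BU^\dagger\|_2^2\|\BE_{\BU}\|_2$ provided $\|\BE_{\BU}\|_2<\sigma_{\min}(\BU)$. The scaling $\|\BU^\dagger\|_2\lesssim\sqrt{mn/(|I||J|)}/\sigma_{\min}(\BL)$ follows from the incoherence of the submatrices established in Stage~1 (this is precisely where the $\sqrt{mn/(|I||J|)}$ factor in the iteration count is absorbed). Combining, each term is bounded by $\cO(\eps(1-\delta)/\kappa(\BL))\cdot\|\BL\|_2$, and summing yields $\|\BL-\widehat\BL\|_2\le\eps\kappa(\BL)^{-1}\|\BL\|_2$, as claimed.

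The hardest part is the perturbation bound in Stage~3, specifically quantifying $\|\BU^\dagger\|_2$ and $\|\widehat\BU^\dagger-\BU^\dagger\|_2$ in terms of quantities controlled earlier and verifying that the iteration count chosen for $\mathrm{AltProj}$ absorbs the dimensional factor $\sqrt{mn/(|I||J|)}$ coming from $\|\BU^\dagger\|_2$; the remaining pieces are bookkeeping of probabilities via a union bound and straightforward flop counting.
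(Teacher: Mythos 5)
Your proposal is correct and follows essentially the same route as the paper's proof: uniform-sampling incoherence bounds (Theorem~\ref{COR:BetaDeltaBound}) and the Chernoff sparsity argument (Proposition~\ref{PROP:Sparsity}) verify the hypotheses of the $\mathrm{AltProj}$ guarantee on $\widetilde\BC$ and $\widetilde\BR$, and the result is then closed by a CUR perturbation bound driven by $\left\|\BU^\dagger\right\|_2\lesssim\frac{1}{(1-\delta)\sigma_{\min}(\BL)}\sqrt{\frac{mn}{|I||J|}}$, exactly as in Theorem~\ref{THM:AppendixRCURError}. The only difference is presentational: your Stage~3 re-derives, via the standard three-term telescoping decomposition and a Wedin-type pseudoinverse bound, the perturbation estimate that the paper simply imports as a black box from \cite{HH_PCD2019}.
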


\subsection{Hybrid Algorithms for Robust CUR Approximation}

The output of Algorithm~\ref{ALG:Uniform} is an approximation of the form $\BL\approx \widehat\BC\widehat\BU^\dagger \widehat\BR$, where $\widehat\BC$ and $\widehat\BR$ are cleaned versions of columns and rows, respectively, of $\BD$.  Put a different way, $\widehat\BC$ approximates the column space of $\BL$, and $\widehat\BR$ approximates the row space of $\BL$.  However, these consist of $\cO(r\log(rn))$ and $\cO(r\log(rm))$ vectors, respectively, while the dimensions of the column and row space of $\BL$ are both $r$; that is, we have redundant approximate bases for these spaces.  We are interested in determining from these the most relevant vectors that represent the largest subspaces of both the column and row space of $\BL$.  In particular, from the randomly sampled indices $I,J$, we would prefer to choose \textit{exactly} $r$ elements of each.  This problem in general is called the \textit{Column Subset Selection Problem}: given $\widehat\BC\in\R^{m\times \ell}$, choose $I_1\subseteq[\ell]$ (and set $\widehat{\BC}_1$) with $|I_1|=r$ which minimizes $\|\widehat{\BC}-\widehat{\BC}_1\widehat{\BC}_1^\dagger\widehat{\BC}\|_2$ over all possible column submatrices of $\widehat{\BC}_1$ consisting of exactly $r$ columns.  This problem is NP--complete \cite{Shitov} \RV{(see also, \cite{Civril} for other complexity results)}, but there are both randomized and deterministic approximation algorithms; a small subsampling of prior art on this problem is \cite{AB2013,AltschulerGreedyCSSP,BoutsidisNearOptimalCSSP,Deshpande2010,tropp2009column}. 

We consider now what happens when we apply a deterministic column \RV{subset} selection algorithm to the matrices $\widehat{\BC}$ and $\widehat{\BR}$ given by the output of Lines~\ref{AlgLine:RPCA1} and \ref{AlgLine:RPCA2} of Algorithm~\ref{ALG:Uniform} to choose exactly $r$ columns and rows thereof.  We state this algorithm generally, but specialize our analysis to the greedy algorithm of \cite{AB2013} (described fully here in the Appendix as Algorithm~\ref{ALG:Greedy}) to state a concrete theorem.  For clarity, we state the general algorithm here as Algorithm~\ref{ALG:Uniform+Deterministic}.  Again, RPCA is any such algorithm the user specifies, and $\mathrm{DeterministicCS}(\bm{A},r)$ is any algorithm that deterministically selects exactly $r$ columns of $\bm{A}$ ($I_1$ and $J_1$ in Algorithm~\ref{ALG:Uniform+Deterministic} are the indices chosen by the deterministic column selection algorithm).

\begin{algorithm}[h!]
 \caption{Uniform + Deterministic Robust CUR}\label{ALG:Uniform+Deterministic}
\begin{algorithmic}[1]
\STATE\textbf{Input: }{$\BD=\BL+\BS\in\mathbb{R}^{m\times n}$ with $\rank(\BL)=r$; $\mathrm{RPCA}$: the chosen RPCA solver; $\mathrm{DeterministicCS}$: the chosen deterministic row/column selection method. 
}

\STATE\textbf{Output: }{$\widehat{\BL}$: approximation of $\BL$.}

\STATE \textbf{Initialization: }{ Draw sampling indices $I\subseteq[m]$, $J\subseteq[n]$ uniformly.}
 
\STATE { $\widetilde{\BC} = \BD(:,J)$, $\widetilde{\BR} = \BD(I,:)$ }

\STATE{$[\widehat{\BC},\widehat{\BS}_{\BC}] = \mathrm{RPCA}(\widetilde{\BC},r) $}

\STATE{$[\widehat{\BR},\widehat{\BS}_{\BR}] = \mathrm{RPCA}(\widetilde{\BR},r) $}

\STATE{$[\widehat{\BC}_1,\widehat{\BS}_{\BC_1},J_1] = \mathrm{DeterministicCS}(\widehat{\BC},r)$}

\STATE{$[\widehat{\BR}_1,\widehat{\BS}_{\BR_1},I_1] = \mathrm{DeterministicCS}(\widehat{\BR}^T,r)$}\label{ALGLine:Det1}

\STATE{$\widehat{\BL} = \widehat{\BC}_1(\widehat{\BC}_1(I_1,:))^\dagger \widehat{\BR}_1 
$}\label{ALGLine:Det2}
\end{algorithmic}
\end{algorithm}

The following theorem describes the accuracy of the output of Algorithm~\ref{ALG:Uniform+Deterministic}.  

\begin{theorem}\label{THM:Hybrid}
Take the notations and assumptions of Theorem~\ref{THM:MainErrorEstimate}. 
Let $\widehat{\BC}$ and $\widehat{\BR}$ be the intermediate outputs of Algorithm~\ref{ALG:Uniform+Deterministic} in Lines~\ref{AlgLine:RPCA1} and \ref{AlgLine:RPCA2} with $\mathrm{AltProj}$ being used as the RPCA subroutine, where the latter runs $\cO\left(r\log\left(\frac{ 229\kappa(\BL)^5r\sqrt{mn\mu_1(\BL)\mu_2(\BL)}}{\varepsilon(1-\delta)^2\sigma_{\max}(\BL)} \right)\right)$ iterations.
Let $\widehat{\BC}_1=\widehat{\BC}(:,{J}_1)$, $\widehat{\BU}_1=\widehat{\BC}(I_1,:)$, and $\widehat{\BR}_1=\widehat{\BR}({I}_1,:)$ where ${I}_1$ and ${J}_1$ are obtained by applying Algorithm~\ref{ALG:Greedy} on $\widehat{\BR}$ and $\widehat{\BC}$ with $|{I}_1|=|{J}_1|=r$ in Lines~\ref{ALGLine:Det1} and \ref{ALGLine:Det2}. Then with probability at least $1-\frac{r}{n^{c_2(\delta+(1-\delta)\log(1-\delta))}}-\frac{r}{m^{c_1(\delta-(1-\delta)\log(1-\delta))}}-\frac{1}{n}-\frac{1}{m}$,
\[
    \left\|\BL-\widehat{\BC}_1\widehat{\BU}_1^\dagger\widehat{\BR}_1\right\|_2\leq \frac{1}{3}\varepsilon\sigma_{\min}(\BL).
\]
\end{theorem}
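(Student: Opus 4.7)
\textbf{Proof plan for Theorem~\ref{THM:Hybrid}.}

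My plan is to decompose the CUR error into an ``oracle'' term computed from the true low-rank submatrices plus a perturbation term caused by the RPCA denoising, and then to use the greedy selection bounds from \cite{AB2013} to control all pseudoinverses that appear. Concretely, let $\BC_1 = \BL(:,J_1)$, $\BR_1 = \BL(I_1,:)$, and $\BU_1 = \BL(I_1,J_1)$. Then I write
\begin{equation*}
\BL - \widehat{\BC}_1 \widehat{\BU}_1^\dagger \widehat{\BR}_1
= \bigl(\BL - \BC_1 \BU_1^\dagger \BR_1\bigr)
+ \bigl(\BC_1 \BU_1^\dagger \BR_1 - \widehat{\BC}_1 \widehat{\BU}_1^\dagger \widehat{\BR}_1\bigr).
\end{equation*}
The first parenthesis vanishes as soon as $\rank(\BU_1) = r$, by the classical exact-CUR identity for rank-$r$ matrices, so the entire problem reduces to bounding the second (perturbation) parenthesis.

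The first step is to quantify the intermediate RPCA errors. Invoking the iteration-count analysis behind Theorem~\ref{THM:MainErrorEstimate}, the AltProj subroutine run for the stated number of iterations yields uniform spectral bounds $\|\widehat{\BC}-\BC\|_2 \le \eta$ and $\|\widehat{\BR}-\BR\|_2 \le \eta$, where I choose $\eta$ small enough (polynomially small in $\varepsilon$, $1-\delta$, $1/\kappa(\BL)$, and the incoherence constants) that all subsequent perturbation conditions are satisfied. In particular $\|\widehat{\BU}_1-\BU_1\|_2 \le \eta$ because $\widehat{\BU}_1$ is a row submatrix of $\widehat{\BC}$ and $\BU_1$ is the same row submatrix of $\BC$.

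The second step is to control the conditioning of $\BU_1$ and $\widehat{\BU}_1$. Applying the greedy Algorithm~\ref{ALG:Greedy} to $\widehat{\BC}$ and $\widehat{\BR}$ produces $\beta$-values bounded by $r$ on the right singular vectors of $\widehat{\BC}$ and $\widehat{\BR}$. Since $\widehat{\BC}$ and $\widehat{\BR}$ are $\eta$-close to $\BC$ and $\BR$, a Wedin-type bound transfers these estimates to the right singular vectors of $\BC$ and to the left singular vectors of $\BR$, with a loss that is $O(\eta/\sigma_{\min}(\BL))$. Since Corollary~\ref{COR:CRUniformIncoherence} already delivers good bounds on $\sigma_{\min}(\BV_{\BC})$ and $\sigma_{\min}(\BW_{\BR})$ coming from the uniform sampling, I can combine these with the relation $\BU_1 = \BW_{\BL}(I_1,:)\,\BSigma_{\BL}\,\BV_{\BL}(J_1,:)^T$ to deduce $\sigma_{\min}(\BU_1) \gtrsim \sigma_{\min}(\BL)/(r \kappa(\BL)\sqrt{\mu_1 \mu_2})$ up to constants. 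Choosing $\eta$ much smaller than this lower bound guarantees via Weyl's inequality that $\widehat{\BU}_1$ is invertible and $\sigma_{\min}(\widehat{\BU}_1) \ge \tfrac12 \sigma_{\min}(\BU_1)$.

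The third step assembles the perturbation bound. Using the standard pseudoinverse perturbation identity for full-rank square matrices, $\widehat{\BU}_1^\dagger - \BU_1^\dagger = -\widehat{\BU}_1^\dagger (\widehat{\BU}_1-\BU_1)\BU_1^\dagger$, together with the telescoping decomposition
\begin{equation*}
\BC_1 \BU_1^\dagger \BR_1 - \widehat{\BC}_1 \widehat{\BU}_1^\dagger \widehat{\BR}_1
= (\BC_1 - \widehat{\BC}_1)\BU_1^\dagger \BR_1
+ \widehat{\BC}_1(\BU_1^\dagger - \widehat{\BU}_1^\dagger)\BR_1
+ \widehat{\BC}_1\widehat{\BU}_1^\dagger(\BR_1 - \widehat{\BR}_1),
\end{equation*}
each term is bounded in spectral norm by a constant multiple of $\eta \cdot \|\BC_1\|_2 \|\BU_1^\dagger\|_2 \|\BR_1\|_2 / \sigma_{\min}(\BU_1)$. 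Plugging in $\|\BC_1\|_2,\|\BR_1\|_2 \le \|\BL\|_2$ and the lower bound on $\sigma_{\min}(\BU_1)$ from the previous step, the total error is at most $C\, \eta\, r^2 \kappa(\BL)^4 \sqrt{mn\mu_1\mu_2}/\sigma_{\min}(\BL)$ for an absolute constant $C$. The iteration-count chosen in the theorem statement (the $229\kappa(\BL)^5 r\sqrt{mn\mu_1\mu_2}/(\varepsilon(1-\delta)^2\sigma_{\max}(\BL))$ inside the logarithm) is precisely calibrated so that this bound reduces to $\tfrac13\varepsilon\sigma_{\min}(\BL)$, giving the stated conclusion on the same high-probability event from Theorem~\ref{THM:MainErrorEstimate}.

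The main obstacle will be bookkeeping the passage of singular-vector bounds through the RPCA perturbation: the greedy algorithm's guarantee lives on $\widehat{\BV}_{\widehat{\BC}}$, but the exact-CUR identity needs to invoke $\sigma_{\min}$ of $\BU_1$, which lives on $\BL$. Threading the Wedin estimate through this transfer, while simultaneously ensuring $\widehat{\BU}_1$ stays invertible and that the final constants in $\eta$ are small enough to absorb the $\kappa(\BL)^5$ and $\sqrt{mn}$ factors, is the delicate bit; the rest is relatively mechanical.
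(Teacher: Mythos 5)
Your proposal is structurally sound and follows the same architecture as the paper's argument: both reduce the problem to (i) showing the AltProj iteration count makes $\|\widehat{\BC}-\BC\|_2$, $\|\widehat{\BR}-\BR\|_2$ polynomially small, (ii) transferring the greedy selection guarantee from the singular vectors of the \emph{denoised} matrices $\widehat{\BC},\widehat{\BR}$ to the corresponding submatrices of the \emph{true} singular vectors of $\BL$, so that $\sigma_r(\BU_1)$ is bounded below, and (iii) invoking an exact-CUR-plus-perturbation bound. The paper packages step (iii) as Theorem~\ref{thm: CUR-Bound}, which is proved by exactly the telescoping decomposition you write down, so there is no real difference there. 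The genuine divergence is in step (ii): you propose a Wedin-type singular subspace perturbation to carry the bound on $\sigma_{\min}(\widehat{\BV}_{\widehat{\BC}}(J_1,:))$ over to $\sigma_{\min}(\BV_{\BC}(J_1,:))$, whereas the paper's Theorem~\ref{thm: bpnd2bpnfd} avoids singular subspaces entirely and works at the level of matrix pseudoinverses, using Stewart's perturbation bound $\|\BC^\dagger-\widetilde{\BC}^\dagger\|_2\leq 2\|\BC^\dagger\|_2\|\widetilde{\BC}^\dagger\|_2\|\BC-\widetilde{\BC}\|_2$ together with the identity $(\BV_{\BL}(J_1,:))^\dagger=\BC(:,J_1)^\dagger\BW_{\BL}\BSigma_{\BL}$. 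Your route is viable (the quantities involved are rotation-invariant and the spectral gap is $\sigma_{\min}(\BC)\gtrsim\sqrt{(1-\delta)|J|/n}\,\sigma_{\min}(\BL)$, comfortably larger than the RPCA error), but the Stewart route is cleaner because it never has to align singular vector bases and delivers the needed bound $\|(\BV_{\BC}(J_1,:))^\dagger\|_2\leq 4\kappa(\BC)\sqrt{|J|}$ in one step.

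One quantitative point to fix: bounding each telescoping term by a multiple of $\eta\|\BC_1\|_2\|\BU_1^\dagger\|_2\|\BR_1\|_2/\sigma_{\min}(\BU_1)$ is too crude, since that expression contains $\|\BU_1^\dagger\|_2^2$ and therefore scales like $mn$ rather than the $\sqrt{mn}$ appearing in your final bound and in the theorem's iteration count. The sharper accounting uses $\|\BU_1^\dagger\BR_1\|_2=\|(\BV_{\BL}(J_1,:))^\dagger\|_2$ and $\|\BC_1\BU_1^\dagger\|_2=\|(\BW_{\BL}(I_1,:))^\dagger\|_2$ in place of the submultiplicative products, which is precisely how the coefficient $\frac76(\|\BW_{\BL}(I,:)^\dagger\|+\|\BV_{\BL}(J,:)^\dagger\|)+\frac{25}{6}\|\BW_{\BL}(I,:)^\dagger\|\|\BV_{\BL}(J,:)^\dagger\|+\frac16$ in Theorem~\ref{thm: CUR-Bound} arises; with that correction your constants line up with the $229\kappa(\BL)^5 r\sqrt{mn\mu_1(\BL)\mu_2(\BL)}$ calibration.
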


\begin{remark}
We note for the reader two things here. First, the proofs of the approximation theorems here are valid for the Frobenius norm as well as the smaller spectral norm; for presentation purposes we choose the latter.  Second, the bounds in Theorems \ref{THM:MainErrorEstimate} and \ref{THM:Hybrid} are the same by dividing by $\|\BL\|_2$; e.g., the bound in Theorem \ref{THM:Hybrid} is the same as the relative error bound
\[\frac{\|\BL-\widehat{\BC}_1\widehat{\BU}_1^\dagger\widehat{\BR}\|_2}{\|\BL\|_2}\leq \frac13\eps\kappa(\BL)^{-1}.\]
\end{remark}

\subsection{Deterministic Column Selection}

In Appendices~\ref{APP:Det1} and \ref{APP:Det2}, we study deterministic column sampling methods, in particular via the greedy algorithm of Avron and Boutsidis \cite{AB2013}.  We are able to prove that, with proper preconditioning, sampling columns and rows via the greedy algorithm on the singular vectors of $\BD$ produces submatrices of $\BL$ with similar incoherence (see Remark \ref{REM:GreedyBeta}). In particular, we find that the method described in Appendix~\ref{APP:Det1} produces deterministically a submatrix $\BC$ for which $\beta\leq 4\kappa(\BL)\sqrt{r}$.  However, this procedure does not guarantee $\cO(\alpha)$-sparsity of the corresponding submatrices of $\BS$. This we leave as an open problem for future work.

\section{Passing Properties to Submatrices}\label{SEC:Submatrices}

In this section, we study how three properties of a matrix pass to column submatrices thereof: incoherence, condition number, and sparsity.  These results are of independent interest, but are also necessary to study the use of Robust PCA algorithms on submatrices.

\subsection{Incoherence}

In this subsection, we study the relation of the incoherence of the matrix $\BL\in\mathbb{R}^{m\times n}$ to that of a column submatrix of it, say $\BC=\BL(:,J)\in\mathbb{R}^{m\times \ell}$. We find that column incoherence is unchanged, while row incoherence inflates by no more than a factor of $\beta\kappa(\BL)$. 

Let us begin by the observation that if $\BL$ has column incoherence as in the first inequality of \eqref{EQN:A1}, then any column submatrix $\BC$ has the same left incoherence.

\begin{lemma}\label{LEM:ColLeftCoherence}
Suppose ${\BL}\in\R^{m\times n}$ satisfies \eqref{EQN:A1}.  For any $J\subseteq[n]$ \RV{such that $\rank(\BC)=\rank(\BL)$ with $\BC={\BL}(:,J)$}, then $\BC$ satisfies
\[ \max_i\left\|{\BW}_{\BC}^T{\bm{e}}_i\right\|_2\leq\sqrt{\frac{\mu_1(\BL) r}{m}}. \]
\end{lemma}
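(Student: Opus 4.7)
The plan is to exploit the simple observation that when $\rank(\BC)=\rank(\BL)=r$, the column spaces of $\BC$ and $\BL$ coincide, so the orthonormal bases $\BW_{\BC}$ and $\BW_{\BL}$ of this common $r$-dimensional subspace must be related by an orthogonal change of basis. Since the $\ell_2$-norm is invariant under orthogonal transformations, the row norms of $\BW_{\BC}$ are identical to those of $\BW_{\BL}$, and the assumed incoherence of $\BL$ then transfers to $\BC$ verbatim.

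Concretely, I would first observe that since $\BC=\BL(:,J)$ is a column submatrix of $\BL$, we have $\mathrm{range}(\BC)\subseteq\mathrm{range}(\BL)$. Combined with the hypothesis $\rank(\BC)=\rank(\BL)=r$, this containment is actually an equality of $r$-dimensional subspaces. Both $\BW_{\BC}$ and $\BW_{\BL}$ are $m\times r$ matrices whose columns form orthonormal bases for this common subspace, so standard linear algebra produces an orthogonal matrix $Q\in\R^{r\times r}$ with $\BW_{\BC}=\BW_{\BL}Q$.

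With $Q$ in hand, the bound is immediate: for any $i\in[m]$,
\[
\left\|\BW_{\BC}^T\bm{e}_i\right\|_2=\left\|Q^T\BW_{\BL}^T\bm{e}_i\right\|_2=\left\|\BW_{\BL}^T\bm{e}_i\right\|_2\leq\sqrt{\frac{\mu_1(\BL)r}{m}},
\]
where the middle equality uses orthogonality of $Q^T$ and the last inequality is assumption \eqref{EQN:A1} applied to $\BL$. Taking the maximum over $i$ yields the claim.

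I do not anticipate any genuine obstacle: the only subtlety is invoking the rank hypothesis to upgrade the range containment to range equality, without which $\BW_{\BC}$ would live in a strictly smaller subspace and the orthogonal-change-of-basis argument would fail. Everything else is a one-line norm computation.
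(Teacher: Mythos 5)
Your proposal is correct and follows essentially the same route as the paper: both arguments reduce to the identity $\BW_{\BC}=\BW_{\BL}Q$ for an orthogonal $Q\in\R^{r\times r}$ (the paper obtains $Q$ as the left factor of the compact SVD of $\BSigma_{\BL}(\BV_{\BL}(J,:))^T$, whereas you obtain it abstractly from the equality of the two column spaces), after which the bound follows from the orthogonal invariance of the $\ell_2$-norm. Your explicit use of the rank hypothesis to upgrade the range containment to an equality is exactly the point where the paper implicitly uses $\rank(\BC)=r$, so nothing is missing.
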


\begin{proof}
Notice that $\BC={\BW}_{\BL}\BSigma_{\BL} ({\BV}_{\BL}(J,:))^T= {\BW}_{\BL}\widetilde{{\BW}}_{\BC}\BSigma_{\BC} {\BV}_{\BC}^T$,  where $\BSigma_{\BL}({\BV}_{\BL}(J,:))^T$ has the compact SVD decomposition $\widetilde{{\BW}}_{\BC}\BSigma_{\BC} {\BV}_{\BC}^T$ with  $\widetilde{{\BW}}_{\BC} \in\mathbb{R}^{r\times r}$ being an orthonormal matrix and ${\BV}_{\BC}^T\in\mathbb{R}^{r\times|J|}$. Therefore,   ${\BW}_{\BC}={\BW}_{\BL}\widetilde{{\BW}}_{\BC}$. Thus, we have
\[\max_{i}\left\|{{\BW}_{\BC}^T\bm{e}}_i\right\|_2=\max_{i}\left\|\widetilde{{\BW}}_{\BC}^T{\BW}_{\BL}^T\bm{e}_i\right\|_2=\max_{i}\left\|{\BW}_{\BL}^T\bm{e}_i\right\|_2\leq\sqrt{\frac{\mu_1(\BL) r}{m}}.
\]
\end{proof}

Lemma~\ref{LEM:ColLeftCoherence} implies that we need not be concerned with the column incoherence of any column submatrix $\BC$ of $\BL$ given that it always matches that of the matrix it is obtained from.  Now we need to understand the incoherence for ${\BV}_{\BC}$, which requires some more work.  First, let us make two preliminary observations.

\begin{lemma}\label{LEM:ColRightCoherence}
Let ${\BL}\in\R^{m\times n}$ with rank $r$ satisfy \eqref{EQN:A1}.  Then for any $J\subseteq[n]$, ${\BC}={\BL}(:,J)$ satisfies
\[ \max_i\left\|{\BV}_{\BC}^T\bm{e}_i\right\|_2\leq \kappa({\BL})\frac{\left\|{\BC}^\dagger\right\|_2}{\left\|{\BL}^\dagger\right\|_2}\sqrt{\frac{\mu_2(\BL) r}{n}}. \]
\end{lemma}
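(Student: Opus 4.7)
The plan is to express $\BV_{\BC}^T \bm{e}_i$ directly in terms of a single row of $\BV_{\BL}$, and then bound the remaining factors via standard SVD identities. Starting from the compact SVD $\BC = \BW_{\BC}\BSigma_{\BC}\BV_{\BC}^T$, I would isolate the right singular vectors via
\[
\BV_{\BC}^T = \BSigma_{\BC}^{-1}\BW_{\BC}^T \BC,
\]
and then substitute $\BC = \BW_{\BL}\BSigma_{\BL}\BV_{\BL}(J,:)^T$ to obtain
\[
\BV_{\BC}^T\bm{e}_i = \BSigma_{\BC}^{-1}\BW_{\BC}^T\BW_{\BL}\BSigma_{\BL}\BV_{\BL}(J,:)^T\bm{e}_i.
\]
The key observation is that $\BV_{\BL}(J,:)^T\bm{e}_i$ simply picks out the row of $\BV_{\BL}$ indexed by the $i$-th element of $J$, so it equals $\BV_{\BL}^T\bm{e}_{j_i}$ viewed in the ambient space $\R^n$.

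Next, I would apply submultiplicativity of the spectral norm to the above display. The factors bound cleanly as follows: $\|\BSigma_{\BC}^{-1}\|_2 = \|\BC^\dagger\|_2$; $\|\BW_{\BC}^T\BW_{\BL}\|_2 \leq 1$ since both matrices have orthonormal columns; $\|\BSigma_{\BL}\|_2 = \|\BL\|_2$; and $\|\BV_{\BL}^T\bm{e}_{j_i}\|_2 \leq \sqrt{\mu_2(\BL)r/n}$ by \eqref{EQN:A1}. Combining these yields
\[
\|\BV_{\BC}^T\bm{e}_i\|_2 \leq \|\BC^\dagger\|_2\,\|\BL\|_2\sqrt{\frac{\mu_2(\BL)r}{n}}.
\]

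Finally, I would rewrite $\|\BL\|_2 = \kappa(\BL)/\|\BL^\dagger\|_2$ by definition of the condition number, producing the claimed inequality. The argument is essentially a direct calculation, so there is no significant obstacle; the only subtle point is making sure the index $i \in [|J|]$ on the left is correctly identified with the index $j_i \in [n]$ used to apply the row-incoherence hypothesis of $\BL$, and that $\BW_{\BC}^T\BW_{\BL}$ is a contraction despite the two left singular-vector matrices having different sizes.
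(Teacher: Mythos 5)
Your proposal is correct and follows essentially the same route as the paper's proof: both start from $\BV_{\BC}^T=\BSigma_{\BC}^{-1}\BW_{\BC}^T\BC$, substitute $\BC=\BW_{\BL}\BSigma_{\BL}(\BV_{\BL}(J,:))^T$, bound $\left\|(\BV_{\BL}(J,:))^T\bm{e}_i\right\|_2$ by the row incoherence of $\BV_{\BL}$, and convert $\sigma_{\max}(\BL)/\sigma_{\min}(\BC)$ into $\kappa(\BL)\left\|\BC^\dagger\right\|_2/\left\|\BL^\dagger\right\|_2$. The only difference is cosmetic (you bound all factors in one submultiplicativity step, and you make the index identification $i\mapsto j_i$ explicit where the paper leaves it implicit).
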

\begin{proof}
Notice that ${\BV}_{\BC}^T=\BSigma_{\BC}^{-1}{\BW}_{\BC}^T{\BC}$. Thus for any $i$,
\begin{align*}
\left\|{\BV}_{\BC}^T \bm{e}_i \right\|_2&=\left\|\BSigma_{\BC}^{-1}{\BW}_{\BC}^T{\BC}\bm{e}_i\right\|_2\\
& \leq\left\|\BSigma_{\BC}^{-1}\right\|_2\left\|{\BW}_{\BC}^T\right\|_2\left\|{\BC}\bm{e}_i\right\|_2\\
&\leq\frac{1}{\sigma_{\min}({\BC})}\left\|{\BC}\bm{e}_i\right\|_2.
\end{align*}
Since $\BC=\BL(:,J)=\BW_{\BL}\BSigma_{\BL}(\BV_{\BL}(J,:))^T$, so
\begin{align*}
\frac{1}{\sigma_{\min}(\BC)}\left\|\BC e_i\right\|_2 
&=\frac{1}{\sigma_{\min}({\BC})}\left\|\BW_{\BL}\BSigma_{\BL} (\BV(J,:))^T\bm{e}_i\right\|_2\\
&\leq \frac{\sigma_1(\BL)}{\sigma_{\min}({\BC})} \left\| (\BV_{\BL}(J,:))^T\bm{e}_i\right\|_2\\
&\leq \frac{\sigma_1(\BL)}{\sigma_{\min}({\BC})}\left\| \BV_{\BL}^T\bm{e}_i\right\|_2\\
& \RV{\leq}\kappa({\BL})\frac{\sigma_{\min}({\BL})}{\sigma_{\min}({\BC})}\sqrt{\frac{\mu_2(\BL)r}{n}}\\
&=\kappa({\BL})\frac{\left\|{\BC}^\dagger\right\|_2}{\left\|{\BL}^\dagger\right\|_2}\sqrt{\frac{\mu_2(\BL)r}{n}}.
\end{align*}
\end{proof}

Estimation of the quantity $\frac{\|{\BC}^\dagger\|_2}{\|{\BL}^\dagger\|_2}$ will generally depend heavily on how the column submatrix ${\BC}$ is selected. One way to estimate this is to consider the norm of the pseudoinverse of the corresponding column submatrix of ${\BV}_{\BL}$ as follows.

\begin{lemma}\label{LEM:CoverA}
Let ${\BL}\in\R^{m\times n}$ and ${\BC}={\BL}(:,J)$ for some $J\subseteq[n]$. Then
\[ \frac{\left\|{\BC}^\dagger\right\|_2}{\left\|{\BL}^\dagger\right\|_2} \leq \left\|\left({\BV}_{\BL}(J,:)\right)^\dagger\right\|_2. \]
\end{lemma}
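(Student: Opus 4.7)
The plan is to reduce everything to singular values and then exploit the compact SVD factorization of $\BL$ that carries over verbatim to the submatrix $\BC$. Since both $\BL$ and $\BC$ have rank $r$, we have $\|\BL^\dagger\|_2=1/\sigma_{\min}(\BL)$ and $\|\BC^\dagger\|_2=1/\sigma_{\min}(\BC)$, so the claim is equivalent to
\[ \sigma_{\min}(\BC)\;\geq\;\sigma_{\min}(\BL)\,\sigma_{\min}\!\bigl(\BV_{\BL}(J,:)\bigr). \]

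The key observation is that column selection on $\BL$ only touches $\BV_{\BL}$: writing the compact SVD as $\BL=\BW_{\BL}\BSigma_{\BL}\BV_{\BL}^T$, I would note
\[ \BC \;=\; \BL(:,J) \;=\; \BW_{\BL}\,\BSigma_{\BL}\,\bigl(\BV_{\BL}(J,:)\bigr)^T. \]
Since $\BW_{\BL}$ has orthonormal columns, left-multiplication by $\BW_{\BL}$ preserves all singular values, hence
\[ \sigma_{\min}(\BC) \;=\; \sigma_{\min}\!\bigl(\BSigma_{\BL}\,(\BV_{\BL}(J,:))^T\bigr). \]

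Now the matrix $\BSigma_{\BL}$ is $r\times r$ and invertible (since $\rank(\BL)=r$), so I would apply the standard inequality $\sigma_{\min}(AB)\geq\sigma_{\min}(A)\sigma_{\min}(B)$ in the form valid when $A$ is square and invertible (which follows immediately from $\|ABx\|\geq \sigma_{\min}(A)\|Bx\|$ for every $x$). This gives
\[ \sigma_{\min}(\BC) \;\geq\; \sigma_{\min}(\BSigma_{\BL})\,\sigma_{\min}\!\bigl((\BV_{\BL}(J,:))^T\bigr) \;=\; \sigma_{\min}(\BL)\,\sigma_{\min}\!\bigl(\BV_{\BL}(J,:)\bigr). \]
Taking reciprocals (both sides are positive by the rank assumption) and dividing by $\|\BL^\dagger\|_2$ yields the asserted bound $\|\BC^\dagger\|_2/\|\BL^\dagger\|_2 \leq \|(\BV_{\BL}(J,:))^\dagger\|_2$.

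There is no real obstacle here; the only subtlety is ensuring that the pseudoinverse norms actually equal the reciprocals of the smallest nonzero singular values, which requires the implicit full-rank assumption $\rank(\BC)=\rank(\BL)=r$ (otherwise the right-hand side $\|(\BV_{\BL}(J,:))^\dagger\|_2$ blows up and the bound is vacuously true). I would state this rank assumption explicitly at the start of the proof, then carry out the three-line singular-value computation above.
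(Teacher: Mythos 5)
Your proof is correct and follows essentially the same route as the paper: both start from the factorization $\BC=\BW_{\BL}\BSigma_{\BL}(\BV_{\BL}(J,:))^T$ and bound $\|\BC^\dagger\|_2$, the paper via the reverse-order law for pseudoinverses applied to that product and you via the equivalent singular-value inequality $\sigma_{\min}(\BSigma_{\BL}M)\geq\sigma_{\min}(\BSigma_{\BL})\,\sigma_{\min}(M)$. One small quibble: when $\rank(\BC)<r$ the right-hand side does not ``blow up'' (a pseudoinverse norm is the reciprocal of the smallest \emph{nonzero} singular value), but the full-rank assumption you make explicit is also implicit in the paper's use of the reverse-order law, so this side remark is immaterial to the argument.
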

\begin{proof}
Note that ${\BC} = {\BW}_{\BL}\BSigma_{\BL} \left({\BV}_{\BL}(J,:)\right)^T$.  Then one has
\[ {\BC}^\dagger = \left(\BSigma_{\BL}{\BV}_{\BL}(J,:)^T\right)^\dagger {\BW}_{\BL}^T = \left({\BV}_{\BL}(J,:)^T\right)^\dagger\BSigma_{\BL}^{-1}{\BW}_{\BL}^T, \]
where the first equality comes from the fact that ${\BW}_{\BL}$ has orthonormal columns and ${\BW}_{\BL}^\dagger={\BW}_{\BL}^T$, while the second equality follows from the fact that $\BSigma_{\BL}$ has full column rank and ${\BV}_{\BL}(J,:)^T$ has full row rank. With this in hand, we have
\[\left\|{\BC}^\dagger\right\|_2=\left\|\left(({\BV}_{\BL}(J,:))^T\right)^\dagger \BSigma_{{\BL}}^{-1}{\BW}_{\BL}^T\right\|_2\leq \left\|({\BV}_{\BL}(J,:)^\dagger)^T\right\|_2\left\|\BSigma_{{\BL}}^{-1}\right\|_2 =  \left\|{\BL}^\dagger\right\|_2\left\|({\BV}_{\BL}(J,:))^\dagger\right\|_2,
\] 
whence the result.
\end{proof}

Combining the conclusions of Lemmata~\ref{LEM:ColLeftCoherence}--\ref{LEM:CoverA},  we see that
\begin{equation}\label{EQN:CRightCoherenceGeneral}
    \max_i\left\|{\BV}_{\BC}^T\bm{e}_i\right\|_2 \leq  \kappa({\BL})\sqrt{\frac{|J|}{n}} \sqrt{\frac{\mu_2(\BL)r}{|J|}}\left\|({\BV}_{\BL}(J,:))^\dagger\right\|_2 = \beta\kappa(\BL)\RV{\sqrt{\frac{\mu_2(\BL)r}{|J|}} }.
\end{equation}

Our primary concern now is to attempt to bound the quantity $\beta=\sqrt{\frac{|J|}{n}}\left\|({\BV}_{\BL}(J,:))^\dagger\right\|_2$.  Unfortunately, there are relatively few general results for estimating pseudoinverses of submatrices of orthogonal matrices except in special instances.  We make this into the parameter $\beta$ of Theorem~\ref{THM:BetaBound}.

\subsection{Condition Number}

Many RPCA algorithms require an estimate on the condition number of the matrix that is input into the algorithm to provide good theoretical guarantees.  Here, we estimate $\kappa({\BC})$ in terms of $\kappa({\BL})$ in the case that $\BL$ has a certain incoherence level.  

\begin{lemma}\label{LEM:kCkAbound}
Let ${\BL}\in\R^{m\times n}$ have rank $r$ and satisfy assumption \eqref{EQN:A1}, and suppose $J\subseteq[n]$.  Then 
\begin{enumerate}[leftmargin=1cm]
    \item $\left\|{\BC}\right\|_2\leq\sqrt{\frac{\mu_2(\BL)r|J|}{n}}\left\|{\BL}\right\|_2$
    \item $\kappa({\BC})\leq  \sqrt{\mu_2(\BL) r}\kappa({\BL})\sqrt{\frac{|J|}{n}}\left\|{\BV}_{\BL}(J,:)^\dagger\right\|_2$.
\end{enumerate}
\end{lemma}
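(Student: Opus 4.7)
\textbf{Proof plan for Lemma~\ref{LEM:kCkAbound}.}

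The plan is to exploit the factorization $\BC = \BL(:,J) = \BW_{\BL}\BSigma_{\BL}(\BV_{\BL}(J,:))^T$, which is the same starting point used throughout Section~\ref{SEC:Submatrices}. Since $\BW_{\BL}$ has orthonormal columns, the key quantity governing both $\|\BC\|_2$ and $\|\BC^\dagger\|_2$ reduces to something involving the submatrix $\BV_{\BL}(J,:)$ of the right singular vectors of $\BL$.

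For part~(1), I would use submultiplicativity of the spectral norm on the above factorization to obtain $\|\BC\|_2 \leq \|\BSigma_{\BL}\|_2 \cdot \|\BV_{\BL}(J,:)\|_2 = \|\BL\|_2\,\|\BV_{\BL}(J,:)\|_2$. It then remains to bound $\|\BV_{\BL}(J,:)\|_2$. The straightforward way is via the Frobenius norm: by row incoherence~\eqref{EQN:A1},
\[
\|\BV_{\BL}(J,:)\|_2^2 \leq \|\BV_{\BL}(J,:)\|_F^2 = \sum_{j \in J}\|\BV_{\BL}^T e_j\|_2^2 \leq |J|\cdot \frac{\mu_2(\BL)\,r}{n}.
\]
Substituting this yields the claimed bound $\|\BC\|_2 \leq \sqrt{\mu_2(\BL)\,r\,|J|/n}\,\|\BL\|_2$.

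For part~(2), I would simply combine part~(1) with the pseudoinverse estimate already established in Lemma~\ref{LEM:CoverA}, which gives $\|\BC^\dagger\|_2 \leq \|\BL^\dagger\|_2\,\|\BV_{\BL}(J,:)^\dagger\|_2$. Multiplying the two bounds,
\[
\kappa(\BC) = \|\BC\|_2\,\|\BC^\dagger\|_2 \leq \sqrt{\frac{\mu_2(\BL)\,r\,|J|}{n}}\,\|\BL\|_2\,\|\BL^\dagger\|_2\,\|\BV_{\BL}(J,:)^\dagger\|_2,
\]
and recognizing $\|\BL\|_2\|\BL^\dagger\|_2 = \kappa(\BL)$ yields the stated expression.

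There is really no obstacle here; both parts are routine consequences of the compact SVD factorization of $\BC$ together with a Frobenius-norm bookkeeping argument for the row-sampled right singular vectors and the already-proved Lemma~\ref{LEM:CoverA}. The only subtle point worth flagging is the use of the crude inequality $\|\cdot\|_2 \leq \|\cdot\|_F$, which is why the factor $\sqrt{\mu_2(\BL)\,r\,|J|/n}$ (rather than something smaller) appears; this is nonetheless sharp enough for the downstream applications, where $\beta$ absorbs finer information about $\BV_{\BL}(J,:)^\dagger$.
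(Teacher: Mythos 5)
Your proposal is correct and follows essentially the same route as the paper: both rest on the factorization $\BC=\BW_{\BL}\BSigma_{\BL}(\BV_{\BL}(J,:))^T$, bound $\|\BV_{\BL}(J,:)\|_2$ by $\sqrt{\mu_2(\BL)r|J|/n}$ using the row incoherence, and obtain part (2) by multiplying part (1) with the pseudoinverse estimate of Lemma~\ref{LEM:CoverA}. The only cosmetic difference is that the paper bounds $\|(\BV_{\BL}(J,:))^T\bm{x}\|_2$ via the triangle inequality and Cauchy--Schwarz over coordinates rather than via $\|\cdot\|_2\leq\|\cdot\|_F$; the two give the identical constant.
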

\begin{proof}
Since ${\BL}={\BW}_{\BL}\BSigma_{\BL} {\BV}_{\BL}^T$ and ${\BC}={\BL}(:,J)$, we have ${\BC}={\BW}_{\BL}\BSigma_{\BL} ({\BV}_{\BL}(J,:))^T$. For all $\bm{x}\in\mathbb{R}^{|J|}$ with $\left\|\bm{x}\right\|_2=1$,
\begin{eqnarray*}
\left\|{\BC}\bm{x}\right\|_2&=&\left\|{\BW}_{\BL}\BSigma_{\BL} ({\BV}_{\BL}(J,:))^T\bm{x}\right\|_2\\
&\leq&\left\|\BSigma_{\BL}\right\|_2\left\|({\BV}_{\BL}(J,:))^T\bm{x}\right\|_2\\
&=&\left\|{\BL}\right\|_2\left\|({\BV}_{\BL}(J,:))^T\sum_{i=1}^{|J|}\bm{x}_{i}\be_{i}\right\|_2\\
&\leq&\left\|{\BL}\right\|_2\sum_{i\in J}|\bm{x}_{i}|\left\|({\BV}_{\BL}(J,:))^T\be_{i}\right\|_2\\
&\leq&\left\|{\BL}\right\|_2\sum_{i\in J}|\bm{x}_i|\sqrt{\frac{\mu_2(\BL)r}{n}}\\
&\leq&\left\|{\BL}\right\|_2\sqrt{|J|}\|\bm{x}\|_2\sqrt{\frac{\mu_2(\BL)r}{n}}\\
&=&\sqrt{\frac{\mu_2(\BL)r|J|}{n}}\left\|\BL\right\|_2.
\end{eqnarray*}
To prove the second inequality, note that 
\[ \kappa({\BC}) = \left\|{\BC}\right\|_2\left\|{\BC}^\dagger\right\|_2\leq \sqrt{\frac{\mu_2(\BL)r|J|}{n}}\left\|{\BL}\right\|_2\left\|{\BV}_{\BL}(J,:)^\dagger\right\|_2\left\|{\BL}^\dagger\right\|_2,  \] which gives the required result upon rearranging terms.
\end{proof}

The results above combine to prove the main theorem (Theorem~\ref{THM:BetaBound}) regarding the quantity $\beta$.

\begin{proof}[Proof of Theorem~\ref{THM:BetaBound}]
The proof follows by combining Lemmata~\ref{LEM:ColLeftCoherence}--\ref{LEM:kCkAbound}, and \eqref{EQN:CRightCoherenceGeneral}.
\end{proof}

\subsection{Sparsity}

Our ability to apply a RPCA algorithm to a column submatrix of $\BD$ requires that the corresponding submatrix of $\BS$ remains a sparse matrix. The content of the following proposition is that uniform sampling of sufficiently many ($\cO(r\log n)$) columns yields a $2\alpha$--sparse submatrix if $\BS$ was $\alpha$--sparse.

\begin{proposition}\label{PROP:Sparsity}
Let $\BS$ be an $\alpha$-sparse matrix. We consider two cases of uniform sampling:
\begin{enumerate}[leftmargin=1cm]
\item (With replacement) Consider the submatrix of $\BS\in \R^{m\times n}$ formed by $|{I}|$ rows that were uniformly sampled \textit{with} replacement, namely $\BR$. Assume $|{I}|=cr\log(n)$ with $c \geq \frac{16}{3\alpha r}$, then $\BR\in\R^{|{I}| \times n}$ is $2\alpha$-sparse with probability at least $1-n^{-1}$. Similarly, $\BC\in\R^{m\times |J|}$, formed by $|J|=cr\log(m)$ columns of $\BS$ that were uniformly sampled \textit{with} replacement, is also $2\alpha$-sparse with probability at least $1-m^{-1}$. 
\item (Without replacement) Consider the submatrix of $\BS\in \R^{m\times n}$ formed by $|{I}|$ rows that were uniformly sampled \textit{without} replacement, namely $\BR$. Assume $|{I}|=cr\log(n)$ with $c \geq \frac{8}{\alpha r}$, then $\BR\in\R^{|{I}| \times n}$ is $2\alpha$-sparse with probability at least $1-2n^{-1}$. Similarly, $\BC\in\R^{m\times |J|}$, formed by $|J|=cr\log(m)$ columns of $\BS$ that were uniformly sampled \textit{without} replacement, is also $2\alpha$-sparse with probability at least $1-2m^{-1}$.
\end{enumerate}
\end{proposition}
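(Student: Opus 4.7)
The plan is to treat the row-sparsity and column-sparsity conditions for $\BR$ separately. Since each row of $\BR$ is some row of $\BS$, it contains at most $\alpha n \leq 2\alpha n$ nonzero entries, so the row-sparsity requirement for $2\alpha$-sparsity is automatic. The only nontrivial task is to control the number of nonzero entries in each column of $\BR$ with high probability, and then to do the symmetric argument for $\BC$.

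Fix a column index $j \in [n]$ and let $X_j$ denote the number of sampled rows whose $j$-th entry is nonzero. In the with-replacement case, since column $j$ of $\BS$ has at most $\alpha m$ nonzero entries among its $m$ rows, each of the $|I|$ i.i.d.\ uniform samples independently contributes to $X_j$ with probability $p_j \leq \alpha$. Hence $X_j$ is stochastically dominated by a $\mathrm{Binomial}(|I|, \alpha)$ random variable with mean at most $\alpha|I|$, and the multiplicative Chernoff bound yields $\Prob(X_j \geq 2\alpha |I|) \leq \exp(-\alpha |I|/3)$. A union bound over $j \in [n]$ then gives $\Prob(\BR\text{ is not }2\alpha\text{-sparse}) \leq n \exp(-\alpha|I|/3)$. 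Substituting $|I| = cr\log n$ with $c \geq \tfrac{16}{3\alpha r}$ will produce the claimed probability bound $n^{-1}$; if the exponents do not close exactly under this standard form, I would invoke a slightly sharper Chernoff inequality such as $\Prob(X_j \geq 2\mu) \leq (e/4)^\mu$ to finish.

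In the without-replacement case, $X_j$ follows a hypergeometric distribution with population $m$, at most $\alpha m$ successes, and $|I|$ draws. I would invoke either Hoeffding's reduction (hypergeometric tails are at least as sharp as the corresponding binomial tails) or Serfling's inequality for sampling without replacement to obtain an analogous exponential tail, then union bound over the $n$ columns. The stronger sample-size requirement $c \geq 8/(\alpha r)$ suffices to force the total failure probability below $2n^{-1}$; the factor of $2$ arises from using a two-sided form of the bound or from a slight loss in the hypergeometric reduction. The analogous statements for the column-sampled submatrix $\BC$ follow by applying the identical argument to $\BS^T$ with the roles of $m$ and $n$ interchanged, yielding the stated probabilities in terms of $m$.

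The main bookkeeping challenge is matching the exact constants $\tfrac{16}{3\alpha r}$ and $\tfrac{8}{\alpha r}$ stated in the proposition to the precise failure probabilities $n^{-1}$ and $2n^{-1}$; conceptually there is no difficulty beyond standard concentration-of-measure for Bernoulli (or hypergeometric) sums. The main conceptual point—almost trivial but worth emphasizing—is that uniform random row sampling preserves column sparsity \emph{in expectation} at the level $\alpha|I|$, and Chernoff then lets us pay only a logarithmic oversampling factor to obtain $2\alpha|I|$ deterministically with high probability.
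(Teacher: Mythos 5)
Your proposal is correct and follows essentially the same route as the paper: fix a column, model the sampled entries as Bernoulli indicators with success probability at most $\alpha$, apply an exponential concentration bound ($\mathbb{P}(\sum_i X_i > 2\alpha|I|) \lesssim n^{-2}$), and union-bound over the $n$ columns; the paper uses Bernstein's inequality (and the Gross--Nesme Bernstein variant for sampling without replacement, which carries the factor of $2$ you correctly anticipated) where you use multiplicative Chernoff and Hoeffding/Serfling, but these are interchangeable here and your sharper form $\left(e/4\right)^{\alpha|I|}$ does close the constant $\tfrac{16}{3\alpha r}$ as you suspected. Your explicit observation that the row-direction sparsity of $\BR$ is automatic is a point the paper treats only in passing, and your use of a union bound is cleaner than the paper's product $(1-n^{-2})^n$, which tacitly treats the column events as independent.
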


\begin{proof}
\textbf{Part 1. } We first prove the version of uniform sampling \textit{with} replacement. By the definition of $\alpha$-sparsity, each row of $\BR$ has no more than $\alpha n$ non-zero entries. Moreover, assume the $j$--th column of $\BS$ has exact $\alpha n$ non-zero entries. Let $X_i$ indicate whether $\BR_{i,j}$ is non-zero. One can see that $X_i\sim\mathrm{Ber}(\alpha)$. Consequently, by Bernstein's inequality, we have
\begin{align*}
    \quad~\mathbb{P}\left( \sum_{i\in I} X_i >  2\alpha |I|  \right) 
    &=\mathbb{P} \left( \sum_{i\in I} X_i - \mathbb{E}(X_i) > \alpha|I|  \right) \cr
    &\leq \exp \left( - \frac{\alpha^2 |I|^2}{2|I|\alpha(1-\alpha)+\frac{2}{3}\alpha |I|}  \right) \cr
    &\leq \exp \left( -\frac{\alpha |I|}{2(1-\alpha)+\frac{2}{3}} \right) \cr
    &\leq \exp\left( -2 \log(n) \right)\cr
    &=n^{-2}
\end{align*}
for the $j$--th column of $\BR$, where the last inequality uses the assumption $\alpha |I|\geq \frac{16}{3} \log(n)$. Note that the probability inequality still holds if $\BS$ has less than $\alpha n$ non-zero entries. Furthermore, to have all $n$ columns being $2\alpha$--sparse, we get
\begin{align*}
    \mathbb{P}(\BR \textnormal{ is } 2\alpha\textnormal{-sparse})\geq (1-n^{-2})^n\geq 1-n^{-1}.
\end{align*}
Hence, $\BR$ is $2\alpha$--sparse with high probability; the proof that $\BC$ is $2\alpha$--sparse with high probability is the same and so is omitted.

\textbf{Part 2. } Now, we prove the version of uniform sampling \textit{without} replacement.  By  \cite[Theorem~1]{gross2010note}, the Bernstein inequality for uniform sampling \textit{without} replacement is:
\[
\mathbb{P}\left(\sum_{i=0}^m Y_i>t \right)\leq 2 \exp\left(-\frac{t^2}{4m \sigma^2}\right)
\]
where $Y_i$ is zero-mean random variables with uniform probability and $\sigma$ is the variance. 
Taking the same argument as in Part $1$ except sampling uniformly \textit{without} replacement, we have 
\begin{align*}
    \quad~\mathbb{P}\left( \sum_{i\in I} X_i >  2\alpha |I|  \right) 
    &=\mathbb{P} \left( \sum_{i\in I} X_i - \mathbb{E}(X_i) > \alpha|I|  \right) \cr
    &\leq 2\exp \left( - \frac{\alpha^2 |I|^2}{4|I|\alpha(1-\alpha)}  \right) \cr
    &\leq 2\exp \left( - \frac{\alpha |I|}{4(1-\alpha)}  \right) \cr
    &\leq 2\exp\left( -2 \log(n) \right)\cr
    &=2n^{-2}
\end{align*}
for the $j$--th column of $\BR$, where the last inequality uses the assumption $\alpha |I|\geq  8 \log(n)$. Therefore, 
\begin{align*}
    \mathbb{P}(\BR \textnormal{ is } 2\alpha\textnormal{-sparse})\geq (1-2n^{-2})^n\geq 1-2n^{-1}
\end{align*}
for $\BR$ generated by uniform sampling \textit{without} replacement. 
Similar result holds for $\BC$.
\end{proof}

\section{Simulations}\label{SEC:Simulations}
In this section, the effectiveness and computational efficiency of RCUR (i.e., Algorithm~\ref{ALG:Uniform})  is illustrated on two key imaging applications, video background-foreground separation and face modeling, and that can be summarized in the following three ways: 
\begin{enumerate}
\item We show that running time of RCUR on a video background subtraction task is typically an order of magnitude faster than the state-of-the-art RPCA algorithm. Qualitatively, we also show that the output produced by RCUR is similar to that of RPCA benchmarks.
\item We illustrate the interpretable nature of the dictionary vectors produced by RCUR.
\item We show that the running time of RCUR on face modeling is typically faster than that of RPCA and the output produced by RCUR is comparable to that of RPCA. 
\end{enumerate}  
Moreover, for fair comparison, we use AccAltProj\footnote{AccAltProj \cite{cai2019accelerated} is a fast state-of-the-art RPCA algorithm that accelerates the popular AltProj \cite{netrapalli2014non}.} as the RPCA algorithm in all the numerical experiments, including the RPCA subroutine that is used in Lines~5--6 of Algorithm~\ref{ALG:Uniform} and \ref{ALG:Uniform+Deterministic}.

\subsection{Application to Video Background-Foreground Separation} \label{sec:video bf separation}

Background-foreground separation is  an important tool to extract moving objects from a static background in a video \cite{BT2009}. In this section, we compare the performance of the Robust CUR decomposition with that of RPCA on three public benchmark videos: \textit{Shoppingmall}, \textit{Restaurant} and \textit{OSU} datasets\footnote{\url{http://perception.i2r.a-star.edu.sg/bk\_model/bk\_index.html}.}. The size of each frame of \textit{Shoppingmall} is $256 \times 320$, that
of\textit{ Restaurant} is $120 \times 160$ and that of OSU is $240\times 320$. The total number of frames are 1000 for \textit{Shoppingmall}, 3055 for \textit{Restaurant}, and $1506$ for \textit{OSU}. Each video can be represented as a matrix $\BD\in\mathbb{R}^{m\times n}$ by vectorizing and stacking the frames as columns of the matrix; thus the \textit{Shoppingmall} video produces a matrix in $\R^{81,920\times 1,000}$ while \textit{Restaurant}'s is in $\R^{19,200\times 3,055}$ and \textit{OSU}'s is in $\R^{76,800\times 1506}$.  Then $\BD=\BL+\BS$, where $\BL$ is the background, and $\BS$ contains the foreground elements treated as sparse outliers.

\subsubsection{Runtime and Quality}

We apply Algorithm~\ref{ALG:Uniform}  to $\BD$, the data matrix from  \textit{Shoppingmall}, \textit{Restaurant} and \textit{OSU}.  We take $r=2$ to be the underlying rank of the background for each video sequence.  Then according to Theorem~\ref{THM:MainErrorEstimate}, we randomly choose column $\widetilde{\BC}=\BD(:,J)$ and row $\widetilde{\BR}=\BD(I,:)$ submatrices of size
$m\times 15 r\log(n)$ and $25r\log(m)\times n$,
respectively. Note that $\mu_1(\BL)\neq\mu_2(\BL)$ in many real-world applications, and $\mu_1(\BL)>\mu_2(\BL)$ under this problem setup. Then we apply the selected RPCA algorithm (i.e., AccAltProj \cite{cai2019accelerated})  on $\widetilde{\BC}$ and $\widetilde{\BR}$ (Lines~\ref{AlgLine:RPCA1} and \ref{AlgLine:RPCA2}) and denote the corresponding output low-rank matrices by $\widehat{\BC}$ and $\widehat{\BR}$. We then approximate the static background of the video, which is a low-rank matrix $\BL$, as $\BL\approx\widehat{\BL}=\widehat{\BC}(\widehat{\BC}(I,:))^\dagger \widehat{\BR}$. The foreground $\BS$, which represents the moving objects, is then approximated by $\BD-\widehat{\BL}$. 

\begin{table}[h]
\caption{Video size and runtime. }\label{tab:rt_shoppingmall}
 \centering
 \begin{tabular}{ |c||c|c|c|c|c|} 
\hline
 ~             &frame & frame &   \multicolumn{2}{c|}{runtime (sec)}\cr
 \cline{4-5}

~             & size                           & number                             & RCUR& RPCA            \cr

 \hhline {|=||=|=|=|=|=|}

\textbf{Shoppingmall}  &$256\times 320$              & $1000$                     &   $7.69$ &  $44.30$   \cr
\hline
\textbf{Restaurant}    &$120\times 160$              & $3055$                  &   $3.48$   &  $31.63$  \cr
\hline
\textbf{OSU} &$240\times 320$ &$1506$&10.39&68.62\cr
\hline
\end{tabular}
\end{table}

Table~\ref{tab:rt_shoppingmall} shows the runtime for Algorithm~\ref{ALG:Uniform} as well as the runtime for applying RPCA directly on $\BD$.  In all cases, Algorithm~\ref{ALG:Uniform} runs substantially faster; in particular, on most problems, the runtime is a full order of magnitude faster than that of RPCA.

\begin{figure}[h!]
\vspace{-0.02in}
\centering
\subfloat{\includegraphics[width=.195\linewidth]{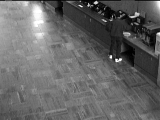}} \hfill
\subfloat{\includegraphics[width=.195\linewidth]{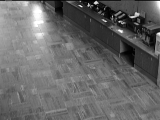}} \hfill
\subfloat{\includegraphics[width=.195\linewidth]{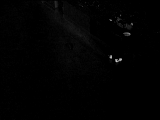}} \hfill
\subfloat{\includegraphics[width=.195\linewidth]{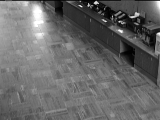}} \hfill
\subfloat{\includegraphics[width=.195\linewidth]{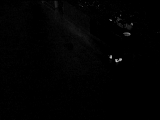}}
\vspace{-0.12in}

\subfloat{\includegraphics[width=.195\linewidth]{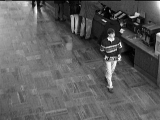}} \hfill
\subfloat{\includegraphics[width=.195\linewidth]{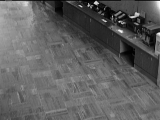}} \hfill
\subfloat{\includegraphics[width=.195\linewidth]{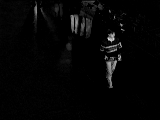}} \hfill
\subfloat{\includegraphics[width=.195\linewidth]{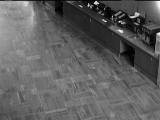}} \hfill
\subfloat{\includegraphics[width=.195\linewidth]{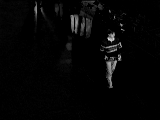}}
\vspace{-0.12in}
\subfloat{\includegraphics[width=.195\linewidth]{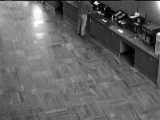}} \hfill
\subfloat{\includegraphics[width=.195\linewidth]{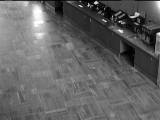}} \hfill
\subfloat{\includegraphics[width=.195\linewidth]{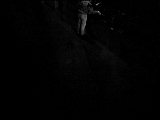}} \hfill
\subfloat{\includegraphics[width=.195\linewidth]{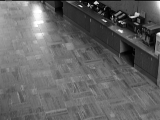}} \hfill
\subfloat{\includegraphics[width=.195\linewidth]{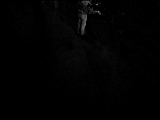}}
\caption{\textit{Restaurant}: The first column contains three randomly selected frames from the original video. The middle two columns are the separated background
and foreground outputs of RCUR, respectively. The right two columns are the separated background and foreground outputs of RPCA, respectively.} \label{FIG:video background subtraction-restaurant}
\vspace{-0.05in}
\end{figure}

\begin{figure}[h!]
\vspace{-0.02in}
\centering
\subfloat{\includegraphics[width=.195\linewidth]{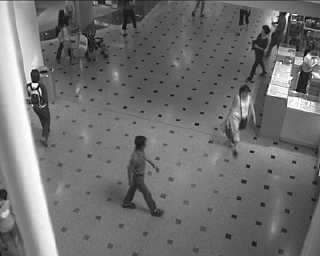}} \hfill
\subfloat{\includegraphics[width=.195\linewidth]{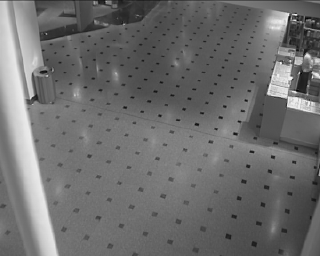}} \hfill
\subfloat{\includegraphics[width=.195\linewidth]{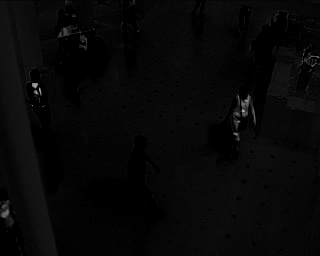}} \hfill
\subfloat{\includegraphics[width=.195\linewidth]{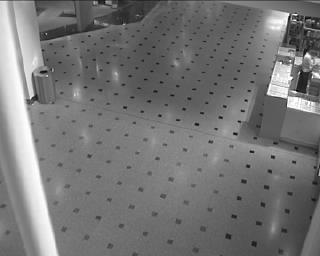}} \hfill
\subfloat{\includegraphics[width=.195\linewidth]{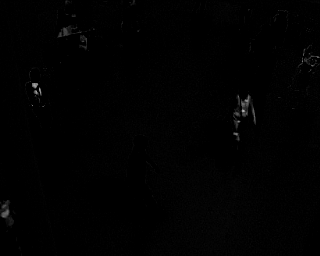}}
\vspace{-0.12in}

\subfloat{\includegraphics[width=.195\linewidth]{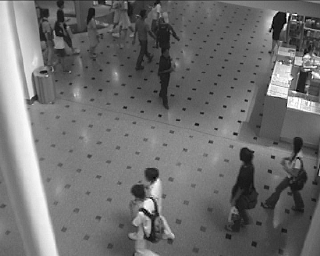}} \hfill
\subfloat{\includegraphics[width=.195\linewidth]{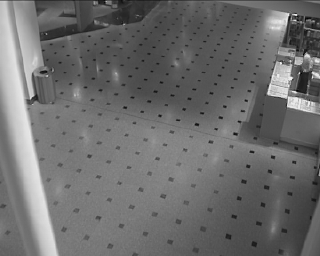}} \hfill
\subfloat{\includegraphics[width=.195\linewidth]{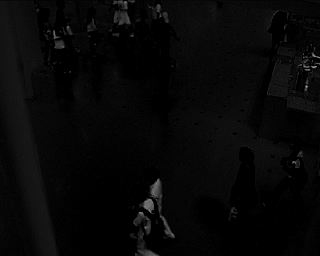}} \hfill
\subfloat{\includegraphics[width=.195\linewidth]{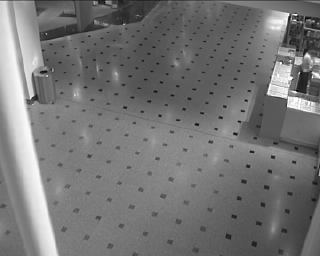}} \hfill
\subfloat{\includegraphics[width=.195\linewidth]{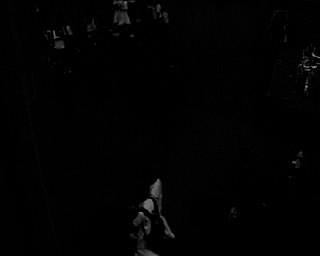}}
\vspace{-0.12in}
\subfloat{\includegraphics[width=.195\linewidth]{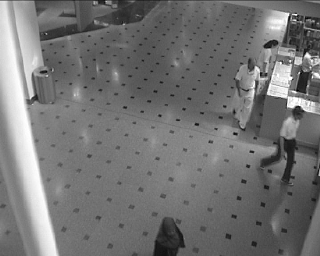}} \hfill
\subfloat{\includegraphics[width=.195\linewidth]{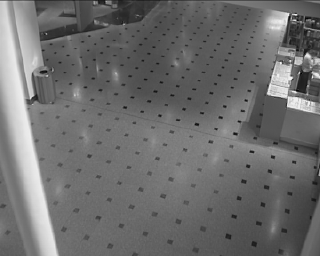}} \hfill
\subfloat{\includegraphics[width=.195\linewidth]{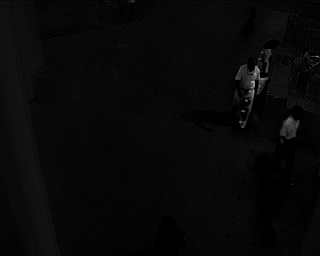}} \hfill
\subfloat{\includegraphics[width=.195\linewidth]{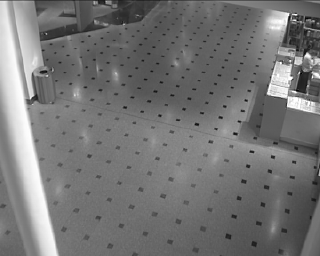}} \hfill
\subfloat{\includegraphics[width=.195\linewidth]{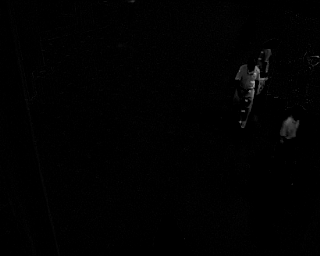}}

\caption{\textit{Shoppingmall}: The first column contains three randomly selected frames from the original video. The middle two columns are the separated background
and foreground outputs of RCUR, respectively. The right two columns are the separated background and foreground outputs of RPCA, respectively.} \label{FIG:video background subtraction-shoppingmall}
\vspace{-0.05in}
\end{figure}

\begin{figure}[h!]
\vspace{-0.02in}
\centering
\subfloat{\includegraphics[width=.195\linewidth]{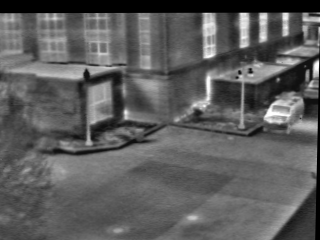}} \hfill
\subfloat{\includegraphics[width=.195\linewidth]{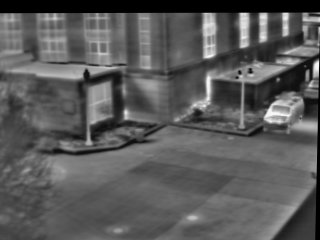}} \hfill
\subfloat{\includegraphics[width=.195\linewidth]{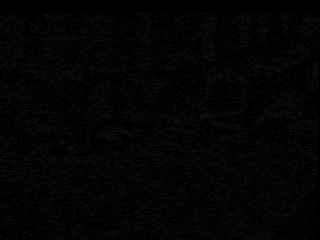}} \hfill
\subfloat{\includegraphics[width=.195\linewidth]{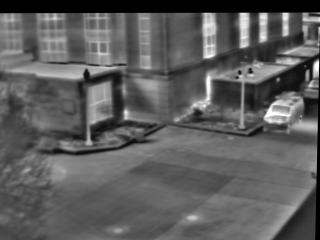}} \hfill
\subfloat{\includegraphics[width=.195\linewidth]{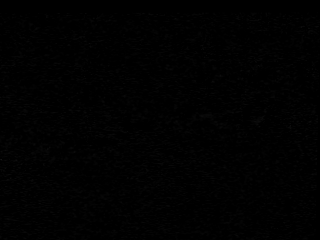}}
\vspace{-0.12in}

\subfloat{\includegraphics[width=.195\linewidth]{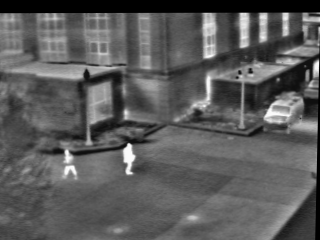}} \hfill
\subfloat{\includegraphics[width=.195\linewidth]{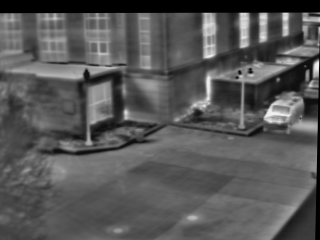}} \hfill
\subfloat{\includegraphics[width=.195\linewidth]{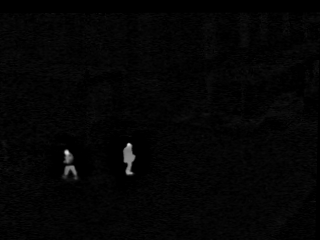}} \hfill
\subfloat{\includegraphics[width=.195\linewidth]{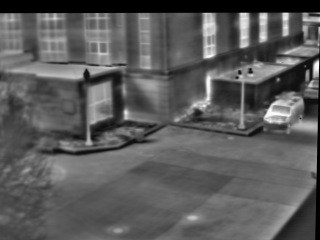}} \hfill
\subfloat{\includegraphics[width=.195\linewidth]{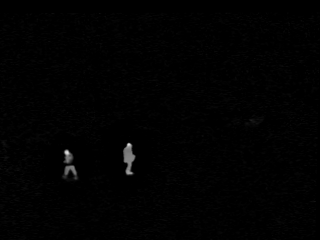}}
\vspace{-0.12in}
\subfloat{\includegraphics[width=.195\linewidth]{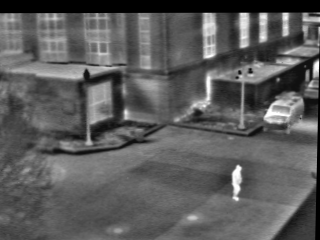}} \hfill
\subfloat{\includegraphics[width=.195\linewidth]{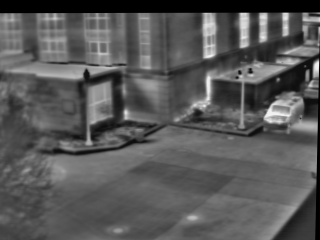}} \hfill
\subfloat{\includegraphics[width=.195\linewidth]{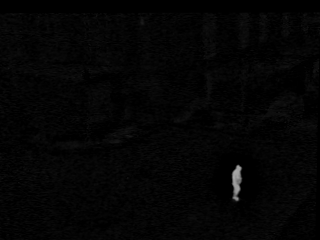}} \hfill
\subfloat{\includegraphics[width=.195\linewidth]{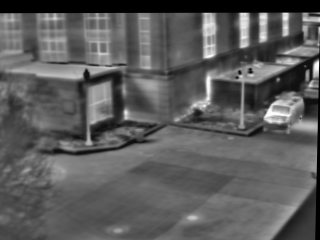}} \hfill
\subfloat{\includegraphics[width=.195\linewidth]{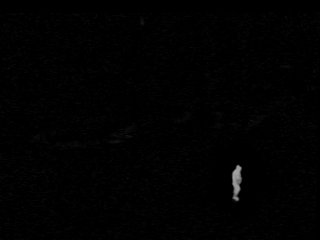}}

\caption{\textit{OSU}: The first column contains three randomly selected frames from the original video. The middle two columns are the separated background
and foreground outputs of RCUR, respectively. The right two columns are the separated background and foreground outputs of RPCA, respectively.} \label{FIG:video background subtraction-OSU}
\vspace{-0.05in}
\end{figure}

Running time is not the whole story, as the quality of the separation of foreground and background is of prime importance as well. Note that in the background-foreground separation tasks within the RPCA framework, there is no ground truth, and thus comparisons must be done qualitatively. Figures~\ref{FIG:video background subtraction-restaurant},  \ref{FIG:video background subtraction-shoppingmall}, and \ref{FIG:video background subtraction-OSU} illustrate the outputs of RCUR and RPCA on  \textit{Restaurant}, \textit{Shoppingmall}, and \textit{OSU} sequences, respectively.  Each figure presents three randomly selected frames from the full video sequence as well as the background and foreground images from each algorithm.  In both cases, we note that the reconstruction algorithms yield comparable results; thus, the significant speed advantage afforded by RCUR is desirable in this particular Robust PCA task.

\subsubsection{Interpretability -- Extracting Canonical Frames}
\begin{figure}[h!]
\vspace{-0.02in}
\centering
\subfloat{\includegraphics[width=.235\linewidth]{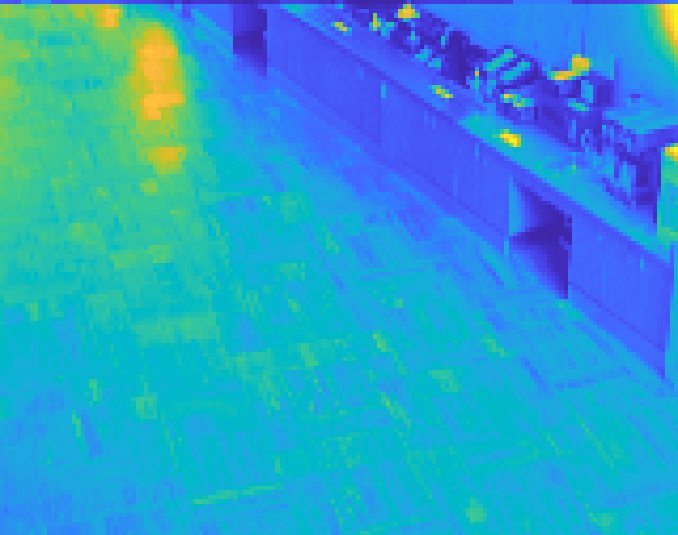}} \hfill
\subfloat{\includegraphics[width=.235\linewidth]{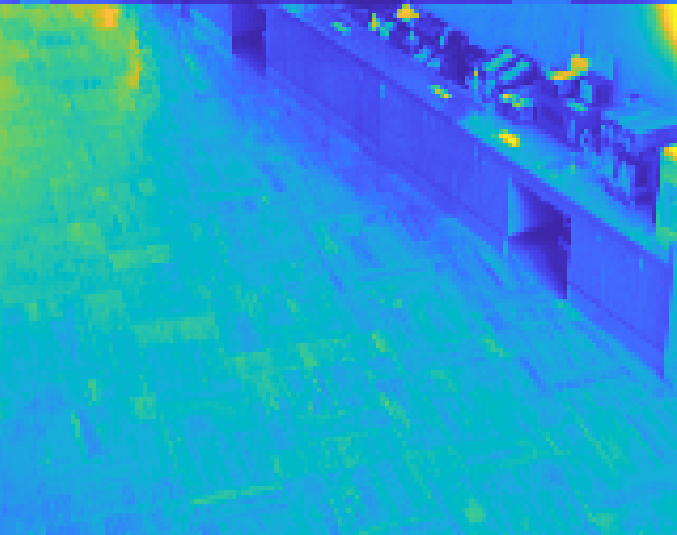}} \qquad
\subfloat{\includegraphics[width=.235\linewidth]{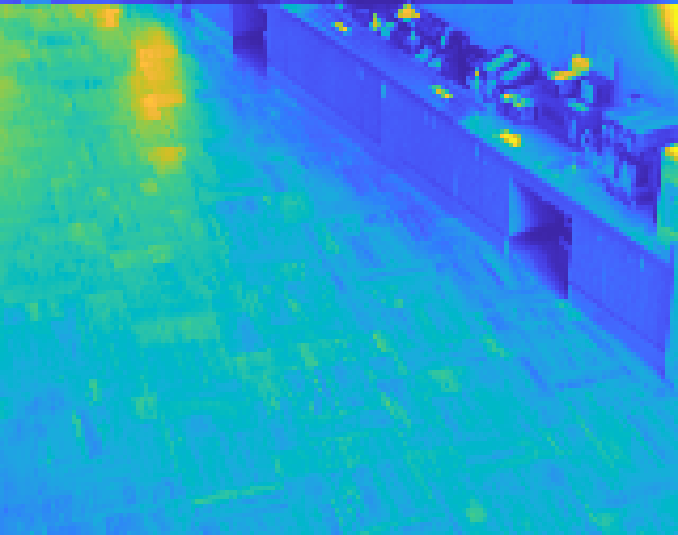}} \hfill
\subfloat{\includegraphics[width=.235\linewidth]{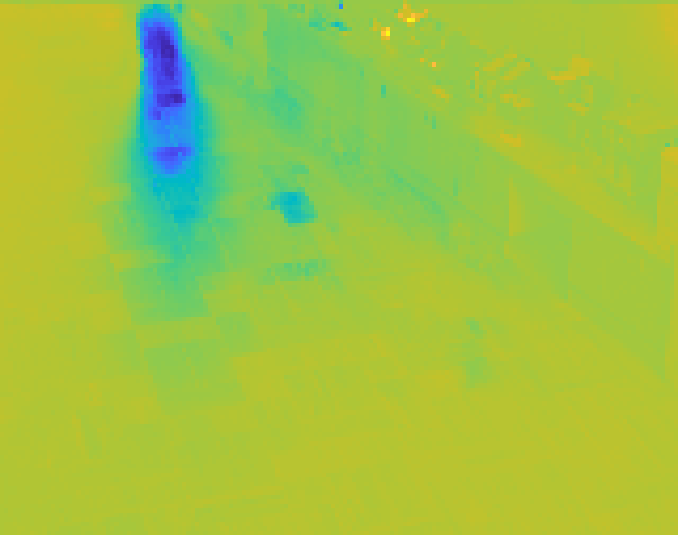}} 
\vspace{-0.12in}

\subfloat{\includegraphics[width=.235\linewidth]{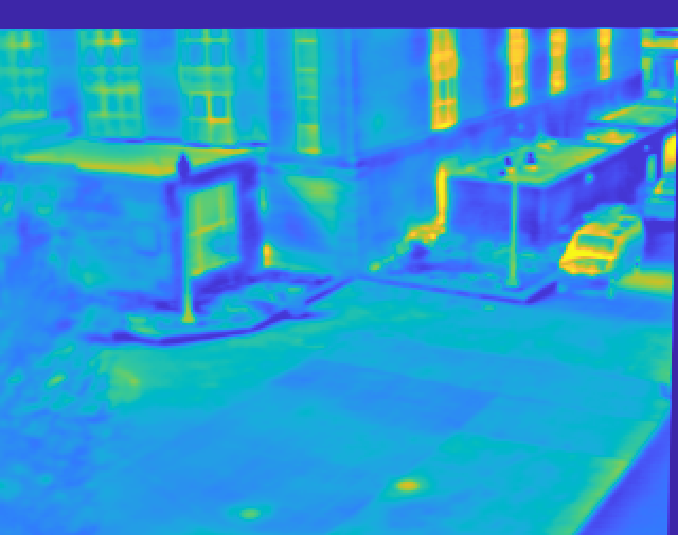}} \hfill
\subfloat{\includegraphics[width=.235\linewidth]{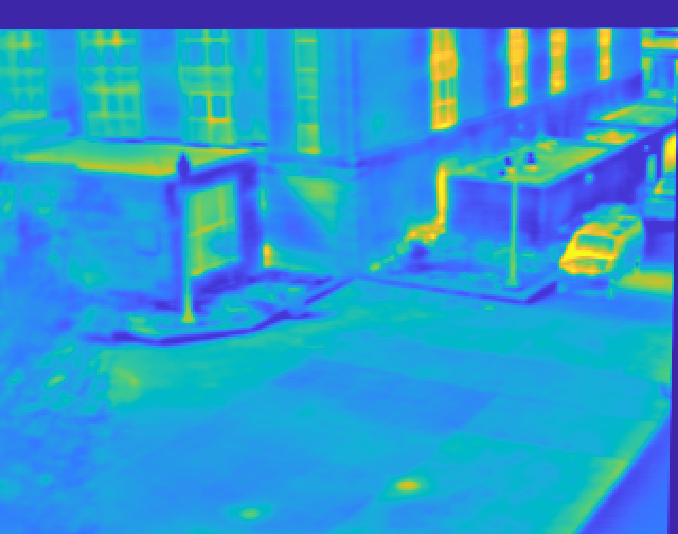}} \qquad
\subfloat{\includegraphics[width=.235\linewidth]{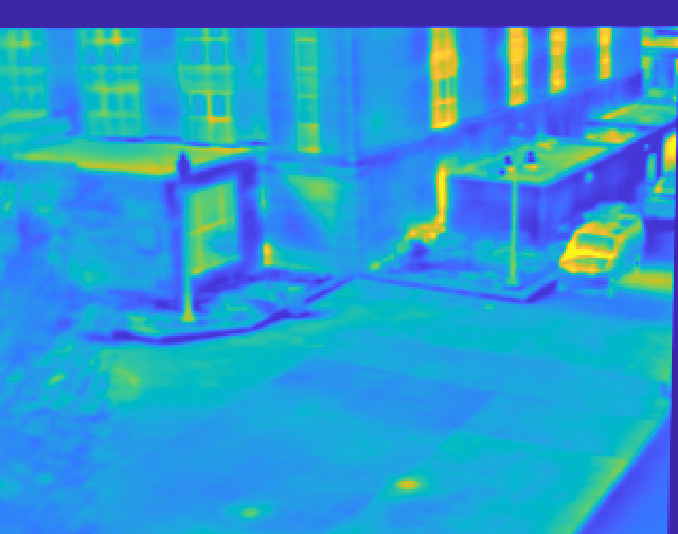}} \hfill
\subfloat{\includegraphics[width=.235\linewidth]{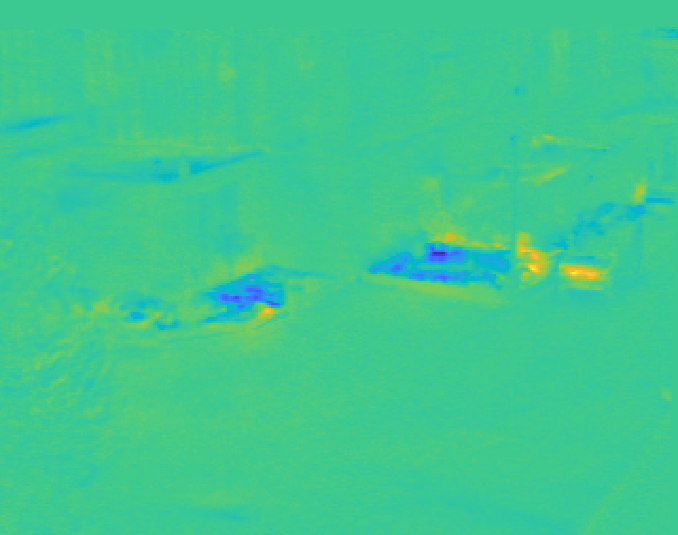}} 
\vspace{-0.12in}

\subfloat{\includegraphics[width=.235\linewidth]{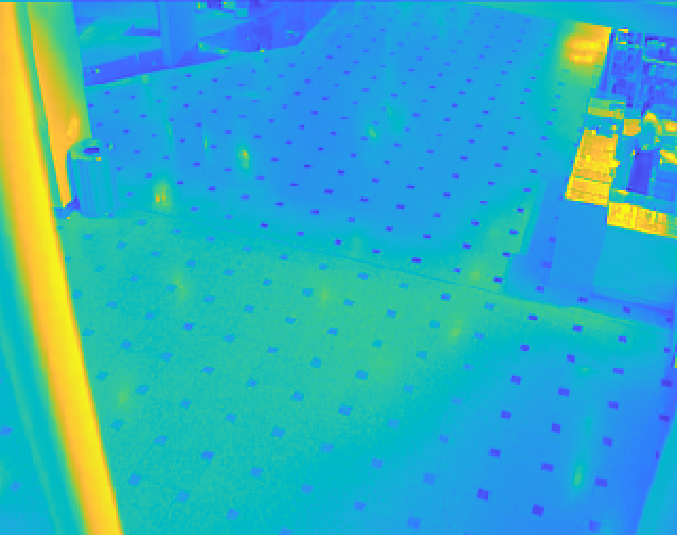}} \hfill
\subfloat{\includegraphics[width=.235\linewidth]{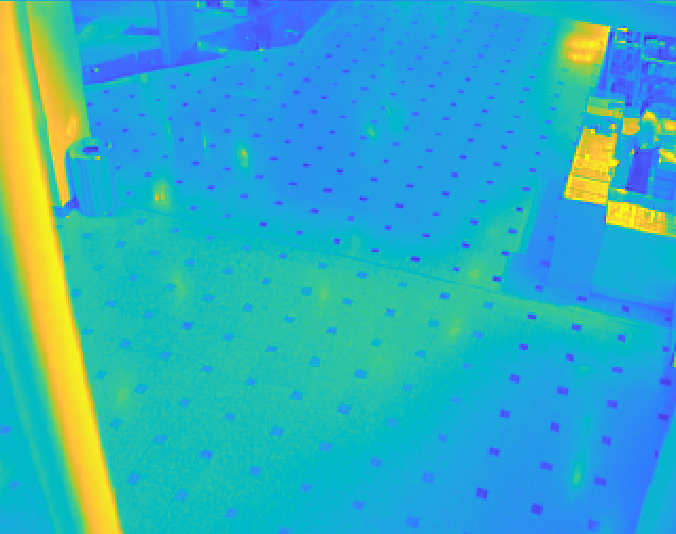}} \qquad
\subfloat{\includegraphics[width=.235\linewidth]{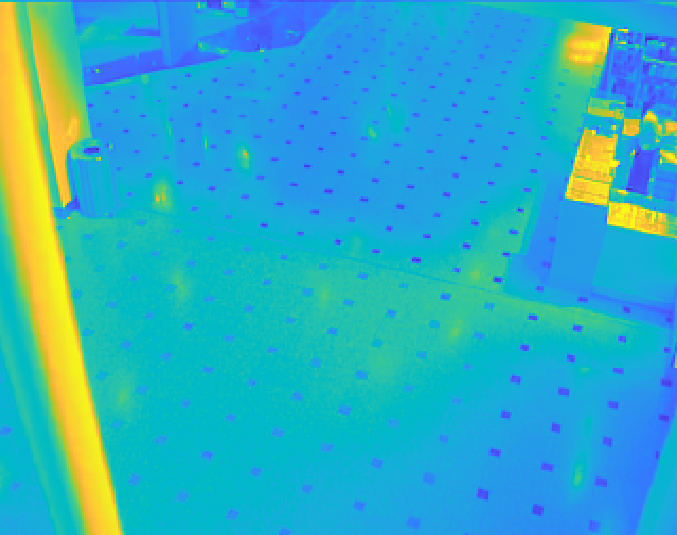}} \hfill
\subfloat{\includegraphics[width=.235\linewidth]{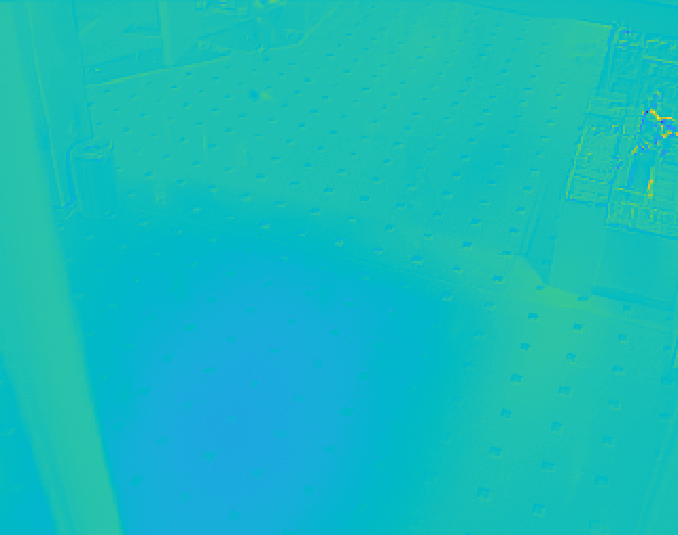}} 

 \caption{ Basis vectors for background of video sequences (Row 1:\textit{ Restaurant}, Row 2: \textit{OSU}, Row 3: \textit{Shoppingmall}).  In each row, the first two columns are the selected frames by using the deterministic CUR algorithm described above, and the last two columns are the first two orthogonal basis vectors of the SVD of the recovered low rank matrix, respectively.}\label{FIG:video background subtraction basis}
\vspace{-0.05in}
\end{figure}

CUR decompositions have long been touted as providing better interpretability of data representations than abstract bases such as those obtained by QR or SVD (see, e.g., \cite{HammHuang,DMPNAS,SorensenDEIMCUR}), the idea being that a subset of the data itself is a good dictionary for the whole.  This is also known as the self-expressivity property of data, and has been used to good effect in solving the subspace clustering problem \cite{AHKS,aldroubi2018similarity,SSC,haeffele2020critique,liu2012robust,vidal2011subspace}.  Additionally, as pointed out in \cite{SorensenDEIMCUR}, column selection can produce a dictionary which explains multiple variances which are not orthogonal to each other, in contrast to standard PCA.  

In the context of background-foreground separation, we explore this aspect of Robust CUR decompositions.  To do so, we use Algorithm~\ref{ALG:Uniform+Deterministic} to first uniformly sample columns (i.e., frames) of the video sequences according to the sampling requirement of Theorem~\ref{THM:MainErrorEstimate}, then following background-foreground separation via Robust PCA, we apply the deterministic sampling method of Algorithm~\ref{ALG:Greedy} to select exactly $r=2$ frames from within those chosen in the first stage.  Resulting from this procedure, we obtain two frames of the video background which are highly representative of the rest of the frames; in particular, they approximately minimize $\left\|\BL-\BC\BC^\dagger\BL\right\|_F$, the error when projecting the rest of the video frames onto the chosen two.  Figure~\ref{FIG:video background subtraction basis} shows the canonical background frames selected from this procedure compared with the first two basis vectors obtained by taking the SVD of the video matrix $\BD$.

Note that in the first row of Figure~\ref{FIG:video background subtraction basis}, the left two panes are the two background frames obtained by Algorithm~\ref{ALG:Uniform+Deterministic}, and represent the two canonical states of the background: one with light shining through the window onto the floor, and one with the light occluded.  In contrast, the second SVD basis vector  captures only the light portion of the background. In the other video sequences, the background is relatively static, and so our Robust CUR method produces two very similar canonical background frames.  In contrast to this, the second SVD basis vectors are uninformative for these sequences as one might expect. Consequently, the CUR basis vectors capture the essential features of the different states of the background, and are easily interpreted by the user.  In contrast to this, the SVD basis vectors may appear meaningless in terms of the actual background.  This difference stems from the fact that the column vectors are approximating actual frames, whereas the SVD is capturing orthogonal images corresponding to decreasing variance.

\subsection{Application to Face Modeling}
In this section, we consider the problem of robust face modeling. We use the Extended Yale Face Database B (abbr. \textit{ExtYaleB}) \cite{GeBeKr01} for a benchmark, which includes the face data of $28$ human objects under $64$ illumination conditions and each face image has size of $168\times 192$. We vectorize the face images and stack the faces of the person together to form a data matrix, so we have $28$ data matrices in $\mathbb{R}^{32,256\times 64}$. Then in terms of $\BD=\BL+\BS$, the underlying face models form the low-rank $\BL$ and the facial occlusions are sparse outlier $\BS$. 
We apply Algorithm~\ref{ALG:Uniform} to each of the face data matrix $\BD$ where $r=1$ is used to extract the face model. Again, since $\mu_2(\BL)>\mu_1(\BL)$ in this problem, we randomly choose column $\widetilde{\BC}=\BD(:,J)$ and row $\widetilde{\BR}=\BD(I,:)$ submatrices of size $m\times 10 r\log(n)$ and $25r\log(m)\times n$, respectively. The rest of the experimental setup is same as Section~\ref{sec:video bf separation}. 

In Table~\ref{tab:face_model}, we report the total runtime for Algorithm~\ref{ALG:Uniform} and RPCA on the face modeling task. Clearly, Algorithm~\ref{ALG:Uniform} runs faster than RPCA but the advantage is not as large as in the video background-foreground separation task. The reason is that the face data matrix is too rectangular (i.e., $n=64$ is too small), which gives less room for acceleration. Moreover, we present the visual face modeling results of a selected human object in Figure~\ref{FIG:face_modeling}, wherein we find both algorithms achieve the desired modeling quality. 
\begin{table}[h]
\caption{Face data size and runtime. }\label{tab:face_model}
 \centering
 \begin{tabular}{ |c||c|c|c|c|c|} 
\hline
 ~             &image & image number &  person & \multicolumn{2}{c|}{total runtime (sec)} \cr
 \cline{5-6}

~             & size                           & per person & number                            & RCUR& RPCA            \cr

 \hhline {|=||=|=|=|=|=|}

\textbf{ExtYaleB}  &$168\times 192$              & $64$  & $28$                   &   $20.16$ &  $31.38$     \cr
\hline
\end{tabular}
\end{table}

\begin{figure}[!h]
\vspace{-0.02in}
\centering
\includegraphics[width=\linewidth]{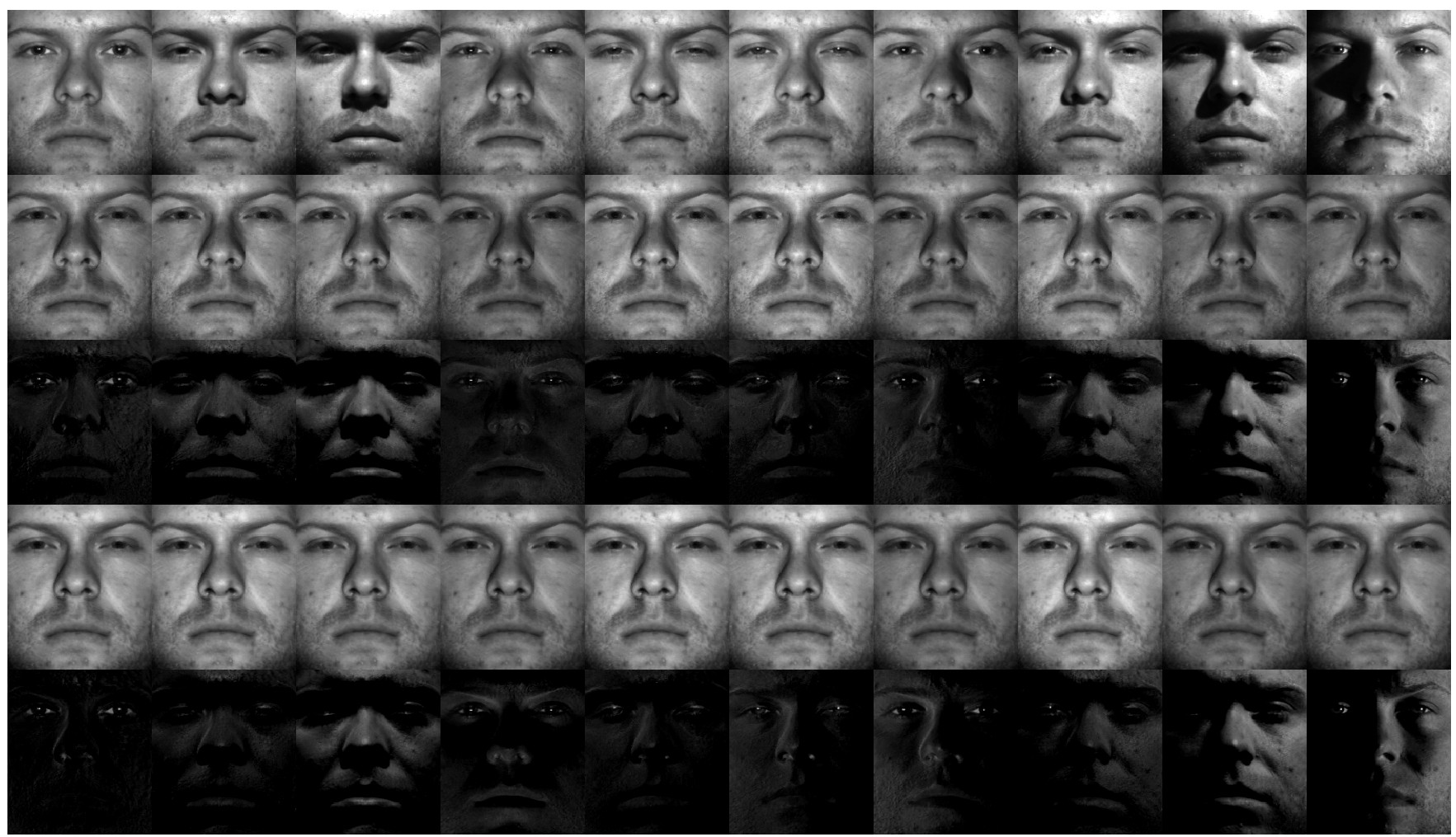}
\caption{Face modeling on \textit{ExtYaleB}: Visual comparison of the outputs by RCUR and RPCA for face modeling task. The first row contains the original face images. The second and third rows are the face models and the facial occlusions outputted by RCUR, respectively. The last two rows are the face models and the facial occlusions outputted by RPCA, respectively. }\label{FIG:face_modeling}
\vspace{-0.05in}
\end{figure}


\bibliographystyle{siamplain}
\bibliography{ref}

\newpage
\appendix

\section{Uniform Sampling and the Proof of Theorem~\ref{COR:BetaDeltaBound}}\label{APP:ProofBeta}

Here, we use a special case of the analysis of Tropp \cite{tropp2011improved} to estimate the quantity $\beta$ in Theorem~\ref{THM:BetaBound}.

\begin{theorem}[{\cite[Lemma 3.4]{tropp2011improved}}]\label{THM:UniformBeta}
Suppose that $\BL\in\R^{m\times n}$ has rank $r$ and satisfies condition \eqref{EQN:A1}, and ${\BV}_{\BL}\in\R^{n\times r}$ are its first $r$ right singular vectors.  Suppose that ${J}\subseteq[n]$ is chosen by sampling uniformly without replacement and that $|{J}|\geq \gamma\mu_2(\BL) r$ for some $\gamma>0$.  Then the quantity $\beta = \sqrt{|{J}|/n} \left\|({\BV}_{\BL}({J},:))^\dagger\right\|_2$ satisfies
\[ \beta \leq \frac{1}{\sqrt{1-\delta}}\quad \textnormal{with probability at least}\quad
1-r\left(\frac{e^{-\delta}}{(1-\delta)^{1-\delta}}\right)^\gamma,
\quad \textnormal{for all}\quad \delta\in[0,1). \]
\end{theorem}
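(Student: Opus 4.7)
The plan is to reduce the bound on $\beta$ to a lower-tail matrix Chernoff inequality for the minimum eigenvalue of the sample Gram matrix of $\BV_{\BL}(J,:)$. Since $\beta=\sqrt{|J|/n}/\sigma_{\min}(\BV_{\BL}(J,:))$, it suffices to show $\sigma_{\min}(\BV_{\BL}(J,:))^{2}\geq (1-\delta)|J|/n$ with the stated probability, and then invert.

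First I would decompose the Gram matrix as a sum of rank-one random matrices: $\BV_{\BL}(J,:)^{T}\BV_{\BL}(J,:)=\sum_{k=1}^{|J|}\bm{x}_k\bm{x}_k^{T}$, where $\bm{x}_k\in\R^{r}$ is the $k$-th sampled row of $\BV_{\BL}$ viewed as a column vector. Since $\BV_{\BL}^{T}\BV_{\BL}=I_r$ and each $\bm{x}_k$ is marginally uniform over the $n$ rows of $\BV_{\BL}$, we get $\mathbb{E}[\bm{x}_k\bm{x}_k^{T}]=I_r/n$, so the mean Gram matrix equals $(|J|/n)I_r$ and $\mu_{\min}:=|J|/n$. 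The row-incoherence hypothesis \eqref{EQN:A1} provides the uniform operator-norm bound $\|\bm{x}_k\bm{x}_k^{T}\|_2=\|\bm{x}_k\|_2^{2}\leq \mu_2(\BL)r/n=:R$. These are exactly the ingredients needed for a matrix Chernoff estimate.

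Next I would apply the lower-tail matrix Chernoff inequality specialized to uniform sampling without replacement, yielding
\[
\Pr\!\left[\lambda_{\min}\!\Big(\sum_k\bm{x}_k\bm{x}_k^{T}\Big)\leq (1-\delta)\mu_{\min}\right]\leq r\!\left(\frac{e^{-\delta}}{(1-\delta)^{1-\delta}}\right)^{\mu_{\min}/R}.
\]
By hypothesis $\mu_{\min}/R = |J|/(\mu_2(\BL)r) \geq \gamma$, and since the base $e^{-\delta}/(1-\delta)^{1-\delta}$ lies in $(0,1)$ for $\delta\in(0,1)$, the exponent can be replaced by $\gamma$, matching the target failure probability. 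On the complementary event, $\sigma_{\min}(\BV_{\BL}(J,:))\geq\sqrt{(1-\delta)|J|/n}$, which delivers $\beta\leq 1/\sqrt{1-\delta}$.

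The main obstacle is that under sampling without replacement the summands $\bm{x}_k\bm{x}_k^{T}$ are \emph{not} independent, so the textbook Tropp matrix Chernoff bound does not apply verbatim. The standard remedy, which is essentially the content Tropp packages in Lemma~3.4, is to invoke the matrix Chernoff inequality derived directly for sampling without replacement, either by adapting the Laplace-transform argument to negatively associated summands, or by passing through a Gross--Nesme-style reduction from sampling without replacement to sampling with replacement for convex, subadditive functionals of PSD matrices. Either route preserves both the exponential rate in $\gamma$ and the prefactor $r$ coming from a union bound over eigenvalues.
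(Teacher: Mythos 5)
Your proposal is correct and is essentially the argument behind the result as it appears in the literature: the paper itself gives no proof here, simply importing the statement as Lemma~3.4 of Tropp's work, whose proof is exactly the reduction you describe (lower-tail matrix Chernoff for the Gram matrix $\sum_k \bm{x}_k\bm{x}_k^T$ with $\mu_{\min}=|J|/n$ and $R=\mu_2(\BL)r/n$, extended to sampling without replacement via the Gross--Nesme domination argument). The only point worth making explicit is that on the complementary event $\lambda_{\min}>0$, so $\BV_{\BL}(J,:)$ has full column rank and $\left\|(\BV_{\BL}(J,:))^\dagger\right\|_2=1/\sigma_{\min}(\BV_{\BL}(J,:))$, which justifies the inversion step.
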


\begin{proof}[Proof of Theorem~\ref{COR:BetaDeltaBound}]
Combine Theorems~\ref{THM:UniformBeta} and \ref{THM:BetaBound}.
\end{proof}

\section{Error Analysis and Proof of Theorem~\ref{THM:MainErrorEstimate}}\label{APP:ProofMain}

In this section we provide a combined error analysis of RPCA (in particular, AltProj \cite{netrapalli2014non}) and CUR approximations to obtain the main result concerning Algorithm~\ref{ALG:Uniform}, namely Theorem~\ref{THM:MainErrorEstimate}.

\subsection{RPCA}
Convergence has been widely studied for many RPCA algorithms \cite{netrapalli2014non}. Here, we list one for $\mathrm{AltProj}$ \cite{netrapalli2014non}, which is one of the classic non-convex RPCA algorithm. We emphasize that our results will stand with other RPCA convergence theories with slight variation.

\begin{theorem}[{\cite[Theorem~1]{netrapalli2014non}}]\label{thm: rpca}
Let $\BL$ and $\BS$ satisfy Assumption~\eqref{EQN:A1} and \eqref{EQN:A2} with $\alpha\leq\cO\left(\frac{1}{(\mu_1(\BL)\vee\mu_2(\BL)) r}\right)$, 
respectively. With properly chosen parameters, the output of $\mathrm{AltProj}$, $\BL_k$, satisfies 
\begin{equation*}
    \left\|\BL-\BL_k\right\|_2\leq\eps, \quad \left\|\BS-\BS_k\right\|_\infty\leq\frac{\varepsilon}{\sqrt{mn}} \quad\textnormal{ and }\quad \mathrm{supp}(\BS_k)\subseteq \mathrm{supp}(\BS)
\end{equation*}
in $k=\cO\left(r\log\left(\frac{\left\|\BL\right\|_2}{\eps  }\right)\right)$ iterations.
\end{theorem}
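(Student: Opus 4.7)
The plan is to prove convergence of $\mathrm{AltProj}$ by analyzing it as a staged alternating projection scheme. $\mathrm{AltProj}$ proceeds in $r$ outer stages indexed by a target rank $k=1,2,\dots,r$, and within stage $k$ it alternates between (i) a rank-$k$ truncated SVD of $\BD-\BS_t$ to produce $\BL_{t+1}$, and (ii) a hard-thresholding of $\BD-\BL_{t+1}$ at a geometrically decreasing level $\zeta_t$ to produce $\BS_{t+1}$. I would prove the theorem by induction on the stage, with inductive hypothesis that at the start of stage $k$ the current iterate satisfies $\|\BL-\BL_t\|_\infty\lesssim \sigma_{k+1}(\BL)$ (with $\sigma_{r+1}(\BL):=0$), $\mathrm{supp}(\BS_t)\subseteq\mathrm{supp}(\BS)$, and $\BL_t$ inherits the incoherence of $\BL$ up to absolute constants.

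The technical core reduces to four lemmas. \textbf{(1) Incoherence propagation.} If $\BS-\BS_t$ is $2\alpha$-sparse and $\BL$ is $(\mu_1,\mu_2)$-incoherent, then the rank-$k$ truncated SVD of $\BL+(\BS-\BS_t)$ remains incoherent with comparable constants; this is where the sparsity budget $\alpha\lesssim 1/(\mu r)$ is used, via a Davis--Kahan-style perturbation of singular subspaces combined with the observation that inner products between incoherent and sparse vectors are small, so that $\|\BS-\BS_t\|_2\lesssim \alpha\sqrt{mn}\,\|\BS-\BS_t\|_\infty$ is controllable. \textbf{(2) $\ell_\infty$-to-spectral conversion.} Incoherence of both $\BL$ and $\BL_{t+1}$ yields $\|\BL-\BL_{t+1}\|_\infty\lesssim \frac{\mu r}{\sqrt{mn}}\|\BL-\BL_{t+1}\|_2$, so spectral convergence automatically gives entrywise convergence. \textbf{(3) Support recovery.} Choosing $\zeta_t$ just above $\|\BD-\BL_{t+1}-\BS\|_\infty=\|\BL-\BL_{t+1}\|_\infty$ ensures $\mathrm{supp}(\BS_{t+1})\subseteq\mathrm{supp}(\BS)$ and $\|\BS-\BS_{t+1}\|_\infty\leq 2\zeta_t$. \textbf{(4) One-step contraction.} Combining (1)--(3), one inner iteration halves the spectral error: $\|\BL-\BL_{t+1}\|_2\leq \tfrac{1}{2}\|\BL-\BL_t\|_2+c\,\sigma_{k+1}(\BL)$.

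Chaining these lemmas, each stage needs $\cO(\log(\|\BL\|_2/\eps))$ inner iterations to drive the spectral error below $c\,\sigma_{k+1}(\BL)$, which re-establishes the inductive hypothesis at stage $k+1$ (here one uses that dropping the small rank-$(k{+}1)$ component introduces an error of order $\sigma_{k+1}(\BL)$, the so-called rank peeling). Summing across the $r$ stages yields the iteration count $\cO(r\log(\|\BL\|_2/\eps))$. At termination, (2) converts the spectral bound $\|\BL-\BL_k\|_2\leq\eps$ into $\|\BS-\BS_k\|_\infty\leq\eps/\sqrt{mn}$, and the support containment persists from (3) by induction.

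The main obstacle will be Lemma (1), the incoherence propagation step, because the perturbation $\BS-\BS_t$ is not small in spectral norm and standard Wedin/Davis--Kahan bounds alone cannot control the coordinate-wise behavior of the top-$k$ singular subspaces of $\BL+(\BS-\BS_t)$. The right tool is the interplay between incoherence and sparsity highlighted in the rank-sparsity uncertainty principle remark: one proves a bootstrap-style bound showing $\|\bm{W}_{\BL_{t+1}}^T\bm{e}_i\|_2$ and $\|\bm{V}_{\BL_{t+1}}^T\bm{e}_i\|_2$ stay $\cO(\sqrt{\mu r/m})$ and $\cO(\sqrt{\mu r/n})$ respectively, and this is exactly where the scaling $\alpha\lesssim 1/((\mu_1(\BL)\vee\mu_2(\BL))r)$ is forced. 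Careful tracking of constants in this perturbation argument, together with correctly calibrating the threshold schedule $\zeta_t$ so that it geometrically decays while always dominating $\|\BL-\BL_{t+1}\|_\infty$, constitutes the bulk of the technical work; once (1) is in hand the remaining lemmas are routine.
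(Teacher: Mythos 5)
This statement is not proved in the paper at all: it is Theorem~1 of \cite{netrapalli2014non}, quoted verbatim as an imported black box, and the only argument the paper supplies is the one-line remark that the original result is stated in the Frobenius norm and the spectral-norm version follows from $\left\|\cdot\right\|_2\leq\left\|\cdot\right\|_F$ --- a conversion your proposal does not mention. So you have set yourself the task of reproving the main theorem of the AltProj paper, which is far more than the present paper attempts or needs.

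Judged against the actual source, your staged outline (rank peeling, inner alternation of truncated SVD and decaying hard thresholding, support containment from the threshold schedule, geometric decay per stage, $\cO(r\log(\|\BL\|_2/\eps))$ total iterations) correctly reproduces the architecture of the original proof. The genuine gap is exactly where you place it: your Lemma~(1). Davis--Kahan/Wedin-type subspace perturbation cannot deliver the required coordinate-wise (incoherence/$\ell_\infty$) control of the truncated SVD of $\BL+\BE$ with $\BE$ sparse but spectrally large, and saying that ``a bootstrap-style bound'' is needed names the obstacle without overcoming it. The tool that actually works is the Neumann-series expansion of the eigenvectors of $\BL+\BE$, $\BW(:,i)=\sum_{j\ge 0}(\BE/\lambda_i)^j(\BL/\lambda_i)\BW(:,i)$, with each term controlled by the sparse-times-incoherent estimates $\|\BS\|_2\le\alpha n\|\BS\|_\infty$ and $\|\BW^T\BE^a\bm{e}_i\|_2\le\sqrt{\mu r/n}\,(\alpha n\|\BE\|_\infty)^a$; this is precisely Lemmata~\ref{lemma:bound of sparse matrix} and \ref{init:lemma:bound_power_vector_norm_with_incoherence} and the argument is carried out in full (for the single initialization step, which is one inner iteration of AltProj) in the proof of Theorem~\ref{thm:initialization bound} in Appendix~\ref{APP:Det2}. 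Without that expansion, and without the careful calibration of $\zeta_t$ against $\sigma_{k+1}(\BL)$ that it enables, your steps (2)--(4) have no foundation, so the proposal as written is a plausible roadmap rather than a proof. For the purposes of this paper, the correct ``proof'' is simply the citation together with the Frobenius-to-spectral norm remark.
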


Note the statement of Theorem~\ref{thm: rpca} in the original paper gives a Frobenius norm bound for $\BL-\BL_k$, but the version stated here follows easily from the fact that $\left\|\cdot\right\|_2\leq\left\|\cdot\right\|_F$.

\subsection{CUR}

Error analysis for CUR decompositions has been considered in various places, typically under assumptions on the column selection scheme.  Below is a sample perturbation bound which is independent of the column selection procedure.

\begin{theorem}[{\cite[Remark 3.14]{HH_PCD2019}}]\label{thm: CUR-Bound} 
Let $\BL$ have rank $r$ and compact SVD $\BL=\BW\BSigma\BV^T$; let $\BC=\BL(:,J)$, $\BR=\BL(I,:)$, and $\BU=\BL(I,J)$ for some $I\subseteq[m]$, $J\subseteq[n]$.  Let $\widehat{\BC}\in\R^{m\times|J|}$ and $\widehat{\BR}\in\R^{|I|\times n}$ be arbitrary, with $\widehat{\BU}=\widehat{\BC}(I,:)$. For any Schatten $p$--norm, if $\sigma_r(\BU)=\sigma_{\min}(\BU)\geq12\max\left\{\left\|\widehat{\BR}-\BR\right\|, \left\|\widehat{\BC}-\BC\right\|\right\}$ then the following holds: 
\begin{multline}
    \left\|\BL-\widehat{\BC}\widehat{\BU}_r^\dagger\widehat{\BR}\right\| \leq \\ \left( \frac76\left(\left\|\BW_{\BL}(I,:)^\dagger\right\|+\left\|\BV_{\BL}(J,:)^\dagger\right\|\right)+\frac{25}{6}\left\|\BW_{\BL}(I,:)^\dagger\right\|\left\|\BV_{\BL}(J,:)^\dagger\right\|+\frac{1}{6}\right)\max\left\{\left\|\widehat{\BR}-\BR\right\|,\left\|\widehat{\BC}-\BC\right\|\right\}.
\end{multline}
\end{theorem}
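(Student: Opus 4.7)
The plan is to start from the exact identity $\BL = \BC\BU^\dagger\BR$, which holds under the hypothesis that $\rank(\BC) = \rank(\BR) = \rank(\BU) = r = \rank(\BL)$ (this is implicit since the bound involves $\BW_\BL(I,:)^\dagger$ and $\BV_\BL(J,:)^\dagger$, which can only have small norm when these have full column rank, forcing $\BU$ to have full rank). Then I would split the error using a three-term telescoping identity:
\begin{equation*}
    \BL - \widehat{\BC}\widehat{\BU}_r^\dagger\widehat{\BR} = (\BC-\widehat{\BC})\BU^\dagger\BR \;+\; \widehat{\BC}(\BU^\dagger-\widehat{\BU}_r^\dagger)\BR \;+\; \widehat{\BC}\widehat{\BU}_r^\dagger(\BR-\widehat{\BR}),
\end{equation*}
and bound the three pieces separately by the triangle inequality for the Schatten $p$-norm.

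The first and third terms are routine. Using $\BU = \BW_{\BL}(I,:)\BSigma_{\BL}\BV_{\BL}(J,:)^T$, an argument parallel to Lemma~\ref{LEM:CoverA} yields $\|\BU^\dagger\| \leq \|\BW_{\BL}(I,:)^\dagger\|\,\|\BV_{\BL}(J,:)^\dagger\|\,\|\BL^\dagger\|$, which together with $\|\BC\|\|\BU^\dagger\|\|\BR\| \leq$ a product involving $\|\BW_{\BL}(I,:)^\dagger\|$ and $\|\BV_{\BL}(J,:)^\dagger\|$ (after cancellation of $\BSigma_{\BL}$) absorbs the first and third terms into the stated constants. I would also use $\|\widehat{\BC}\| \leq \|\BC\| + \|\widehat\BC - \BC\|$ and analogously for $\widehat{\BR}$, keeping track of how the bound $\sigma_{\min}(\BU) \geq 12\max\{\|\widehat\BC-\BC\|,\|\widehat\BR-\BR\|\}$ controls the extra terms.

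The main obstacle is the middle piece, $\widehat{\BC}(\BU^\dagger - \widehat{\BU}_r^\dagger)\BR$, which requires a quantitative perturbation bound for the \emph{truncated} pseudoinverse. The truncation is essential: $\widehat{\BU}$ need not have rank $r$, and the pseudoinverse is discontinuous across rank jumps. Here I would invoke a Wedin-style truncated pseudoinverse perturbation inequality of the form
\begin{equation*}
    \|\BU^\dagger - \widehat{\BU}_r^\dagger\| \;\leq\; C\,\frac{\|\widehat{\BU}-\BU\|}{\sigma_r(\BU)^2},
\end{equation*}
valid whenever $\|\widehat{\BU}-\BU\| \leq \tfrac12 \sigma_r(\BU)$. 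The assumption $\sigma_r(\BU) \geq 12\max\{\|\widehat\BR-\BR\|,\|\widehat\BC-\BC\|\}$ (note $\widehat\BU - \BU = (\widehat\BC - \BC)(I,:)$, so $\|\widehat\BU-\BU\| \leq \|\widehat\BC-\BC\|$) guarantees the hypothesis with generous slack, which is what produces the clean fractions $\tfrac{7}{6}, \tfrac{25}{6}, \tfrac{1}{6}$ after bookkeeping.

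Finally, I would collect the three bounds, factor out $\max\{\|\widehat\BR-\BR\|,\|\widehat\BC-\BC\|\}$, simplify using $\|\BL^\dagger\|\,\sigma_{\min}(\BU)^{-1}$-type cancellations coming from the SVD representation of $\BU$, and verify that the numerical coefficients match the claim. The hardest bookkeeping is extracting the exact constants $\tfrac{7}{6}, \tfrac{25}{6}, \tfrac{1}{6}$ — these come from tight constant-chasing through the truncated pseudoinverse perturbation bound together with the $12$-to-$1$ slack in the hypothesis, and matching them exactly to the statement (rather than obtaining a cruder universal constant) is where most of the effort will go.
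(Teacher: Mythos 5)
First, a point of reference: the paper does not prove this statement at all --- it is imported verbatim from \cite[Remark 3.14]{HH_PCD2019} --- so the comparison here is against the argument in that reference rather than anything in the present text. Your skeleton is the right one and matches that reference in spirit: the exact identity $\BL=\BC\BU^\dagger\BR$ (valid because $\sigma_r(\BU)>0$ forces $\rank(\BU)=\rank(\BC)=\rank(\BR)=r$), the three-term telescoping split, and the cancellations $\BU^\dagger\BR=(\BV_{\BL}(J,:)^T)^\dagger\BV_{\BL}^T$ and $\BC\BU^\dagger=\BW_{\BL}\BW_{\BL}(I,:)^\dagger$, which correctly reduce the first and third terms to single powers of $\|\BV_{\BL}(J,:)^\dagger\|$ and $\|\BW_{\BL}(I,:)^\dagger\|$. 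Your observations that $\widehat{\BU}-\BU=(\widehat{\BC}-\BC)(I,:)$ and that the $12$-fold slack keeps $\widehat{\BU}_r$ at rank $r$ are also correct.

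The gap is in the middle term. If you bound $\|\widehat{\BC}(\BU^\dagger-\widehat{\BU}_r^\dagger)\BR\|$ by $\|\widehat{\BC}\|_2\,\|\BU^\dagger-\widehat{\BU}_r^\dagger\|\,\|\BR\|_2$ and then invoke the Wedin-type \emph{norm} inequality $\|\BU^\dagger-\widehat{\BU}_r^\dagger\|\lesssim\|\widehat{\BU}_r-\BU\|/\sigma_r(\BU)^2$, the prefactor you obtain is on the order of $\sigma_{\max}(\BL)^2\|\BW_{\BL}(I,:)^\dagger\|^2\|\BV_{\BL}(J,:)^\dagger\|^2/\sigma_{\min}(\BL)^2$, i.e.\ an extra $\kappa(\BL)^2$ and \emph{squared} pseudoinverse norms. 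No amount of constant-chasing turns that into the stated $\frac{25}{6}\|\BW_{\BL}(I,:)^\dagger\|\|\BV_{\BL}(J,:)^\dagger\|$, which is condition-number free and linear in each pseudoinverse norm; so the issue is structural, not merely bookkeeping. The repair is to use the Wedin perturbation \emph{identity} for $\widehat{\BU}_r^\dagger-\BU^\dagger$ (a sum of three terms, each carrying a factor of $\BU^\dagger$ or $\widehat{\BU}_r^\dagger$ on its outer edges) and perform the cancellation \emph{inside} the product: each resulting piece then pairs $\widehat{\BC}$ or $\BC$ with an adjacent $\widehat{\BU}_r^\dagger$ or $\BU^\dagger$ to give a quantity of size $\cO(\|\BW_{\BL}(I,:)^\dagger\|)$, pairs $\BR$ with $\BU^\dagger$ to give $\cO(\|\BV_{\BL}(J,:)^\dagger\|)$, and leaves exactly one factor of $\|\widehat{\BU}_r-\BU\|\leq 2\max\{\|\widehat{\BC}-\BC\|,\|\widehat{\BR}-\BR\|\}$ (using that $\|\widehat{\BU}-\widehat{\BU}_r\|\leq\|\widehat{\BU}-\BU\|$ since $\BU$ has rank $r$). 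With that modification your outline becomes, in substance, the proof of \cite{HH_PCD2019}.
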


\begin{lemma}\label{COR: Uniform-U-Bound}
Suppose that $\BL$ satisfies \eqref{EQN:A1}. If $J\subseteq[n]$ with $|J|\geq\gamma_1\mu_2(\BL) r$  and $I\subseteq[m]$ with $|I|\geq\gamma_2\mu_1(\BL) r$ for some $\gamma_1,\gamma_2>0$ are chosen by sampling $[n]$ and $[m]$ uniformly without replacement respectively. Let $\BU=\BL(I,J)$. Then with probability at least $1-r\left(\frac{e^{-\delta}}{(1-\delta)^{1-\delta}}\right)^{\gamma_1}-r\left(\frac{e^{-\delta}}{(1-\delta)^{1-\delta}}\right)^{\gamma_2}$
\begin{equation}
   \left\|\BU^\dagger\right\|_2\leq \frac{1}{(1-\delta)\sigma_{\min}(\BL)}\sqrt{\frac{mn}{|I||J|}}.
\end{equation}
\end{lemma}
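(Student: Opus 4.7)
The plan is to start from the compact SVD of $\BL$ and factor $\BU = \BL(I,J)$ through its singular spaces, then reduce the pseudoinverse bound to two independent applications of Theorem~\ref{THM:UniformBeta}. Explicitly, write
\[
\BU = \BW_{\BL}(I,:)\,\BSigma_{\BL}\,(\BV_{\BL}(J,:))^T.
\]
Under the sampling assumptions of the lemma, Theorem~\ref{THM:UniformBeta} (applied once to $\BV_{\BL}$ and once to $\BW_{\BL}$, the latter by invoking it on $\BL^T$) shows that with the claimed probability both $\BW_{\BL}(I,:)\in\R^{|I|\times r}$ and $\BV_{\BL}(J,:)\in\R^{|J|\times r}$ have full column rank $r$, so on this event $\BU$ has rank $r$.

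The next step is to apply the standard pseudoinverse factorization
\[
(ABC)^\dagger = C^\dagger B^{-1} A^\dagger
\]
valid whenever $A$ has full column rank, $B$ is invertible, and $C$ has full row rank. Taking $A=\BW_{\BL}(I,:)$, $B=\BSigma_{\BL}$, $C=(\BV_{\BL}(J,:))^T$ and using submultiplicativity of the spectral norm together with $\|\BSigma_{\BL}^{-1}\|_2 = 1/\sigma_{\min}(\BL)$ and $\|(\BV_{\BL}(J,:))^T)^\dagger\|_2 = \|\BV_{\BL}(J,:)^\dagger\|_2$, I obtain
\[
\left\|\BU^\dagger\right\|_2 \leq \frac{\left\|\BW_{\BL}(I,:)^\dagger\right\|_2 \left\|\BV_{\BL}(J,:)^\dagger\right\|_2}{\sigma_{\min}(\BL)}.
\]

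To finish, I plug in the $\beta$-type bounds from Theorem~\ref{THM:UniformBeta}: on the good event, $\|\BV_{\BL}(J,:)^\dagger\|_2 \leq \sqrt{n/|J|}/\sqrt{1-\delta}$ with probability at least $1 - r(e^{-\delta}/(1-\delta)^{1-\delta})^{\gamma_1}$, and analogously $\|\BW_{\BL}(I,:)^\dagger\|_2 \leq \sqrt{m/|I|}/\sqrt{1-\delta}$ with probability at least $1 - r(e^{-\delta}/(1-\delta)^{1-\delta})^{\gamma_2}$. A union bound over these two events yields the stated success probability, and multiplying the two pseudoinverse bounds produces the factor $\sqrt{mn/(|I||J|)}/(1-\delta)$, completing the claim.

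The computations are routine; the only subtlety worth highlighting is verifying the pseudoinverse factorization, which requires confirming (on the high-probability event) that the outer factors have the correct full rank so that $(ABC)^\dagger = C^\dagger B^{-1} A^\dagger$ actually applies. This is handled automatically by the same full-rank output of Theorem~\ref{THM:UniformBeta} that we are already conditioning on, so no additional work beyond the union bound is needed.
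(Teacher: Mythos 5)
Your proof is correct and takes essentially the same route the paper intends: the paper's own ``proof'' is a one-line deferral to the argument of \cite[Lemma 3.12]{HH_PCD2019}, which is precisely your factorization $\BU=\BW_{\BL}(I,:)\,\BSigma_{\BL}\,(\BV_{\BL}(J,:))^T$ followed by the full-rank pseudoinverse identity and two applications of Theorem~\ref{THM:UniformBeta} with a union bound. Your write-up correctly supplies the details (including the full-column-rank check needed for $(ABC)^\dagger=C^\dagger B^{-1}A^\dagger$, which follows from the lower bound on $\sigma_{\min}$ implicit in Theorem~\ref{THM:UniformBeta}), so there is nothing to add.
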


\begin{proof}
The proof follows by similar argument to the proof of \cite[Lemma 3.12]{HH_PCD2019}.
\end{proof}

\begin{theorem}\label{THM:AppendixRCURError}
Given the notations and assumptions of Theorem~\ref{THM:MainErrorEstimate}, suppose that $\BC$ and $\BR$ satisfy conditions \eqref{EQN:A1} and \eqref{EQN:A2} with sparsity levels 
$\alpha\leq \cO\left(\frac{1}{(\mu_1(\BC)\vee\mu_2(\BC))r}\right)$ and $\cO\left(\frac{1}{(\mu_1(\BR)\vee\mu_2(\BR)r}\right)$, respectively.
Suppose that $\mathrm{AltProj}$ is run for sufficiently many iterations in Lines~\ref{AlgLine:RPCA1} and \ref{AlgLine:RPCA2} of Algorithm~\ref{ALG:Uniform} such that $\max\left\{\left\|\BC-\widehat{\BC}\right\|_2,\left\|\BR-\widehat{\BR}\right\|_2\right\}\leq \varepsilon \frac{(1-\delta)\sigma_{\min}(\BL)}{12}\sqrt{\frac{|I||J|}{mn}}$. Set $\widehat{\BU}=\widehat{\BC}(I,:)$. Then
\begin{equation}\label{rCUR_IEQ}
    \left\|\BL-\widehat{\BC}\widehat{\BU}^\dagger\widehat{\BR}\right\|_2 \leq \frac{7}{8}\varepsilon\sigma_{\min}(\BL) 
\end{equation}
with probability at least 
$1-\frac{r}{n^{c_2(\delta+(1-\delta)\log(1-\delta))}}-\frac{r}{m^{c_1(\delta-(1-\delta)\log(1-\delta))}}.$
\end{theorem}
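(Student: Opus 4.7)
The plan is to reduce to the deterministic CUR perturbation estimate of Theorem~\ref{thm: CUR-Bound} applied to the approximations $\widehat{\BC}$, $\widehat{\BR}$ of the submatrices $\BC=\BL(:,J)$ and $\BR=\BL(I,:)$, and then to control the two resulting norms of singular vector pseudoinverses via the uniform sampling analysis of Theorem~\ref{COR:BetaDeltaBound}. First I would verify the standing hypothesis of Theorem~\ref{thm: CUR-Bound}, namely that $\sigma_{\min}(\BU) \geq 12\max\{\|\widehat{\BR}-\BR\|_2,\|\widehat{\BC}-\BC\|_2\}$. By Lemma~\ref{COR: Uniform-U-Bound} (under the same sampling regime and good event as below), $\sigma_{\min}(\BU) \geq (1-\delta)\sigma_{\min}(\BL)\sqrt{|I||J|/(mn)}$, which matches $12$ times the assumed upper bound on $\max\{\|\widehat{\BR}-\BR\|_2,\|\widehat{\BC}-\BC\|_2\}$ provided $\varepsilon \leq 1$.

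Next, I would write $a:=\|\BW_{\BL}(I,:)^\dagger\|_2$, $b:=\|\BV_{\BL}(J,:)^\dagger\|_2$, and $E:=\max\{\|\widehat{\BR}-\BR\|_2,\|\widehat{\BC}-\BC\|_2\}$, and invoke Theorem~\ref{thm: CUR-Bound} to obtain
\[
\left\|\BL-\widehat{\BC}\widehat{\BU}^\dagger\widehat{\BR}\right\|_2 \le \left(\tfrac{7}{6}(a+b) + \tfrac{25}{6}ab + \tfrac{1}{6}\right)E.
\]
Applying Theorem~\ref{COR:BetaDeltaBound} to the column sampling and its row analogue gives $a\le \sqrt{m/((1-\delta)|I|)}$ and $b\le \sqrt{n/((1-\delta)|J|)}$ on an event of the claimed probability. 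Substituting these bounds together with the hypothesis $E\le \varepsilon(1-\delta)\sigma_{\min}(\BL)\sqrt{|I||J|/(mn)}/12$ produces a clean cancellation of the dimensional and $(1-\delta)$ factors in each term. A direct calculation bounds $\tfrac{25}{6}abE \le \tfrac{25}{72}\varepsilon\sigma_{\min}(\BL)$; using $\sqrt{|I|/m}+\sqrt{|J|/n}\le 2$ yields $\tfrac{7}{6}(a+b)E \le \tfrac{7}{36}\sqrt{1-\delta}\,\varepsilon\sigma_{\min}(\BL) \le \tfrac{14}{72}\varepsilon\sigma_{\min}(\BL)$; and $\tfrac{1}{6}E \le \tfrac{1}{72}\varepsilon\sigma_{\min}(\BL)$. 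Summing gives $\tfrac{40}{72}\varepsilon\sigma_{\min}(\BL) = \tfrac{5}{9}\varepsilon\sigma_{\min}(\BL)$, which is comfortably below the target $\tfrac{7}{8}\varepsilon\sigma_{\min}(\BL)$.

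For the probability, the sampling-size hypotheses $|I|\ge c_1\mu_1(\BL)r\log m$ and $|J|\ge c_2\mu_2(\BL)r\log n$ are precisely what Theorem~\ref{COR:BetaDeltaBound} requires to deliver the bounds on $a$ and $b$ (with $\gamma_1=c_1\log m$ and $\gamma_2=c_2\log n$), and a union bound over the two failure events yields the stated success probability. The main obstacle is less a single hard step than careful bookkeeping: one needs the two good events to hold simultaneously, both to verify the hypothesis of Theorem~\ref{thm: CUR-Bound} (through Lemma~\ref{COR: Uniform-U-Bound}, which is itself driven by the same Tropp estimate on $\BV_{\BL}(J,:)^\dagger$ and $\BW_{\BL}(I,:)^\dagger$) and to drive the main estimate, so that no additional probability is lost. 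Since the RPCA subroutine provides the deterministic operator-norm control on $\widehat{\BC}-\BC$ and $\widehat{\BR}-\BR$, no structural property of the recovered $\widehat{\BS}$ is needed, and the purely linear-algebraic perturbation inequality closes the argument.
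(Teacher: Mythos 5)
Your proposal is correct and follows essentially the same route as the paper: verify the hypothesis of Theorem~\ref{thm: CUR-Bound} via Lemma~\ref{COR: Uniform-U-Bound}, then combine that perturbation bound with the Tropp-type estimates $\left\|\BW_{\BL}(I,:)^\dagger\right\|_2\leq\sqrt{m/((1-\delta)|I|)}$ and $\left\|\BV_{\BL}(J,:)^\dagger\right\|_2\leq\sqrt{n/((1-\delta)|J|)}$ on the same good event. Your explicit arithmetic (arriving at $\tfrac{5}{9}\varepsilon\sigma_{\min}(\BL)\leq\tfrac{7}{8}\varepsilon\sigma_{\min}(\BL)$) is a detail the paper leaves implicit, and your observation that $\varepsilon<1$ is needed for the $\sigma_{\min}(\BU)$ hypothesis matches the paper's use of the strict inequality.
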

\begin{proof}
Since $I$ and $J$ are chosen uniformly, by Lemma~\ref{COR: Uniform-U-Bound}  we have that with probability at least 
$1-\frac{r}{n^{c_2(\delta+(1-\delta)\log(1-\delta))}}-\frac{r}{m^{c_1(\delta-(1-\delta)\log(1-\delta))}}$
\[
   \left\|\BU^\dagger\right\|_2\leq \frac{1}{(1-\delta)\sigma_{\min}(\BL)}\sqrt{\frac{mn}{|I||J|}}.
\]
In addition, $\max\left\{\left\|\BC-\widehat{\BC}\right\|_2,\left\|\BR-\widehat{\BR}\right\|_2\right\}\leq \varepsilon \frac{\sigma_{\min}(\BL)(1-\delta)}{12}\sqrt{\frac{|I||J|}{mn}}$ and $\widehat{\BU}=\widehat{\BC}(I,:)$. Combining these estimates and noting that $\left\|\BU^\dagger\right\|_2 = \frac{1}{\sigma_{\min}(\BU)}$, we have
\[
  \sigma_r(\BU)\ge(1-\delta)\sigma_{\min}(\BL)\sqrt{\frac{|I||J|}{mn}}>12\varepsilon \frac{(1-\delta)\sigma_{\min}(\BL)}{12}\sqrt{\frac{|I||J|}{mn}}\geq 12\max\left\{\left\|\widehat{\BC}-\BC\right\|_2,\left\|\widehat{\BR}-\BR\right\|_2\right\},
\]
with probability at least 
$1-\frac{r}{n^{c_2(\delta+(1-\delta)\log(1-\delta))}}-\frac{r}{m^{c_1(\delta-(1-\delta)\log(1-\delta))}}$. The conclusion of the theorem follows from applying Theorem~\ref{thm: CUR-Bound} together with estimates of  $\left\|\BW_{\BL}(I,:)^\dagger\right\|_2$ and $\left\|\BV_{\BL}(J,:)^\dagger\right\|_2$ by Theorem~\ref{THM:UniformBeta}, which implies that with the given success probability,
\[\left\|\left(\BV_{\BL}(J,:)\right)^\dagger\right\|_2\leq \sqrt{\frac{n}{(1-\delta)|J|}},\] and similarly for $\BW_{\BL}(I,:)$ by replacing $n$ and $|J|$ by $m$ and $|I|$, respectively.
\end{proof}

\begin{proof}[Proof of Theorem~\ref{THM:MainErrorEstimate}]
Note that the first terms in the lower bound for $|I|$ and $|J|$ make the conclusions of Theorem~\ref{COR:BetaDeltaBound} and Lemma~\ref{COR: Uniform-U-Bound} valid, while the second terms ensure $2\alpha$--sparsity of $\BS(I,:)$ and $\BS(:,J)$ (Proposition \ref{PROP:Sparsity}).  Consequently, $\BC=\BL(:,J)$ and $\BR=\BL(I,:)$ satisfy condition \eqref{EQN:A1} with the incoherence parameters in Theorem~\ref{THM:BetaBound}.   Additionally, to apply Theorem~\ref{thm: rpca} to $\widetilde{\BC}=\BD(:,J)$ and $\widetilde{\BR}=\BD(I,:)$, we require that $2\alpha\leq \cO(\frac{1}{(\mu_1(\BC)\vee\mu_2(\BC))r)})$ and $2\alpha\leq \cO(\frac{1}{(\mu_1(\BR)\vee\mu_2(\BR))r)})$.  Using the bounds of Theorem~\ref{COR:BetaDeltaBound}, which hold with high probability given how $I$ and $J$ were selected, we see that the assumption that $\alpha=\cO(\frac{1-\delta}{\kappa(\BL)^2(\mu_1(\BL)\vee\mu_2(\BL))})$ guarantees that $\alpha$ is of the required order, and that in turn Theorem~\ref{thm: rpca} holds with the given success probability. 

Finally, Theorem~\ref{THM:AppendixRCURError}  holds on account of the above observations and the fact that the iteration count being $\cO\left(r\log\left(\sqrt{\frac{mn}{|I||J|}}\frac{\kappa(\BL)}{(1-\delta)\eps}\right)\right)$ implies  the bounds on $\left\|\BC-\widehat{\BC}\right\|_2$ and $\left\|\BR-\widehat{\BR}\right\|_2$. Hence, we conclude that the outputs $\widehat{\BC},\widehat{\BR}$ of Algorithm~\ref{ALG:Uniform} satisfy
\[ \left\|\BL-\widehat{\BL}\right\|_2=\left\|\BL-\widehat{\BC}(\widehat{\BC}(I,:))^\dagger\widehat{\BR}\right\|_F\leq\frac{7}{8} \eps\sigma_{\min}(\BL),\]
with the given success probability, which completes the proof.

\end{proof}

\section{Uniform$+$Deterministic: Proof of Theorem~\ref{THM:Hybrid}}\label{APP:Hybrid}

The deterministic greedy column selection algorithm of \cite{AB2013} is reproduced as Algorithm~\ref{ALG:Greedy}. Here, we provide the proof of Theorem~\ref{THM:Hybrid}, beginning with a bound on the pseudoinverse of $\BV_{\BL}(:,J)$ assuming that the column submatrix of $\BL$ and $\BD$ are not too far apart.  We will find use for this result later as well when we analyze the purely deterministic column sampling method.

\begin{algorithm}[h!]
 \caption{A deterministic greedy removal algorithm for subset selection \cite{AB2013}}\label{ALG:Greedy}
\begin{algorithmic}[1]
\STATE \textbf{Input: }{$\BX\in\mathbb{R}^{r\times m}$ with $\rank(\RV{\BX})=r$; $k$: sampling size. }

\STATE\textbf{Output: }{$\mathcal{S}\subseteq [m]$: set of cardinality $k$.}

\STATE \textbf{Initialization: }{ $\mathcal{S}_0:=[m]$}
 
\STATE{ Compute the SVD of $\BX_{\mathcal{S}_0}: \BX_{\mathcal{S}_0}=\BU^{(0)}\BSigma^{(0)}\bm{Y}^{(0)}$}
 
\FOR {$i=1,2,\cdots,m-k$}
 
\STATE{~Let the singular values of $\BX_{\mathcal{S}_{i-1}}$ be $\sigma_1^{(i-1)}, \cdots, \sigma_r^{(i-1)}$.}
  
\STATE{~Let the columns of $\bm{Y}^{(i-1)}$ be $\{y_{k}^{(i-1)}\}_{k\in \mathcal{S}_{i-1}}$. Denote by $y_{kj}$ be the $j-$th element of $y_{k}^{(i-1)}$.}
  
\STATE  {~$k_{i}:=\argmin\limits_{k\in \mathcal{S}_{i-1},\left\|y_{k}^{(i-1)}\right\|_2<1}\left(\frac{\sum_{j=1}^{r}(y_{kj}^{(i-1)}/ \sigma_{k}^{(i-1)} )^2 }{1-\left\|y_{k}^{(i-1)}\right\|_2^2} \right)$.}
  
\STATE  {Set $\mathcal{S}_{i}:=\mathcal{S}_{i-1}\setminus\{k_i\}$}
  
\STATE {~Downdate the SVD of $\BX_{\mathcal{S}_{i-1}}$ to obtain an SVD of $\BX_{\mathcal{S}_i}=\BU^{(i)}\BSigma^{(i)}\bm{Y}^{(i)}$ (see \cite{GE1995})}

\ENDFOR

\end{algorithmic}
\end{algorithm}

Recall that Theorem~\ref{THM:Hybrid} analyzes the error of approximation for the hybrid procedure of uniformly sampling column and row submatrices of $\BD=\BL+\BS$, running RPCA on these to output $\widehat{\BC}$ and $\widehat{\BR}$, and finally using Algorithm~\ref{ALG:Greedy} to select exactly $r=\rank(\BL)$ columns and rows of $\widehat{\BC}$ and $\widehat{\BR}$, respectively to give a compact approximation of $\BL$.

\begin{theorem}\label{thm: bpnd2bpnfd}
Let ${\BL}\in\mathbb{R}^{m\times n}$ with $\rank({\BL})=r$. Let $\widetilde{{\BL}}={\BL}+\bm E$ with compact SVD $\widetilde{\BL}=\widetilde{{\BW}}\widetilde{\BSigma}\widetilde{{\BV}}^T$, and suppose $\left\|\bm E\right\|_2\leq\frac{1}{2}\sigma_{\min}({\BL})$. Suppose that ${I}\subseteq[m]$ and $J\subseteq[n]$ with $|I|=|J|=r$ are obtained by applying Algorithm~\ref{ALG:Greedy} on $\widetilde{{\BW}}$ and $\widetilde{{\BV}}$. Suppose that \[\left\|\BL(:,J)-\widetilde{\BL}(:,J)\right\|_2=\left\|{\BW_{\BL}}\BSigma_{\BL} ({\BV_{\BL}}(J,:))^T-\widetilde{{\BW}}\widetilde{\BSigma}(\widetilde{{\BV}}(J,:))^T\right\|_2\leq \frac{1}{8}\sqrt{\frac{|J|}{rn}}\sigma_{\min}({\BL}).\]
Then $\left\|\left({\BV_{\BL}}(J,:)\right)^\dagger\right\|_2\leq 4\kappa({\BL})\sqrt{\frac{rn}{|J|}}$.
\end{theorem}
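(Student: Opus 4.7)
The plan is to bound $\sigma_{\min}(\BV_{\BL}(J,:))$ from below through a four-stage chain of singular-value estimates that begins on the perturbed side, where Algorithm~\ref{ALG:Greedy} and the Avron--Boutsidis guarantee give information, and ends at the matrix of interest. First, applying \cite[Theorem~3.1]{AB2013} to $\widetilde{\BV}$ produces $\|\widetilde{\BV}(J,:)^\dagger\|_2 \leq r\sqrt{n/|J|}$, equivalently $\sigma_{\min}(\widetilde{\BV}(J,:)) \geq \tfrac{1}{r}\sqrt{|J|/n}$. Second, since $\widetilde{\BL}(:,J) = \widetilde{\BW}\widetilde{\BSigma}(\widetilde{\BV}(J,:))^T$ and $\widetilde{\BW}$ has orthonormal columns, one has $\sigma_{\min}(\widetilde{\BL}(:,J)) = \sigma_{\min}(\widetilde{\BSigma}(\widetilde{\BV}(J,:))^T)$; for $|J|=r$ this product is $r\times r$ invertible, so $\|(\widetilde{\BSigma}(\widetilde{\BV}(J,:))^T)^{-1}\|_2 \leq \|\widetilde{\BSigma}^{-1}\|_2 \|\widetilde{\BV}(J,:)^{-T}\|_2$, which gives $\sigma_{\min}(\widetilde{\BL}(:,J)) \geq \sigma_{\min}(\widetilde{\BL})\sigma_{\min}(\widetilde{\BV}(J,:))$. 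Weyl's inequality, combined with the assumption $\|\BE\|_2 \leq \tfrac{1}{2}\sigma_{\min}(\BL)$, then yields $\sigma_{\min}(\widetilde{\BL}) \geq \tfrac{1}{2}\sigma_{\min}(\BL)$, hence a lower bound of the form $c\,\sigma_{\min}(\BL)\sqrt{|J|/n}$ on $\sigma_{\min}(\widetilde{\BL}(:,J))$.

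Third, another application of Weyl's inequality to $\BL(:,J) = \widetilde{\BL}(:,J) - (\widetilde{\BL}(:,J) - \BL(:,J))$, together with the hypothesized bound $\|\widetilde{\BL}(:,J) - \BL(:,J)\|_2 \leq \tfrac{1}{8}\sqrt{|J|/(rn)}\sigma_{\min}(\BL)$, produces $\sigma_{\min}(\BL(:,J)) \geq \sigma_{\min}(\widetilde{\BL}(:,J)) - \tfrac{1}{8}\sqrt{|J|/(rn)}\sigma_{\min}(\BL)$; the hypothesis is tuned precisely so that this remainder is still a constant multiple of $\sigma_{\min}(\BL)\sqrt{|J|/(rn)}$. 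Fourth, I translate back via the compact SVD $\BL(:,J) = \BW_{\BL}\BSigma_{\BL}(\BV_{\BL}(J,:))^T$ and the inequality $\sigma_{\min}(\BL(:,J)) \leq \|\BSigma_{\BL}\|_2 \sigma_{\min}(\BV_{\BL}(J,:)) = \|\BL\|_2\sigma_{\min}(\BV_{\BL}(J,:))$ (itself a consequence of $\sigma_{\min}(AB) \leq \|A\|_2\sigma_{\min}(B)$ applied with $A=\BSigma_{\BL}$ and $B=\BV_{\BL}(J,:)^T$), which gives $\|\BV_{\BL}(J,:)^\dagger\|_2 \leq \|\BL\|_2/\sigma_{\min}(\BL(:,J))$. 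Substituting the third-step estimate and recognizing $\|\BL\|_2/\sigma_{\min}(\BL) = \kappa(\BL)$ produces the claimed bound $4\kappa(\BL)\sqrt{rn/|J|}$.

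The principal technical obstacle is the third step: the perturbation term $\tfrac{1}{8}\sqrt{|J|/(rn)}\sigma_{\min}(\BL)$ must be strictly dominated by the second-step bound on $\sigma_{\min}(\widetilde{\BL}(:,J))$, with enough slack that the final multiplicative constant comes out to $4$ rather than something larger. Making this work requires leveraging the sharper form of the Avron--Boutsidis guarantee (rather than just the coarse consequence $\beta \leq r$ quoted in the main text) and carefully tracking the $\sqrt{r}$-type factor that arises with $|J|=r$; once this bookkeeping is in hand, the remainder of the argument is a routine chain of Weyl-type inequalities and compact-SVD manipulations.
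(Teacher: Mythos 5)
Your skeleton is essentially the paper's own: lower-bound the smallest singular value of the perturbed column submatrix using the selection guarantee on $\widetilde{\BV}(J,:)$ together with Weyl's inequality for $\sigma_{\min}(\widetilde{\BL})$, transfer to the unperturbed submatrix by a perturbation step, and convert back to $\|(\BV_{\BL}(J,:))^\dagger\|_2$ through the factorization $\BL(:,J)=\BW_{\BL}\BSigma_{\BL}(\BV_{\BL}(J,:))^T$. The paper performs the transfer with Stewart's pseudoinverse perturbation bound $\|\BC^\dagger-\widetilde{\BC}^\dagger\|_2\le 2\|\BC^\dagger\|_2\|\widetilde{\BC}^\dagger\|_2\|\BC-\widetilde{\BC}\|_2$ instead of Weyl, but for $\sigma_{\min}$ in the spectral norm these are interchangeable, so that is not a substantive difference.

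The gap is exactly the one you flag at the end, and it is not closable by the means you suggest. With $|J|=r$ and the Avron--Boutsidis guarantee $\|\widetilde{\BV}(J,:)^\dagger\|_2\le r\sqrt{n/|J|}=\sqrt{rn}$, your Step 2 gives
\[
\sigma_{\min}\bigl(\widetilde{\BL}(:,J)\bigr)\;\ge\;\frac{1}{2r}\sqrt{\frac{|J|}{n}}\,\sigma_{\min}(\BL),
\]
while the hypothesis allows a perturbation of size $\frac{1}{8\sqrt r}\sqrt{|J|/n}\,\sigma_{\min}(\BL)$. The difference in your Step 3 is nonpositive once $r\ge 16$, and even when it is positive the resulting constant $\bigl(\tfrac{1}{2r}-\tfrac{1}{8\sqrt r}\bigr)^{-1}$ exceeds the target $4\sqrt r$ for every $r\ge 2$. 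To close the chain with the constant $4$ you would need $\|\widetilde{\BV}(J,:)^\dagger\|_2\lesssim\sqrt{rn/|J|}=\sqrt n$, i.e.\ $\beta\lesssim\sqrt r$ for an exactly-$r$ selection; no such ``sharper form'' of the Avron--Boutsidis guarantee exists --- even the optimal max-volume subset only gives $\|\widetilde{\BV}(J,:)^\dagger\|_2\le\sqrt{1+r(n-r)}\approx\sqrt{rn}$ (Proposition~\ref{COR:MaxVolBetaBound} with $\ell=r$), the same order as the greedy bound. For what it is worth, the paper's own proof closes the constants only by asserting $\|\widetilde{\BV}(J,:)^\dagger\|_2\le\sqrt{rn/|J|}$ with a citation to that proposition, which is a factor $\sqrt r$ stronger than the proposition actually delivers; so the obstacle you identified is real, and an honest version of either argument must either strengthen the hypothesis on $\|\BL(:,J)-\widetilde{\BL}(:,J)\|_2$ by a factor of order $\sqrt r$ or weaken the conclusion to $\cO\bigl(\kappa(\BL)\,r\sqrt{n/|J|}\bigr)$.
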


\begin{proof}
Set $\widetilde{\BC}=\widetilde{{\BW}}\widetilde{\BSigma}(\widetilde{{\BV}}(J,:))^T$ and ${\BC}={\BL}(:,J)={\BW_{\BL}}\BSigma_{\BL} ({\BV_{\BL}}(J,:))^T$. Then we have 
\begin{equation}\label{eqn:VJdagger}
\left\|\left({\BV_{\BL}}(J,:)\right)^\dagger\right\|_2=\left\|{\BC}^\dagger {\BW_{\BL}}\BSigma_{\BL}\right\|_2\leq \sigma_{\max}({\BL})\left\|{\BC}^\dagger\right\|_2.
\end{equation}

Notice that $\rank({\BC})=\rank(\widetilde{\BC})=r$. By \cite[Theorems~3.1--3.4]{Stewart_1977}, we have \[
\left\|{\BC}^\dagger-\widetilde{\BC}^\dagger\right\|_2\leq 2\left\|{\BC}^\dagger\right\|_2\left\|\widetilde{\BC}^\dagger\right\|_2\left\|{\BC}-\widetilde{\BC}\right\|_2.
\]

In addition, by Proposition \ref{COR:MaxVolBetaBound},
\begin{eqnarray}
\left\|\widetilde{\BC}^\dagger\right\|_2 &=&\left\| \left((\widetilde{{\BV}}(J,:)  )^T\right)^\dagger\widetilde{\BSigma}^\dagger\widetilde{{\BW}}^T\right\|_2\nonumber\\
&\leq& \left\|\left(\widetilde{{\BV}}(J,:)\right)^\dagger\right\|_2\left\|\widetilde{\BSigma}^\dagger \right\|_2\nonumber\\
&\leq& \sqrt{\frac{rn}{|J|}}\frac{1}{\sigma_{\min}(\widetilde{{\BL}})}\nonumber\\
&\leq& \sqrt{\frac{rn}{|J|}}\frac{1}{\sigma_{\min}({\BL})-\left\|\BE\right\|_2}\nonumber\\
&\leq& 2\sqrt{\frac{rn}{|J|}}\frac{1}{\sigma_{\min}({\BL})}. \label{eqn:Ddagger}
\end{eqnarray}
Since $\left\|\BW_{\BL}\BSigma_{\BL} ({\BV}_{\BL}(J,:))^T-\widetilde{{\BW}}\widetilde{\BSigma}(\widetilde{{\BV}}(J,:))^T\right\|_2\leq \frac{1}{8}\sqrt{\frac{|J|}{rn}}\sigma_{\min}({\BL})$,  $\left\|{\BC}^\dagger\right\|_2\leq \frac{ \left\|\widetilde{\BC}^\dagger\right\|_2}{1-2\left\|\widetilde{\BC}^\dagger\right\|_2\left\|{\BC}-\widetilde{\BC}\right\|_2}\leq 2\left\|\widetilde{\BC}^\dagger\right\|_2$.
Combining, \eqref{eqn:VJdagger} and \eqref{eqn:Ddagger}, we have 
\begin{equation}
    \left\|{\BV}_{\BL}(J,:)^\dagger\right\|\leq \sqrt{\frac{rn}{|J|}}\frac{4\sigma_{\max}({\BL})}{\sigma_{\min}({\BL})}=4\kappa({\BL})\sqrt{\frac{rn}{|J|}}.
\end{equation}
\end{proof}

\begin{proof}[Proof of Theorem~\ref{THM:Hybrid}]
The assumptions of Theorem~\ref{THM:MainErrorEstimate} imply that $\widetilde{\BC}=\BD(:,J)=\BL(:,J)+\BS(:,J)$ and $\widetilde{\BR}=\BD(I,:)=\BL(I,:)+\BS(I,:)$ satisfy the necessary incoherence and sparsity conditions for RPCA to be successful.  Thus we pick up most of the way through the proof of Theorem~\ref{THM:MainErrorEstimate} above, and note that the new assumption on the iteration count for $\mathrm{AltProj}$ in the statement of this theorem implies by Theorem~\ref{thm: rpca} that we have
\begin{equation}\label{eqn:From_uniform_smp}
\max\left\{\left\|\BC-\widehat{\BC}\right\|_2,\left\|\BR-\widehat{\BR}\right\|_2\right\}\leq  \frac{\varepsilon(1-\delta)^2\sigma_{\min}(\BL)}{ 229\kappa(\BL)^4r\sqrt{mn\mu_1(\BL)\mu_2(\BL)}}.
\end{equation}
According to Theorem~\ref{THM:UniformBeta}, the following statements hold:
\begin{equation} \label{eqn: upper_vl_psinv}
\begin{split}
\left\|\BW_{\BL}(I,:)^\dagger\right\|_2&\leq \sqrt{\frac{m}{(1-\delta)|{I}|}}\\
\left\|\BV_{\BL}(J,:)^\dagger\right\|_2&\leq \sqrt{\frac{n}{(1-\delta)|{J}|}},
\end{split}
\end{equation}
with probability at least $1-\frac{r}{n^{c_2(\delta+(1-\delta)\log(1-\delta))}}-\frac{r}{m^{c_1(\delta-(1-\delta)\log(1-\delta))}}$. 
From \eqref{eqn:From_uniform_smp}, we have 
\begin{eqnarray*}
\left\|\widehat{\BC}_1-\BC(:,{J}_1)\right\|_2&\leq& \frac{\varepsilon(1-\delta)^2\sigma_{\min}(\BL)}{ 229\kappa(\BL)^4r\sqrt{mn\mu_1(\BL)\mu_2(\BL)}},\\
\left\|\widehat{\BR}_1-\BR({I}_1,:)\right\|_2&\leq&\frac{\varepsilon(1-\delta)^2\sigma_{\min}(\BL)}{ 229\kappa(\BL)^4r\sqrt{mn\mu_1(\BL)\mu_2(\BL)}}.
\end{eqnarray*}
In addition, 
    \begin{align*}
        \frac{1}{8}\sqrt{\frac{|J_1|}{r|J|}}\sigma_{\min}(\BC)=\frac{1}{8\sqrt{|J|}}\frac{1}{\left\|\BC^\dagger\right\|_2}\geq \frac{1}{8\sqrt{|J|}}\frac{1}{\left\|\BL^\dagger\right\|_2\left\|\left(\BV_{\BL}(J,:)\right)^\dagger\right\|_2} \geq \frac{\sqrt{1-\delta}\sigma_{\min}(\BL)}{8\sqrt{n}}.
    \end{align*}
    Thus, $\left\|\widehat{\BC}_1-\BC(:,{J}_1) \right\|_2\leq \frac{1}{8}\sqrt{\frac{|J_1|}{r|J|}}\sigma_{\min}(\BC)$. Similarly, we have  $\left\|\widehat{\BR}_1-\BR({I}_1,:)\right\|_2\leq \frac{1}{8}\sqrt{\frac{|I_1|}{r|I|}}\sigma_{\min}(\BR)$. 
By Theorem~\ref{thm: bpnd2bpnfd}, 
\begin{eqnarray*}
\left \|{\BV}_{\BC}({J}_1,:)^\dagger\right\|&\leq&4\kappa({\BC})\sqrt{ |J|},\\
\left \|{\BW}_{\BR}({I}_1,:)^\dagger\right\|&\leq& 4\kappa({\BR})\sqrt{|I|}.
\end{eqnarray*}
Since $\BC(:,{J}_1)=\BW_{\BC}\BSigma_{\BC}(\BV_{\BC}({J}_1,:))^T=\BW_{\BL}\BSigma_{\BL}(\BV_{\BL}(\tilde{J}_1,:))^T$,  we have
\begin{eqnarray*}
\left\|\BV_{\BL}(\tilde{J}_1,:)^\dagger\right\|_2&\leq&\left\| \BV_{\BC}({J}_1,:)^\dagger\right\|_2\left\|\BSigma_{\BL}\right\|_2\left\|\BSigma_{\BC}^{-1}\right\|_2\\
&\stackrel{ Lemma~ \ref{LEM:CoverA}}=&4\kappa(\BC)\sqrt{|J|}\left\|\BL\right\|_2\left\|\BL^\dagger\right\|_2\left\|\BV_{\BL}(J,:)^\dagger\right\|_2\label{eqn:9.3}\\
&\stackrel{Lemma~\ref{LEM:kCkAbound}}\leq&4\kappa(\BL)^2\frac{|J|\sqrt{\mu_2r}}{\sqrt{n}}\left\|\BV_{\BL}(J,:)^\dagger\right\|_2^2\label{eqn:9.4}\\
&\stackrel{ \eqref{eqn: upper_vl_psinv}}\leq& 4\kappa(\BL)^2\frac{ \sqrt{\mu_2(\BL)rn}}{1-\delta}.
\end{eqnarray*}
Similarly, we have
\begin{equation*}
  \left\|\BW_{\BL}({I}_1,:)^\dagger\right\|\leq  4\kappa(\BL)^2\frac{ \sqrt{\mu_1(\BL)rm}}{1-\delta}.
\end{equation*}
Thus ,
\begin{equation*}
    \left\|\BU^\dagger \right\|_2\leq \frac{16\kappa^4(\BL)r\sqrt{\mu_1(\BL)\mu_2(\BL)mn}}{(1-\delta)^2\sigma_{\min}(\BL)},
\end{equation*}
i.e., $\sigma_r(\BU)\geq \frac{(1-\delta)^2\sigma_{\min}(\BL)}{16\kappa^4(\BL)r\sqrt{\mu_1(\BL)\mu_2(\BL)mn}}\geq 12\max\left\{ \left\|\widehat{\BC}_1-\BL(:,\tilde{J}_1)\right\|_2, \left\|\widehat{\BR}_1-\BL(\tilde{I}_1,:)\right\|_2\right\}$. Using the bounds of $\left\|\left(\BW(I,:)\right)^\dagger\right\|_2$, $\left\|\left(\BV(J,:)\right)^\dagger\right\|_2$ and Theorem~\ref{thm: CUR-Bound},  we obtain 
\begin{equation*}
     \left\|\BL-\widehat{\BC}_1\widehat{\BU}_1^\dagger\widehat{\BR}_1\right\|\leq\frac{1}{3}\varepsilon\sigma_r(\BL)
\end{equation*}
after some simple calculations.
\end{proof}

\section{Deterministic Column and Row Selection}\label{APP:Det1}

In this section, we study what happens when we apply the greedy algorithm (Algorithm~\ref{ALG:Greedy}) on the singular vectors of $\BD=\BL+\BS$ to choose column and row submatrices.  We primarily consider how the incoherence of $\BC=\BL(:,J)$ relates to that of $\BL$.  Our main result is that, if we first initialize $\BD$ by one step of Alternating Projections \cite{netrapalli2014non} with a judiciously chosen hard thresholding parameter, then the resulting approximation $\BL_0$ to $\BL$ is good enough to ensure that the incoherence does not inflate significantly.  To achieve this, we study conditions under which the hypothesis of Theorem~\ref{THM:AppendixRCURError} holds, namely such that $\left\|\BL(:,J)-\BL_0(:,J)\right\|_2\leq\frac{1}{8}\sqrt{\frac{|J|}{rn}}\sigma_{\min}(\BL)$, where $\BL_0$ is the output of Algorithm~\ref{ALG:Init}.  To proceed, we first state the initialization required.

\begin{algorithm}[h!]
 \caption{Initialization by One Step of Alternating Projections}\label{ALG:Init}
\begin{algorithmic}[1]
\STATE \textbf{Input: }{$\BD=\BL+\BS$: matrix to be split; $r$: rank of $\BL$; $\eta$: thresholding parameter. }

\STATE\textbf{Output: }{$\BL_0,\BS_0$}

\STATE{$\zeta_{0} = \eta \cdot \sigma_1(\BD)$}

\STATE{$\BS_{0}=\mathcal{T}_{\zeta_{0}}(\BD)$  \qquad\qquad\qquad\qquad\qquad  $\rhd~\mathcal{T_\zeta}$: hard thresholding operation with thresholding value $\zeta$}

\STATE{$\BL_0=\mathcal{D}_r(\BD-\BS_{0})$  \qquad\qquad\qquad\qquad  $\rhd~\mathcal{D}_r$: truncated rank-$r$ SVD operator}
 
\end{algorithmic}
\end{algorithm}

Here, $\mathcal{T}_{\zeta_0}(\BD)$ is the hard thresholding operator, whose output is (entrywise) $\BD_{ij}$ if $\BD_{ij}\geq\zeta_0$ and $0$ otherwise.  Given a matrix $\bm{A}$, $\mathcal{D}_r(\bm{A})$ is its projection onto the set of rank $r$ matrices (e.g., $\mathcal{D}_r(\bm{A})=\BU_{\bm{A},r}\BSigma_{\bm{A},r}\BV_{\bm{A},r}^T$).

\begin{theorem}
\label{thm:initialization bound} Let $\BL\in\mathbb{R}^{m\times n}$ be a rank $r$ $\left\{\mu_1(\BL),\mu_2(\BL)\right\}$-incoherent matrix and $\BS\in\mathbb{R}^{m\times n}$ be an $\alpha$-sparse matrix. 
Let $\mu:=\max\left\{\mu_1(\BL),\mu_2(\BL)\right\}$. 
If the sparsity level satisfies  $\alpha\leq \frac{1}{256\mu^{3/2} r^2 \kappa(\BL)}$ and the thresholding parameter in Algorithm~\ref{ALG:Init} obeys $\frac{\mu r\sigma_{\max}(\BL)}{\sqrt{mn}\sigma_{\max}(\BD)}\leq\eta\leq\frac{3\mu r\sigma_{\max}(\BL)}{\sqrt{mn}\sigma_{\max}(\BD)}$, then the outputs of  Algorithm~\ref{ALG:Init} satisfy
\begin{align*}
\left\|\BL-\BL_0\right\|_2 & \leq 8\alpha\mu r \sigma_{\max}(\BL),\\ \max_i\left\|(\BL_0-\BL)^T\bm{e}_i\right\|_2 &\leq \frac{32\alpha \mu^{1.5} r^{1.5}}{\sqrt{m}}\sigma_{\max}(\BL),\\  
\max_j\left\|(\BL_0-\BL)\bm{e}_j\right\|_2 & \leq \frac{32\alpha \mu^{1.5} r^{1.5}}{\sqrt{n}}\sigma_{\max}(\BL), \\ 
\left\|\BS-\BS_0\right\|_\infty &\leq \frac{\mu r}{n} \sigma_{\max}(\BL),\quad \textnormal{and} \quad
\supp(\BS_0)\subseteq \supp(\BS).
\end{align*}
\end{theorem}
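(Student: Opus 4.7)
The plan is to track the algorithm in its two stages. First, show that the thresholding step produces an $\BS_0$ whose support lies inside $\supp(\BS)$ and whose entries differ from $\BS$ only by something controlled by the magnitude of $\BL_{ij}$. Second, use the fact that $\BL_0$ is the best rank-$r$ approximation of $\BL+(\BS-\BS_0)$ to transfer a bound on $\BS-\BS_0$ to one on $\BL-\BL_0$. The key tool throughout will be the entrywise bound which follows from incoherence: since $\BL=\BW_{\BL}\BSigma_{\BL}\BV_{\BL}^T$, one has
\[
|\BL_{ij}| = |\bm e_i^T \BW_{\BL}\BSigma_{\BL}\BV_{\BL}^T\bm e_j| \leq \|\BW_{\BL}^T\bm e_i\|_2\,\sigma_{\max}(\BL)\,\|\BV_{\BL}^T\bm e_j\|_2 \leq \frac{\sqrt{\mu_1(\BL)\mu_2(\BL)}\,r}{\sqrt{mn}}\sigma_{\max}(\BL).
\]
Combined with Weyl's inequality for $\sigma_{\max}(\BD)$ versus $\sigma_{\max}(\BL)$ (using that $\|\BS\|_2$ is small under the sparsity assumption), the chosen range of $\eta$ makes $\zeta_0$ sit just above $\max_{ij}|\BL_{ij}|$, so any entry of $\BD$ outside $\supp(\BS)$ equals $\BL_{ij}$ and gets zeroed out. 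This gives $\supp(\BS_0)\subseteq\supp(\BS)$.

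Next I would bound $\|\BS-\BS_0\|_\infty$ by splitting into two cases on $\supp(\BS)$: if $|\BD_{ij}|<\zeta_0$ then $\BS_{0,ij}=0$ and $|\BS_{ij}|\leq \zeta_0+|\BL_{ij}|$; otherwise $\BS_{0,ij}=\BD_{ij}$ and $|\BS_{ij}-\BS_{0,ij}|=|\BL_{ij}|$. Both cases are controlled by a small multiple of $\mu r\sigma_{\max}(\BL)/\sqrt{mn}$, which yields the stated $\|\BS-\BS_0\|_\infty$ bound. Since $\BS-\BS_0$ is $\alpha$-sparse and entrywise-bounded, the standard estimate $\|A\|_2\leq \sqrt{\|A\|_1\|A\|_\infty}\leq\alpha\sqrt{mn}\,\|A\|_{\max}$ gives
\[
\|\BS-\BS_0\|_2 \leq 4\alpha\mu r\sigma_{\max}(\BL).
\]

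For the spectral bound on $\BL-\BL_0$, observe $\BL_0=\mathcal{D}_r(\BL+\BE)$ where $\BE:=\BS-\BS_0$. Since $\BL$ has rank $r$, the best rank-$r$ approximation property gives $\|\BL+\BE-\BL_0\|_2\leq\|\BL+\BE-\BL\|_2=\|\BE\|_2$, so by the triangle inequality $\|\BL-\BL_0\|_2\leq 2\|\BE\|_2\leq 8\alpha\mu r\sigma_{\max}(\BL)$, as claimed. The sparsity bound on $\alpha$ ensures $\|\BE\|_2\ll\sigma_{\min}(\BL)$, which will be needed for the next step.

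The main obstacle is the column/row norm bounds on $\BL-\BL_0$, since they require not just a spectral perturbation but a rowwise perturbation of singular subspaces. The plan is to apply a Davis–Kahan / Wedin sin-theta argument to the top rank-$r$ subspaces of $\BL$ and $\BL+\BE$: because the gap $\sigma_{\min}(\BL)-\|\BE\|_2$ is at least a constant fraction of $\sigma_{\min}(\BL)$ by the sparsity assumption, the left and right singular subspaces of $\BL_0$ are within $O(\|\BE\|_2/\sigma_{\min}(\BL))$ of those of $\BL$ in operator norm. Using the incoherence bounds $\|\BV_{\BL}^T\bm e_j\|_2\leq\sqrt{\mu_2(\BL)r/n}$ and $\|\BW_{\BL}^T\bm e_i\|_2\leq\sqrt{\mu_1(\BL)r/m}$ together with the subspace closeness, write
\[
\|(\BL_0-\BL)\bm e_j\|_2 \leq \|\BL_0\BV_0\BV_0^T\bm e_j - \BL\BV_{\BL}\BV_{\BL}^T\bm e_j\|_2 + (\text{subspace error terms}),
\]
and bound each piece using $\|\BE\|_2$, $\sigma_{\max}(\BL)$, $\kappa(\BL)$, and the incoherence of $\BL$. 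After substituting $\|\BE\|_2\leq 4\alpha\mu r\sigma_{\max}(\BL)$ and using $\alpha\leq 1/(256\mu^{3/2}r^2\kappa(\BL))$ to absorb lower-order terms, the $32\alpha\mu^{3/2}r^{3/2}\sigma_{\max}(\BL)/\sqrt{n}$ estimate should emerge. The row bound follows by the same argument applied to $\BL^T$. I expect this last step to be the bottleneck, since keeping careful track of the subspace errors (as opposed to just $\|\cdot\|_2$) is what generates the extra $\sqrt{\mu r}$ factor and is where the bulk of the calculation lies.
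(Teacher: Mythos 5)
Your handling of the thresholding step, the support containment, the $\|\BS-\BS_0\|_\infty$ bound, and the spectral bound $\|\BL-\BL_0\|_2\leq 2\|\BS-\BS_0\|_2\leq 8\alpha\mu r\sigma_{\max}(\BL)$ matches the paper's Parts 1--2 essentially verbatim (the paper uses the symmetric reduction and the bound $\|\BE\|_2\leq\alpha n\|\BE\|_\infty$ for sparse matrices; your rectangular version $\|\BE\|_2\leq\alpha\sqrt{mn}\|\BE\|_\infty$ is the same idea). The Weyl step you invoke for calibrating $\zeta_0$ is unnecessary --- the lower bound $\eta\geq\mu r\sigma_{\max}(\BL)/(\sqrt{mn}\sigma_{\max}(\BD))$ already gives $\zeta_0\geq\|\BL\|_\infty$ directly --- but that is harmless.

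The genuine gap is in your plan for the row/column bounds. A Davis--Kahan/Wedin argument controls $\|\BV_0\BV_0^T-\BV_{\BL}\BV_{\BL}^T\|_2\lesssim\|\BE\|_2/\sigma_{\min}(\BL)$ in \emph{operator norm} only. Feeding that into your decomposition gives, for the cross term, something of order $\sigma_{\max}(\BL)\cdot\alpha\mu r\kappa(\BL)$, which is missing the crucial factor $\sqrt{\mu r/n}$ that makes the claimed bound $\frac{32\alpha\mu^{1.5}r^{1.5}}{\sqrt{n}}\sigma_{\max}(\BL)$ strictly stronger than the trivial estimate $\max_j\|(\BL_0-\BL)\bm{e}_j\|_2\leq\|\BL_0-\BL\|_2\leq 8\alpha\mu r\sigma_{\max}(\BL)$. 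To localize the perturbation onto individual columns you need an $\ell_{2,\infty}$-type (row-wise) control of the perturbed singular subspaces, which sin-theta theorems do not provide. The paper obtains it by expanding the eigenvectors of $\BD-\BS_0$ in a Neumann series around $\BL$, writing $\BL_0-\BL$ as $\bm{Y}_0+\sum_{a+b>0}\bm{Y}_{ab}$ with $\bm{Y}_{ab}$ built from $\BE^a\BL\BW_0\bm{\Lambda}^{-(a+b+1)}\BW_0^T\BL\BE^b$, and then invoking the key lemma $\|\BW^T\BE^a\bm{e}_i\|_2\leq\sqrt{\mu r/n}\,(\alpha n\|\BE\|_\infty)^a$; it is precisely this lemma --- propagating incoherence through powers of the sparse error --- that produces the $1/\sqrt{n}$ factor. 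Your proposal as written has no mechanism playing that role, so the final step would not close.
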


Note that since $\left\|\bm{A}(I,:)\right\|_F^2\leq\sum_{i\in I} \left\|\bm{A}^T \bm{e}_i\right\|_2^2$ and $\left\|\bm{A}(:,J)\right\|_F^2\leq\sum_{j\in J} \left\|\bm{A} \bm{e}_j\right\|_2^2$, Theorem \ref{thm:initialization bound} implies
\begin{align*}
        \left\|\BL(I,:)-\BL_0(I,:)\right\|_F  &\leq \frac{32\alpha \mu^{1.5} r^{1.5}\sqrt{|I|}}{\sqrt{m}}\sigma_{\max}(\BL)\leq \frac{1}{8}\sqrt{\frac{|I|}{rm}}\sigma_{\min}(\BL), \cr 
    \left\|\BL(:,J)-\BL_0(:,J)\right\|_F  &\leq \frac{32\alpha \mu^{1.5} r^{1.5}\sqrt{|J|}}{\sqrt{n}}\sigma_{\max}(\BL)\leq \frac{1}{8}\sqrt{\frac{|J|}{rn}}\sigma_{\min}(\BL), \cr
      \left\|\BL-\BL_0\right\|_F  &\leq \sqrt{2r}\left\|\BL-\BL_0\right\|_2 \leq  \frac{1}{2}\sigma_{\min}(\BL),
\end{align*}
where the second parts of the inequalities follow from bound of $\alpha$.

\begin{remark}\label{REM:GreedyBeta}
Note that the conclusion of Theorem \ref{thm:initialization bound} implies that $\BL_0$ satisfies the necessary requirement for Theorem \ref{thm: bpnd2bpnfd}.  Consequently, we find that the output of Algorithm~\ref{ALG:Greedy} applied to $\BL_0$ yields a column submatrix $\BC=\BL(:,J)$ with parameter $\beta\leq 4\kappa(\BL)\sqrt{r}$, and thus incoherence $\mu_2(\BC)\leq 16\kappa(\BL)^2r$ according to Theorem \ref{THM:BetaBound}.  This bound for $\beta$ is a factor of $\sqrt{r}$ better than simply applying the greedy algorithm directly to $\BV_{\BL}$ due to good initialization to find $\BL_0$.  This method also has the benefit of being tractable as it does not require knowledge of $\BV_{\BL}$.
\end{remark}

\section{Proof of Theorem~\ref{thm:initialization bound}}\label{APP:Det2}

As shown in prior arts \cite{cai2018thesis, cai2019accelerated, cai2021accelerated, netrapalli2014non}, the convergence analysis of the alternating projections based RPCA algorithms can be reduced to the case of symmetric matrices, since non-symmetric matrix recovery problems can be cast as problems with respect to symmetric augmented matrices. For more details about how to reduce the general RPCA problems to symmetric cases, we refer the interested reader to \cite[Section~3.2]{cai2018thesis}. Firstly, We shall present two technical lemmata in the symmetric setting.

\begin{lemma}[{\cite[Lemma~4]{netrapalli2014non}}]  \label{lemma:bound of sparse matrix}
Let $\BS \in \mathbb{R}^{n\times n}$ be an $\alpha$-sparse symmetric matrix. Then, the following inequality holds:
\[
\left\|\BS\right\|_2 \leq \alpha n\left\|\BS\right\|_\infty.
\]
\end{lemma}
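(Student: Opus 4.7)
The plan is to leverage the eigenvalue characterization of the spectral norm for symmetric matrices, together with the row-sparsity hypothesis on $\BS$. Since $\BS$ is symmetric, $\left\|\BS\right\|_2$ equals the largest absolute eigenvalue of $\BS$. So the first step is to fix an eigenpair $(\lambda, v)$ with $|\lambda| = \left\|\BS\right\|_2$ and, since $v \neq 0$, rescale it so that $\left\|v\right\|_\infty = 1$, choosing an index $i_0$ for which $|v_{i_0}| = 1$.

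Next, I would read off the $i_0$-th coordinate of the eigenvalue equation $\BS v = \lambda v$ to get $\lambda v_{i_0} = \sum_{j=1}^n S_{i_0, j} v_j$. Taking absolute values and bringing them inside the sum yields $|\lambda| \leq \sum_{j=1}^n |S_{i_0, j}|\, |v_j| \leq \sum_{j=1}^n |S_{i_0, j}|$, where the last inequality uses $|v_j| \leq \left\|v\right\|_\infty = 1$.

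Finally, I would invoke the $\alpha$-sparsity hypothesis, which asserts that row $i_0$ of $\BS$ has at most $\alpha n$ nonzero entries, each of magnitude at most $\left\|\BS\right\|_\infty$ (interpreted as the entrywise maximum, consistent with how this symbol is used in Theorem~\ref{thm: rpca}). Combining these bounds gives $\sum_{j} |S_{i_0, j}| \leq \alpha n \left\|\BS\right\|_\infty$, whence $\left\|\BS\right\|_2 = |\lambda| \leq \alpha n \left\|\BS\right\|_\infty$, as claimed. I do not anticipate a real obstacle here; the argument is entirely elementary, and the only care needed is to be consistent about the convention for $\left\|\cdot\right\|_\infty$ (entrywise max versus induced operator norm). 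An equivalent alternative would be to bound $\left\|\BS\right\|_2 \leq \sqrt{\left\|\BS\right\|_{1\to 1}\left\|\BS\right\|_{\infty\to\infty}}$ and note that both induced norms are bounded by $\alpha n \left\|\BS\right\|_\infty$ by sparsity; the symmetric eigenvector route is cleaner and exploits the hypothesis directly.
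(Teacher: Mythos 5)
Your argument is correct and complete: for a real symmetric matrix the spectral norm is attained at a real eigenpair, and reading off the coordinate where the eigenvector attains its sup-norm gives $\left\|\BS\right\|_2 \leq \max_i \sum_j |S_{ij}| \leq \alpha n \left\|\BS\right\|_\infty$, using that each row has at most $\alpha n$ nonzero entries of magnitude at most the entrywise maximum (the convention the paper indeed uses for $\left\|\cdot\right\|_\infty$). Note that the paper itself offers no proof of this lemma; it is imported verbatim from the cited reference \cite{netrapalli2014non}, so there is nothing internal to compare against, but your elementary eigenvector route establishes the bound in full.
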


\begin{lemma}[{\cite[Lemma~5]{netrapalli2014non}}] \label{init:lemma:bound_power_vector_norm_with_incoherence}
Let $\BS\in\mathbb{R}^{n\times n}$ be an $\alpha$-sparse symmetric matrix. Let $\BW\in\mathbb{R}^{n\times r}$ be an orthogonal matrix with $\mu$-incoherence, i.e., $\left\|\BW^T\bm{e}_i\right\|_2\leq\sqrt{\frac{\mu r}{n}}$ for all $i$. Then
\[
\left\|\BW^T\BS^a\bm{e}_i\right\|_2\leq \sqrt{\frac{\mu r}{n}}\left(\alpha n\left\|\BS\right\|_\infty\right)^a
\]
holds for all $i$ and $a\geq 0$.
\end{lemma}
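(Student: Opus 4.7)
The plan is to prove this bound by induction on $a$. The base case $a=0$ is immediate, since $\BS^0=\bm{I}$, so $\|\BW^T\BS^0\bm{e}_i\|_2 = \|\BW^T\bm{e}_i\|_2\le \sqrt{\mu r/n}$ by the incoherence hypothesis on $\BW$, matching the claimed bound (the factor $(\alpha n\|\BS\|_\infty)^0=1$).

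For the inductive step, assume the bound holds for $a-1$ at every standard basis vector, i.e.\ $\|\BW^T\BS^{a-1}\bm{e}_j\|_2\le\sqrt{\mu r/n}(\alpha n\|\BS\|_\infty)^{a-1}$ for all $j\in[n]$. I would write $\BS^a\bm{e}_i = \BS^{a-1}(\BS\bm{e}_i)$ and then expand the inner factor in the standard basis:
\[
\BW^T\BS^a\bm{e}_i \;=\; \sum_{j=1}^{n}(\BS\bm{e}_i)_j\,\BW^T\BS^{a-1}\bm{e}_j \;=\; \sum_{j\in\supp(\BS\bm{e}_i)}(\BS\bm{e}_i)_j\,\BW^T\BS^{a-1}\bm{e}_j.
\]
Restricting the sum to the support of $\BS\bm{e}_i$ is the key structural move: because $\BS$ is $\alpha$-sparse (and $\BS$ is symmetric, so column sparsity equals row sparsity), this support has cardinality at most $\alpha n$, and each entry $(\BS\bm{e}_i)_j$ is bounded in magnitude by $\|\BS\|_\infty$.

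Applying the triangle inequality and the inductive hypothesis term by term then yields
\[
\|\BW^T\BS^a\bm{e}_i\|_2 \;\le\; \sum_{j\in\supp(\BS\bm{e}_i)} |(\BS\bm{e}_i)_j|\,\|\BW^T\BS^{a-1}\bm{e}_j\|_2 \;\le\; (\alpha n)\cdot \|\BS\|_\infty \cdot \sqrt{\tfrac{\mu r}{n}}(\alpha n\|\BS\|_\infty)^{a-1},
\]
which simplifies to $\sqrt{\mu r/n}(\alpha n\|\BS\|_\infty)^a$ and closes the induction.

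I do not anticipate a genuine obstacle in this argument; it is a clean induction. The only subtlety worth flagging is to peel off a single factor of $\BS$ on the \emph{inside} (i.e.\ write $\BS^a\bm{e}_i=\BS^{a-1}(\BS\bm{e}_i)$ rather than $\BS(\BS^{a-1}\bm{e}_i)$), because the sparsity of $\BS$ is an entrywise property of its columns that one exploits directly on $\BS\bm{e}_i$, whereas $\BS^{a-1}\bm{e}_i$ need not be sparse for $a\ge 2$. Once the decomposition is taken in this order, the combination ``number of nonzeros $\le \alpha n$'' together with ``entrywise magnitude $\le\|\BS\|_\infty$'' exactly produces the factor $\alpha n\|\BS\|_\infty$ per induction step.
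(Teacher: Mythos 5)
Your induction is correct and complete: the base case uses only the incoherence hypothesis, and the inductive step correctly combines the column-support bound $|\supp(\BS\bm{e}_i)|\le\alpha n$ with the entrywise bound $\|\BS\|_\infty$ to produce exactly one factor of $\alpha n\|\BS\|_\infty$ per step, and your remark about peeling the factor of $\BS$ off on the inside is the right way to exploit column sparsity. The paper itself gives no proof of this lemma (it is quoted from \cite[Lemma~5]{netrapalli2014non}), and your argument is essentially the standard induction used there, so there is nothing to flag.
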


We are now ready to prove Theorem~\ref{thm:initialization bound}.

\begin{proof}[Proof of Theorem~\ref{thm:initialization bound}]
As discussed above, we only need to prove the theorem under symmetric setting, and it can be generalized to non-symmetric matrices. The proof consists of four parts. The first three parts have been shown in the proof of \cite[Theorem~2]{cai2019accelerated}, we still include them here for the completeness.

\noindent\textbf{Part 1: }
First, note that 
\begin{equation*}
\left\|\BL\right\|_\infty = \max_{ij} |\bm{e}_i^T\BW_{\BL} \BSigma_{\BL} \BV_{\BL}^T \bm{e}_j| \leq \max_{ij} \left\|\BW_{\BL}^T\bm{e}_i\right\|_2\left\|\BSigma_{\BL}\right\|_2\left\|\BV_{\BL}^T\bm{e}_j\right\|_2\leq\frac{\mu r}{n}\sigma_1(\BL)
\end{equation*}
where the last inequality follows from $\mu$-incoherence of $\BL$. 
Since $\eta\geq\frac{\mu r\sigma_1(\BL)}{n\sigma_1(\BD)}$, we get 
\begin{equation} \label{eq:Init:step 1 result}
\left\|\BL\right\|_\infty\leq \eta\sigma_1(\BD) =: \zeta_{0}.
\end{equation}
Considering the entries of $\BS_0$, we notice
\[
[\BS_0]_{ij}=[\mathcal{T}_{\zeta_{0}} (\BS+\BL)]_{ij} =
\begin{cases}
\mathcal{T}_{\zeta_{0}} ([\BS+\BL]_{ij}) & (i,j)\in\supp(\BS) \cr
\mathcal{T}_{\zeta_{0}} ([\BL]_{ij})        & (i,j)\not\in\supp(\BS)  \cr
\end{cases}
\]
since $[\BS]_{ij}=0$ outside of its support. Together with \eqref{eq:Init:step 1 result}, we have $[\BS_0]_{ij}=0$ for all $(i,j)\not\in\supp(\BS)$, which implies $\supp(\BS_0)\subseteq\supp(\BS)$. Furthermore, we have
\begin{align*}
[\BS-\BS_{0}]_{ij} =
\begin{cases}
0 &  \cr
[\BL]_{ij} &   \cr
[\BS]_{ij}         & \cr
\end{cases} \leq
\begin{cases}
0 &  \cr
\left\|\BL\right\|_\infty       &  \cr
\left\|\BL\right\|_\infty +\zeta_{0} & \cr
\end{cases} \leq
\begin{cases}
0 &  (i,j)\not\in \supp(\BS)   \cr
\frac{\mu r}{n} \sigma_1(\BL)       & (i,j)\in \supp(\BS_0)   \cr
\frac{4\mu r}{n}  \sigma_1(\BL) & (i,j)\in \supp(\BS)\setminus\supp(\BS_0) \cr
\end{cases}
\end{align*}
where the last inequality follows from $\eta\leq\frac{3\mu r\sigma_1(\BL)}{n\sigma_1(\BD)}$, which implies $\zeta_{0}\leq \frac{3\mu r}{n}\sigma_1(\BL)$. Overall, we get
\begin{equation}\label{eq:SminusS}
\supp(\BS_{0})\subseteq \supp(\BS) \quad \textnormal{and}\quad \left\|\BS-\BS_{0}\right\|_\infty\leq\frac{4\mu r}{n}\sigma_1(\BL).
\end{equation}
Moreover, this implies that $\BS-\BS_0$ is also $\alpha$-sparse.

\noindent\textbf{Part 2: } 
Since $\BL_0=\mathcal{D}_r(\BS-\BS_{0})$ is the best rank $r$ approximation of $\BD-\BS_{0}$, so we have
\begin{align*}
    \left\|\BL-\BL_0\right\|_2&\leq\left\|\BL-(\BD-\BS_{0})\right\|_2 + \left\|(\BD-\BS_{0})-\BL_0\right\|_2 \cr
           &\leq 2\left\|\BL-(\BD-\BS_{0})\right\|_2\cr
           &= 2\left\|\BL-(\BL+\BS-\BS_{0})\right\|_2\cr
           &= 2\left\|\BS-\BS_{0}\right\|_2 \cr
           &\leq 2 \alpha n  \left\|\BS-\BS_{0}\right\|_\infty
\end{align*}
where the last inequality uses Lemma~\ref{lemma:bound of sparse matrix}. By applying \eqref{eq:SminusS}, we have
\begin{equation}\label{eq:norm:L-L0}
\left\|\BL-\BL_0\right\|_2\leq 8\alpha \mu r\sigma_1(\BL)  .
\end{equation}

\noindent\textbf{Part 3: }
Let $\lambda_i$ denote the $i^{th}$ eigenvalue of $\BD-\BS_{0}$ ordered as $|\lambda_1|\geq|\lambda_2|\geq\cdots\geq|\lambda_n|$. 
Since $\BD-\BS_{0}=\BL+(\BS-\BS_{0})$, by  Weyl's inequality, we have
\begin{equation}   \label{init:eq:eigenvalues bound 0}
\big|\sigma_i(\BL)-|\lambda_i|\big| \leq \left\|\BS-\BS_{0}\right\|_2 \leq \alpha n \left\|\BS-\BS_{0}\right\|_\infty\leq \frac{\sigma_r(\BL)}{8}
\end{equation}
for all $i$, where the last inequality follows from the fact that $\alpha\leq\frac{1}{32\mu r \kappa(\BL)}$. Therefore, we have
\begin{align} 
&\frac{7}{8}\sigma_i(\BL) \leq |\lambda_i| \leq \frac{9}{8}\sigma_i(\BL),\qquad  1\leq i\leq r,  \label{init:eq:eigenvalues bound 1}
\end{align}
and
\begin{align}
&\frac{\left\|\BS-\BS_{0}\right\|_2}{|\lambda_r|}\leq \frac{\frac{\sigma_r(\BL)}{8}}{\frac{7\sigma_r(\BL)}{8}}=\frac{1}{7}.\label{init:eq:eigenvalues bound 2}
\end{align}

\noindent\textbf{Part 4: }
Let $\BD-\BS_{0}=\left[\BW_0, \ddot{\BW}_0 \right]\begin{bmatrix}\bm{\Lambda}  & \bm{0}\\\bm{0} &\ddot{\bm{\Lambda}}\end{bmatrix}\left[\BW_0, \ddot{\BW}_0\right]^T =\BW_0\bm{\Lambda} \BW_0^T+\ddot{\BW}_0\ddot{\bm{\Lambda}}\ddot{\BW}_0^T$ be the eigenvalue decomposition of $\BD-\BS_{0}$, where the eigenvalues are sorted by magnitude and $\bm{\Lambda} $ consist of the $r$ largest ones while $\ddot{\bm{\Lambda}}$ contains the rest. Correspondingly, $\BW_0$ contains the first $r$ eigenvectors, and $\ddot{\BW}_0$ has the rest. 
By the symmetric setting, we have $\BL_0=\mathcal{D}_r(\BD-\BS_{0})=\BW_0\bm{\Lambda} \BW_0^T$.
Denote $\BE=\BS-\BS_{0}$ and $\BW(:,i)$ to be the $i$--th eigenvector of $\BD-\BS_{0}=\BL+\BE$. So we can see 
\begin{equation*}
    (\BL+\BE)\BW(:,i) =  \lambda_i \BW(:,i)
\end{equation*}
for $1\leq i\leq r$. Hence,
\begin{align}  \label{eq:eigenvactor_series}
\BW(:,i) =   \left(I-\frac{\BE}{\lambda_i}\right)^{-1}\frac{\BL}{\lambda_i}\BW(:,i)=
\sum_{j=0}^\infty\left(\frac{\BE}{\lambda_i}\right)^j
\frac{\BL}{\lambda_i}\BW(:,i)
\end{align}
for all $1\leq i\leq r$. Note that the expansion in the last equality is valid since \eqref{init:eq:eigenvalues bound 2} implies $ \frac{\left\|\BE\right\|_2}{|\lambda_i|}<\frac{1}{7}\leq 1$ for all $1\leq i\leq r$. 

We will first prove the \textit{row version} of the inequality.
By applying \eqref{eq:eigenvactor_series}, we get 
\begin{align*}
&~\max_i\left\|(\BL_0-\BL)^T\bm{e}_i\right\|_2 \cr
=&~\max_i\left\|(\BW_0\bm{\Lambda} \BW_0^T -\BL)^T\bm{e}_i\right\|_2 \cr
=&~\max_i\left\|\big(\BL\BW_0\bm{\Lambda} ^{-1}\BW_0^T\BL-\BL  + \sum_{a+b>0} \BE^a\BL\BW_0\bm{\Lambda} ^{-(a+b+1)}\BW_0^T\BL\BE^b \big)^T\bm{e}_i\right\|_2  \cr
\leq&~\max_i\left\|(\BL\BW_0\bm{\Lambda} ^{-1}\BW_0^T\BL-\BL)^T\bm{e}_i\right\|_2  + \sum_{a+b>0} \max_i \left\|\left(\BE^a\BL\BW_0\bm{\Lambda} ^{-(a+b+1)}\BW_0^T\BL\BE^b \right)^T\bm{e}_i \right\|_2 \cr
:=&~ \bm{Y}_0 + \sum_{a+b>0} \bm{Y}_{ab}.
\end{align*}

We will first bound $\bm{Y}_0$. By the incoherence property of $\BL$, i.e., $\max_i \left\|\BW\BW^T\bm{e}_i\right\|_2\leq\sqrt{\frac{\mu r}{n}}$, we have
\begin{align*}
\bm{Y}_0 &= \max_{i} \left\|(\BL\BW_0\bm{\Lambda} ^{-1}\BW_0^T\BL-\BL)^T\bm{e}_i\right\|_2 \cr
&=\max_{i} \left\|(\BL\BW_0\bm{\Lambda} ^{-1}\BW_0^T\BL-\BL)^T\BW\BW^T\bm{e}_i\right\|_2 \cr
&\leq\max_{i} \left\|\BW\BW^T\bm{e}_i\|_2~\|\BL\BW_0\bm{\Lambda} ^{-1}\BW_0^T\BL-\BL\right\|_2 \cr		
&\leq \sqrt{\frac{\mu r}{n}}\left\|\BL\BW_0\bm{\Lambda} ^{-1}\BW_0^T\BL-\BL\right\|_2
\end{align*}
where the second equation follows from the fact $\BL=\BW\BW^T\BL$. 

Note that $\BL=\BW_0\bm{\Lambda} \BW_0^T+\ddot{\BW}_0\ddot{\bm{\Lambda}}\ddot{\BW}_0^T -\BE$, so we get 
\begin{align*}
&~\left\|\BL\BW_0\bm{\Lambda} ^{-1}\BW_0^T\BL-\BL\right\|_2 \cr
=&~ \left\|(\BW_0\bm{\Lambda} \BW_0^T+\ddot{\BW}_0\ddot{\bm{\Lambda}}\ddot{\BW}_0^T -\BE)\BW_0\bm{\Lambda} ^{-1}\BW_0^T(\BW_0\bm{\Lambda} \BW_0^T+\ddot{\BW}_0\ddot{\bm{\Lambda}}\ddot{\BW}_0^T -\BE)-\BL\right\|_2  \cr
=&~\left\|\BW_0\bm{\Lambda} \BW_0^T-\BL-\BW_0\BW_0^T\BE-\BE\BW_0\BW_0^T-\BE\BW_0\bm{\Lambda} ^{-1}\BW_0^T\BE\right\|_2  \cr
\leq&~ \left\|\BE-\ddot{\BW}_0\ddot{\bm{\Lambda}}\ddot{\BW}_0^T\right\|_2 +2\left\|\BE\right\|_2+\frac{\left\|\BE\right\|_2^2}{|\lambda_r|}  \cr
\leq&~ \left\|\ddot{\BW}_0\ddot{\bm{\Lambda}}\ddot{\BW}_0^T\right\|_2 +4\left\|\BE\right\|_2  \cr
\leq&~  |\lambda_{r+1}|+4\left\|\BE\right\|_2 \cr
\leq&~  5\left\|\BE\right\|_2
\end{align*}
where the first and fourth inequality follow from (\ref{init:eq:eigenvalues bound 0}) and (\ref{init:eq:eigenvalues bound 2}), and $|\lambda_{r+1}|\leq\left\|\BE\right\|_2$ since $\sigma_{r+1}(\BL)=0$. Together, we have
\begin{equation}  \label{init:eq:Y0 bound}
\bm{Y}_0\leq 5\sqrt{\frac{\mu r}{n}} \left\|\BE\right\|_2 \leq 5\alpha\sqrt{ \mu r n} \left\|\BE\right\|_\infty
\end{equation}
where the last inequality follows from Lemma~\ref{lemma:bound of sparse matrix}.

Next, we will bound $\bm{Y}_{ab}$. Note that
\begin{align*}
\bm{Y}_{ab}&=\max_{i} \left\|(\BE^a\BL\BW_0\bm{\Lambda} ^{-(a+b+1)}\BW_0^T\BL\BE^b)^T\bm{e}_i\right\|_2  \cr
      &=\max_{i} \left\|\BE^b\BL\BW_0\bm{\Lambda} ^{-(a+b+1)}\BW_0^T\BL(\BW\BW^T\BE^a\bm{e}_i)\right\|_2  \cr
      &\leq \max_{i} \left\|\BW^T\BE^a\bm{e}_i\right\|_2~\left\|\BL\BW_0\bm{\Lambda} ^{-(a+b+1)}\BW_0^T\BL\right\|_2~\left\|\BE\right\|_2^b  \cr
      &\leq \sqrt{\frac{\mu r}{n}}( \alpha n\left\|\BE\right\|_\infty)^{a+b} \left\|\BL\BW_0\bm{\Lambda} ^{-(a+b+1)}\BW_0^T\BL\right\|_2 \cr
      &\leq \alpha \sqrt{\mu r n} \left\|\BE\right\|_\infty \left(\frac{\sigma_r(\BL)}{8}\right)^{a+b-1} \left\|\BL\BW_0\bm{\Lambda} ^{-(a+b+1)}\BW_0^T\BL\right\|_2 
\end{align*}
where the second inequality uses Lemmata~\ref{lemma:bound of sparse matrix} and \ref{init:lemma:bound_power_vector_norm_with_incoherence}. 
Moreover, by applying $\BL=\BW_0\bm{\Lambda} \BW_0^T+\ddot{\BW}_0\ddot{\bm{\Lambda}}\ddot{\BW}_0^T -\BE$, we get
\begin{align*}
&~\left\|\BL\BW_0\bm{\Lambda} ^{-(a+b+1)}\BW_0^T\BL\right\|_2 \cr
=&~ \left\|(\BW_0\bm{\Lambda} \BW_0^T+\ddot{\BW}_0\ddot{\bm{\Lambda}}\ddot{\BW}_0^T -\BE)\BW_0\bm{\Lambda} ^{-(a+b+1)}\BW_0^T(\BW_0\bm{\Lambda} \BW_0^T\right. \cr
 &\quad+\left.\ddot{\BW}_0\ddot{\bm{\Lambda}}\ddot{\BW}_0^T -\BE)\right\|_2        \cr
=&~ \left\|\BW_0\bm{\Lambda} ^{-(a+b-1)}\BW_0^T-\BE\BL\BW_0\bm{\Lambda} ^{-(a+b)}\BW_0^T-\BL\BW_0\bm{\Lambda} ^{-(a+b)}\BW_0^T\BE \right. \cr
&\quad+\left. \BE\BL\BW_0\bm{\Lambda} ^{-(a+b+1)}\BW_0^T\BE\right\|_2 \cr
\leq&~ |\lambda_r|^{-(a+b-1)} + |\lambda_r|^{-(a+b)}\left\|\BE\right\|_2+ |\lambda_r|^{-(a+b)}\left\|\BE\right\|_2+ |\lambda_r|^{-(a+b+1)}\left\|\BE\right\|_2^2 \cr
=&~ |\lambda_r|^{-(a+b-1)}\left( 1+ \frac{2\|\BE\|_2}{|\lambda_r|}+\left(\frac{\|\BE\|_2}{|\lambda_r|}\right)^2 \right)  \cr
=&~ |\lambda_r|^{-(a+b-1)}\left( 1+ \frac{\left\|\BE\right\|_2}{|\lambda_r|} \right)^2  \cr
\leq&~ 2|\lambda_r|^{-(a+b-1)} \cr
\leq&~ 2\left(\frac{7}{8}\sigma_r(\BL)\right)^{-(a+b-1)}
\end{align*}
where the second inequality follows from \eqref{init:eq:eigenvalues bound 2} and the last inequality follows from \eqref{init:eq:eigenvalues bound 1}. Hence, 
\begin{align} \label{init:eq:sum Y_ab bound}
\sum_{a+b>0}\bm{Y}_{ab}&\leq  \sum_{a+b>0} 2\alpha \sqrt{\mu r n} \left\|\BE\right\|_\infty \left(\frac{\frac{1}{8}\sigma_r^L}{\frac{7}{8}\sigma_r^L }\right)^{a+b-1}  \cr
                  &\leq 2\alpha \sqrt{\mu r n} \left\|\BE\right\|_\infty  \sum_{a+b>0} \left(  \frac{1}{7}\right)^{a+b-1}\cr
                  &\leq 2\alpha \sqrt{\mu r n}\left\|\BE\right\|_\infty  \left(  \frac{1}{1-\frac{1}{7}}\right)^2\cr
                  &\leq 3\alpha \sqrt{\mu r n} \left\|\BE\right\|_\infty.  
\end{align}

Combining \eqref{init:eq:Y0 bound} and \eqref{init:eq:sum Y_ab bound}, we have the \textit{row version} of the inequality:
\begin{align*} \label{init:eq: L-L0 inf norm}
\max_i\left\|(\BL_0-\BL)^T\bm{e}_i\right\|_2 &\leq \bm{Y}_0 + \sum_{a+b>0} \bm{Y}_{ab}  \cr
                     &\leq 5\alpha \sqrt{\mu r n} \left\|\BE\right\|_\infty + 3\alpha \sqrt{\mu r n} \left\|\BE\right\|_\infty \cr 
                     &\leq \frac{32\alpha \mu^{1.5} r^{1.5}}{\sqrt{n}}\sigma_{\max}(\BL)
\end{align*}
where the last step uses \eqref{eq:SminusS}. 

The \textit{column version} of the inequality, i.e.,
\[
\max_j\left\|(\BL_0-\BL)\bm{e}_j\right\|_2 \leq \frac{32\alpha \mu^{1.5} r^{1.5}}{\sqrt{n}}\sigma_{\max}(\BL), 
\]
can be proved similarly. 
This finishes the proof.
\end{proof}

\end{document}